\definecolor{darkred}{RGB}{150,0,0}
\definecolor{darkgreen}{RGB}{0,150,0}
\definecolor{darkblue}{RGB}{0,0,200}
\newtheorem{theorem}{Theorem}
\newtheorem{fact}{Fact}
\newtheorem{assumption}{Assumption}
\newtheorem{lemma}{Lemma}
\newtheorem{corollary}{Corollary}
\newtheorem{proposition}{Proposition}
\newcommand{\beq}{\begin{equation}}
\newcommand{\ba}{\begin{align}}
\newcommand{\ea}{\end{align}}
\newcommand{\eeq}{\end{equation}}
\def \endprf{\hfill {\vrule height6pt width6pt depth0pt}\medskip}
\newenvironment{proof}{\noindent {\bf Proof.} }{\endprf\par}
\newenvironment{proofof}[1]{\noindent {\bf Proof of {#1}.} }{\endprf\par}
\newcommand{\vct}[1]{\bm{#1}}
\newcommand{\tr}[1]{\texttt{tr}\left(#1\right)}
\newcommand{\mtx}[1]{\bm{#1}}
\newcommand{\red}{\textcolor{darkred}}
\newcommand{\redp}[1]{\textcolor{darkred}{[#1]}}
\newcommand{\eps}{\varepsilon}
\newcommand{\beps}{\boldsymbol{\eps}}
\newcommand{\bxi}{\boldsymbol{\xi}}
\newcommand{\hb}{\vct{h}}
\newcommand{\st}{\star}
\newcommand{\la}{\lambda}
\newcommand{\A}{{\mtx{A}}}
\newcommand{\B}{{\mtx{B}}}
\newcommand{\Bb}{{\mtx{\bar{B}}}}
\newcommand{\Ub}{{\mtx{U}}}
\newcommand{\Sb}{{{\mtx{S}}}}
\newcommand{\Sbb}{{{\bar{\mtx{S}}}}}
\newcommand{\Lc}{{\cal{L}}}
\newcommand{\Lcb}{\bar{\cal{L}}}
\newcommand{\Lcs}{{\cal{L}_\st}}
\newcommand{\Nc}{{\cal{N}}}
\newcommand{\Cb}{{\mtx{C}}}
\newcommand{\Eb}{{\mtx{E}}}
\newcommand{\La}{{\boldsymbol{\Lambda}}}
\newcommand{\Db}{{\mtx{D}}}
\newcommand{\onebb}{{\mathbf{1}}}
\newcommand{\zerbb}{{\mathbf{0}}}
\newcommand{\Iden}{{\mtx{I}}}
\newcommand{\z}{{\vct{z}}}
\newcommand{\bt}{{\boldsymbol{\beta}}}
\newcommand{\btb}{\bar{\boldsymbol{\beta}}}
\newcommand{\bPi}{{\boldsymbol{\Pi}}}
\newcommand{\Nn}{\mathcal{N}}
\newcommand{\vb}{\vct{v}}
\newcommand{\fb}{\vct{f}}
\newcommand{\Ic}{{\mathcal{I}}}
\newcommand{\w}{\vct{w}}
\newcommand{\Wlr}{\vct{W}_{\text{lora}}}
\newcommand{\Wbr}{\vct{\bar{W}}_{\text{lora}}}
\newcommand{\wb}{\bar{\vct{w}}}
\newcommand{\s}{\vct{s}}
\newcommand{\ab}{\vct{a}}
\newcommand{\Lb}{\vct{L}}
\newcommand{\ub}{{\vct{u}}}
\newcommand{\g}{{\vct{g}}}
\newcommand{\Z}{\mtx{Z}}
\newcommand{\Zm}{\mtx{Z}_0}
\newcommand{\bSi}{\boldsymbol{\Sigma}}
\newcommand{\bSb}{\boldsymbol{\bar{\Sigma}}}
\newcommand{\bSp}{\boldsymbol{\tilde{\Sigma}}}
\newcommand{\x}{\vct{x}}
\newcommand{\y}{\vct{y}}
\newcommand{\W}{\mtx{W}}
\newcommand{\WK}{\mtx{W}_k}
\newcommand{\WQ}{\mtx{W}_q}
\newcommand{\WV}{\mtx{W}_v}
\newcommand{\WLR}{\mtx{W}_{lora}}
\newcommand{\WKh}{\hat{\mtx{W}}_k}
\newcommand{\WQh}{\hat{\mtx{W}}_q}
\newcommand{\WVh}{\hat{\mtx{W}}_v}
\newcommand{\Wz}{\mtx{W}_0}
\newcommand{\Wc}{{\cal{W}}}
\newcommand{\X}{{\mtx{X}}}
\newcommand{\Vb}{{\mtx{V}}}
\newcommand{\R}{\mathbb{R}}
\newcommand{\E}{\operatorname{\mathbb{E}}}
\newcommand{\nn}{\nonumber}
\newcommand{\diag}[1]{\text{diag}\left(#1\right)}
\newcommand{\tn}[1]{\left\|{#1}\right\|_{\ell_2}}
\newcommand{\tf}[1]{\left\|{#1}\right\|_{F}}
\newcommand{\order}[1]{{\cal{O}}\left(#1\right)}
\newcommand{\bgl}{{~\big |~}}
\newcommand{\Ws}{\W_\star}
\newcommand{\lab}{\bar{\la}}
\newcommand{\Wt}{\tilde{\mtx{W}}}
\newcommand{\Wb}{\mtx{\bar{W}}}
\newcommand{\Xb}{\mtx{\bar{X}}}
\newcommand{\xb}{\vct{\bar{x}}}
\newcommand{\bom}{\boldsymbol{\omega}}
\newcommand{\gd}{{\texttt{PGD}}}
\newcommand{\wpgd}{{\texttt{WPGD}}}
\newcommand{\att}{{\texttt{ATT}}}
\newcommand{\ssm}{{\texttt{SSM}}}
\newcommand{\trace}[1]{\texttt{tr}\left(#1\right)}
\newcommand{\rank}[1]{\text{rank}\left(#1\right)}
\newcommand\scalemath[2]{\scalebox{#1}{\mbox{\ensuremath{\displaystyle #2}}}}
\newcommand{\todoasr}[1]{} 
\newcommand{\ankit}[1]{}
\newcommand{\asrnote}[1]{}
\newcommand{\yl}[1]{}
\title{Fine-grained Analysis of In-context Linear Estimation:\\Data, Architecture, and Beyond}
\author{Yingcong Li\\University of Michigan\\\texttt{yingcong@umich.edu}\And Ankit Singh Rawat\\Google Research NYC\\\texttt{ankitsrawat@google.com}\And Samet Oymak\\University of Michigan\\\texttt{oymak@umich.edu}}
\begin{document}

\doparttoc 
\faketableofcontents 

\maketitle

\begin{abstract}
Recent research has shown that Transformers with linear attention are capable of in-context learning (ICL) by implementing a linear estimator through gradient descent steps. However, the existing results on the optimization landscape apply under stylized settings where task and feature vectors are assumed to be IID and the attention weights are fully parameterized. In this work, we develop a stronger characterization of the optimization and generalization landscape of ICL through contributions on architectures, low-rank parameterization, and correlated designs: (1) We study the landscape of 1-layer linear attention and 1-layer H3, a state-space model. Under a suitable correlated design assumption, we prove that both implement 1-step preconditioned gradient descent. We show that thanks to its native convolution filters, H3 also has the advantage of implementing sample weighting and outperforming linear attention in suitable settings. (2) By studying correlated designs, we provide new risk bounds for retrieval augmented generation (RAG) and task-feature alignment which reveal how ICL sample complexity benefits from distributional alignment. (3) We derive the optimal risk for low-rank parameterized attention weights in terms of covariance spectrum. Through this, we also shed light on how LoRA can adapt to a new distribution by capturing the shift between task covariances. Experimental results corroborate our theoretical findings. Overall, this work explores the optimization and risk landscape of ICL in practically meaningful settings and contributes to a more thorough understanding of its mechanics.
\end{abstract}

\begin{figure}[htbp]
\vspace{-9pt}
\includegraphics[width=1\linewidth,trim={-0.9cm 0 0 0},clip]{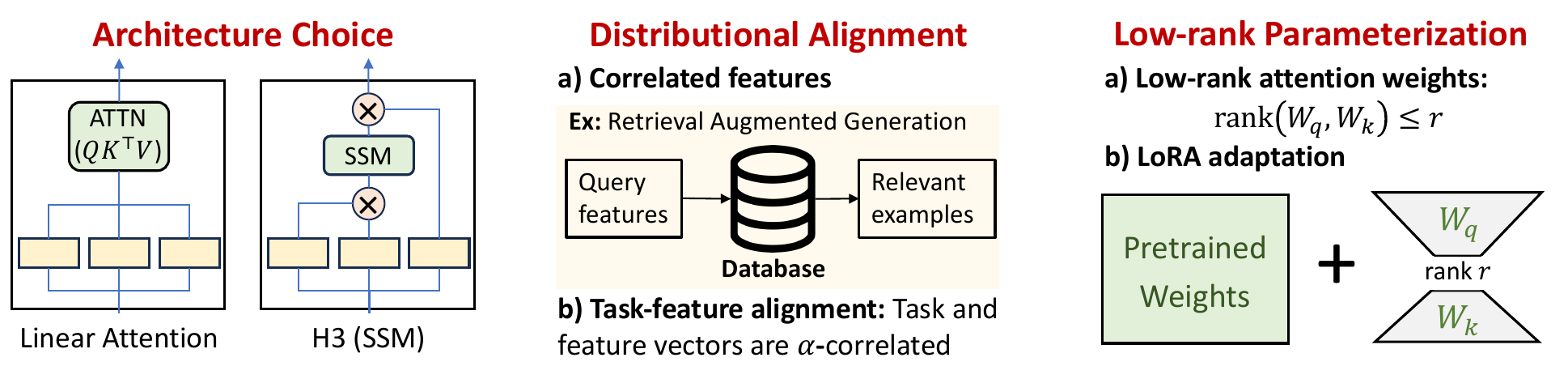}
\begin{subfigure}{0.325\linewidth}    
\centering
\includegraphics[height=0.7\linewidth]{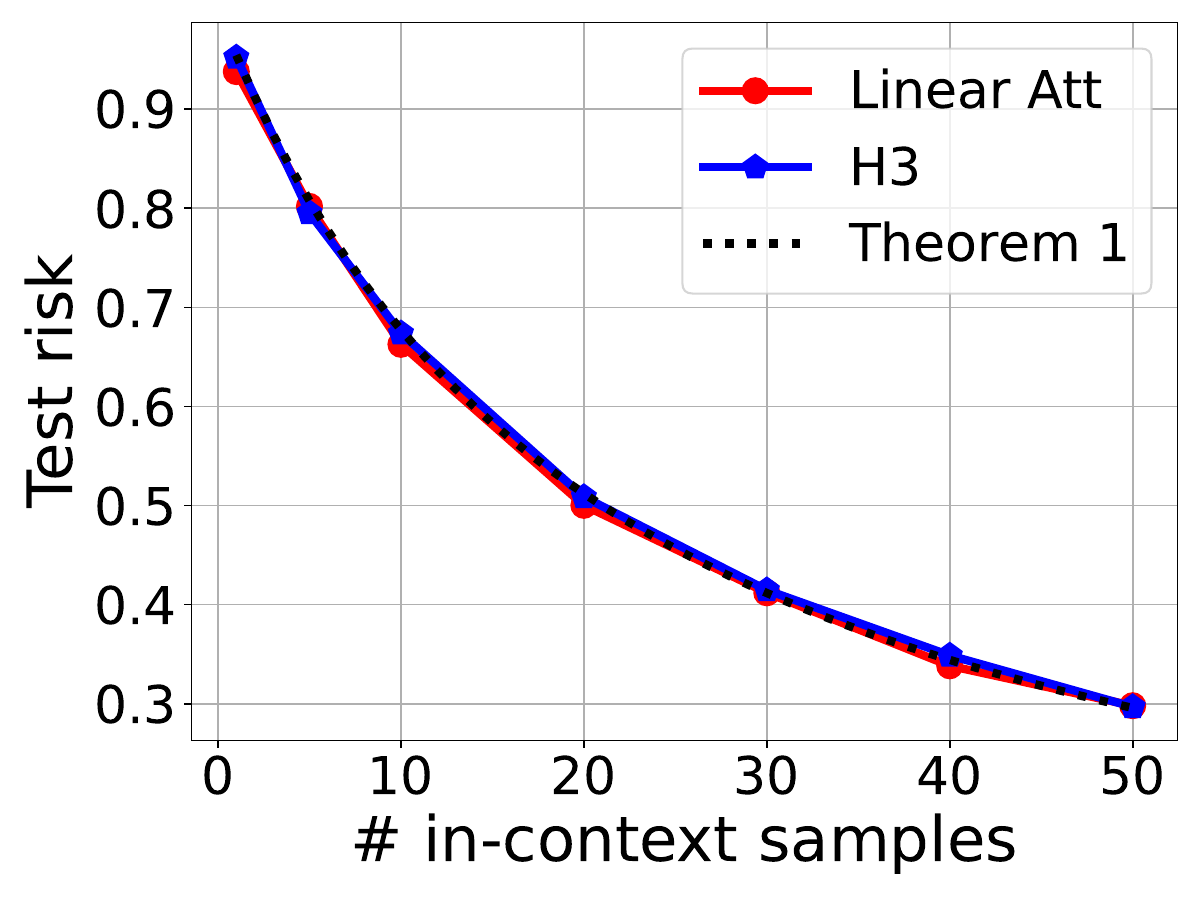}
\vspace{-7pt}
\caption{Linear attention$=$H3}\label{intro fig:iid}
\end{subfigure}
\begin{subfigure}{0.325\linewidth}    
    \centering
    \includegraphics[height=.7\columnwidth]{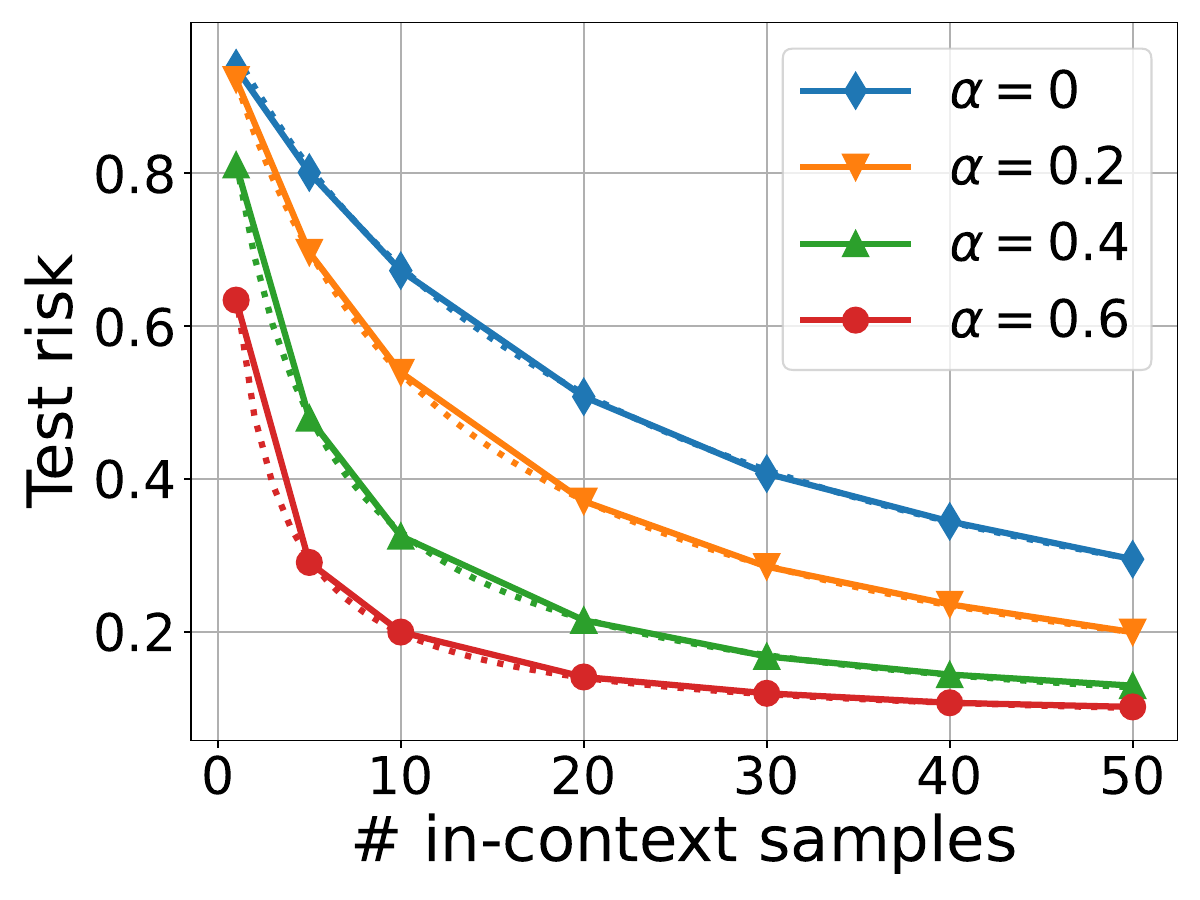}
\vspace{-7pt}
\caption{RAG with $\alpha$ correlation}\label{intro fig:rag loss}
\end{subfigure}
\begin{subfigure}{0.325\linewidth}    
\centering
\includegraphics[height=.7\columnwidth]{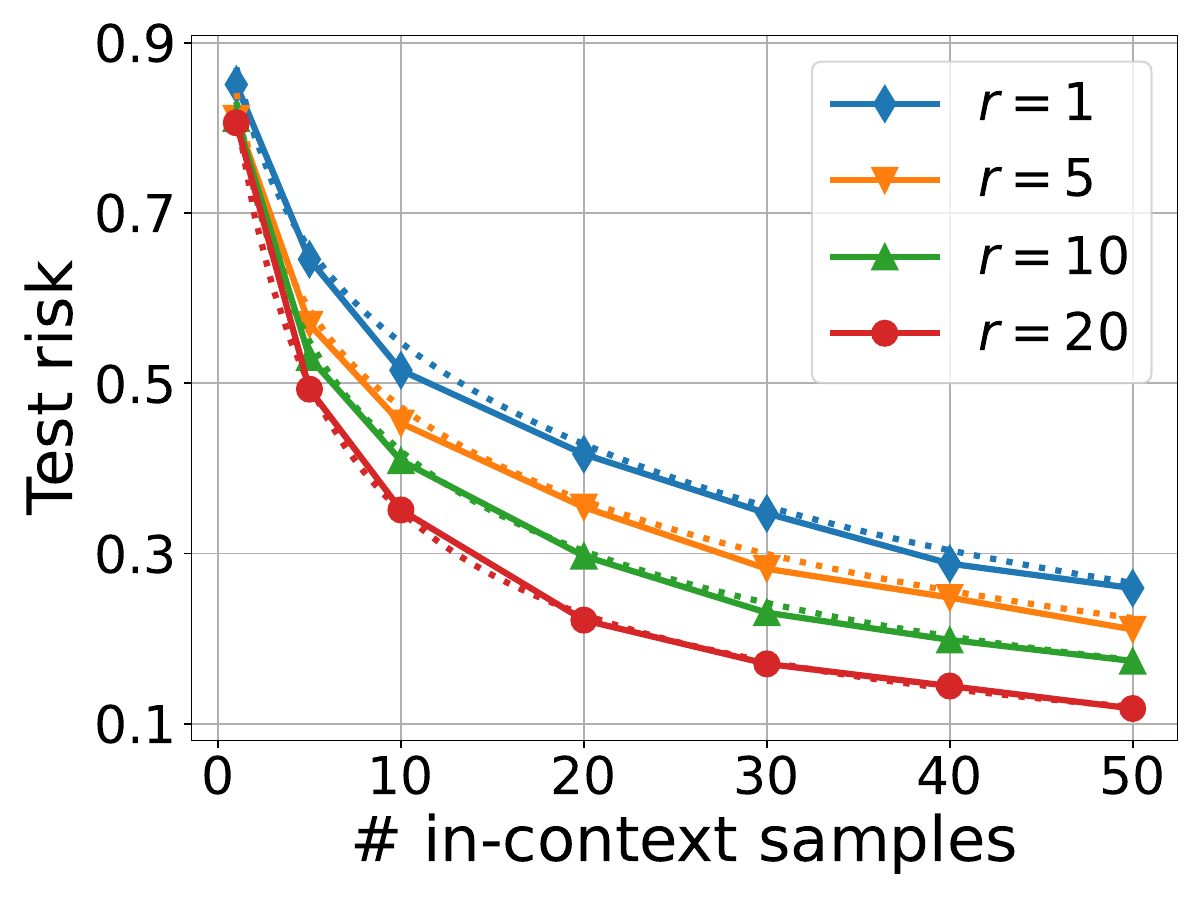}
\vspace{-7pt}
\caption{LoRA}\label{intro fig:lora}
\end{subfigure}
\caption{We investigate the optimization landscape of in-context learning from the lens of architecture choice, the role of distributional alignment, and low-rank parameterization. The empirical performance (solid curves) are aligned with our theoretical results (dotted curves) from Section~\ref{sec:main}. More experimental details and discussion are deferred to Section~\ref{sec exp}.}\label{fig:main}
\vspace{-25pt}
\end{figure}

\section{Introduction}

Modern language models exhibit the remarkable ability to learn novel tasks or solve complex problems from the demonstrations provided within their context window~\citep{brown2020language,team2023gemini,gpt4_techreport,touvron2023_llama}. Such \textit{in-context learning} (ICL) offers a novel and effective alternative to traditional fine-tuning techniques. It enables successful prediction across a wide range of tasks simply through a forward pass, eliminating the need for task-specific model weight updates.
Since its introduction, ICL capability has become an important feature of LLM with its applications spanning
retrieval-augmented generation~\citep{lewis2020retrieval}, and reasoning via advanced prompting techniques, such as chain-of-thought~\citep{wei2022chain}. While ICL already exhibits considerable benefits with a small number of demonstrations, i.e., few-shot data, there is a growing interest in extending its benefits to the many-shot settings, potentially realizing even more pronounced benefits~\citep{agarwal2024many}.

ICL ability also presents an important research avenue to develop stronger theoretical and mechanistic understanding of large language models. To this aim, there has been significant recent interest in demystifying ICL through the lens of function approximation \citep{liu2023transformers},\asrnote{@Samet: Not sure if you had any specific reference in mind for this?} Bayesian inference \citep{muller2021transformers,xie2022an, han2023context}, and learning and optimization theory \citep{ahn2024transformers, mahankali2024one, zhang2024trained, duraisamy2024finite}. The latter is concerned with understanding the optimization landscape of ICL, which is also crucial for understanding the generalization properties of the model. 
A notable result in this direction is the observation that linear attention models~\citep{schlag2021linear,von2023transformers,ahn2024transformers} implement \textit{preconditioned gradient descent} (PGD) during ICL \citep{ahn2024transformers,mahdavi2024revisiting}. While this line of works provide a fresh perspective to ICL, the existing studies do not address many questions arising from real-life applications nor provide guiding principles for various ICL setups motivated by practical considerations.


To this aim, we revisit the theoretical exploration of ICL with linear data model 
where we feed an in-context prompt containing $n$ examples $(\x_i,y_i = \x_i^\top\bt+\xi_i)_{i=1}^{n}\subset\R^{d}\times \R$ and a test instance or query $\x_{n+1} \in \R^d$ to the model, with $d$ being the feature dimension, $\bt\in\R^d$ being the task weight vector, and $(\xi_i)_{i=1}^n$ denoting the noise in individual labels. Given the in-context prompt, the model is tasked to predict $\hat{y}_{n+1}$ -- an estimate for $y_{n+1}=\x_{n+1}^\top\bt+\xi_{n+1}$.
We aim to provide answers to the following questions by exploring the loss landscape of ICL:
\begin{enumerate}[label=(Q\arabic*),leftmargin=27pt]
    \item Is the ability to implement gradient-based ICL unique to (linear) attention? Can alternative sequence models implement richer algorithms beyond PGD?
    \item In language modeling, ICL often works well with few-shot samples whereas standard linear estimation typically requires $\order{d}$ samples. How can we reconcile this discrepancy between classical learning and ICL? 
    \item To our knowledge, existing works assume linear-attention is fully parameterized, i.e.,~key and query projections $\W_k,\W_q\in\R^{d\times d}$. What happens when they are low-rank? What happens when there is distribution shift {between training and test in-context prompts} and we use LoRA~\citep{hu2022lora} for adaptation?
\end{enumerate}

In this work, we conduct a careful investigation of these questions. Specifically, we focus on ICL with 1-layer models and make the following contributions:
\begin{enumerate}[label=(A\arabic*),leftmargin=27pt]
\item We jointly investigate the landscape of linear attention and H3~\citep{fu2023hungry}, a widely popular state-space model (SSM). We prove that under correlated design, both models implement 1-step PGD (c.f. Proposition~\ref{lemma:eqv}) and the alignments in Fig.~\ref{intro fig:iid} verify that where the dotted curve represents the theoretical PGD result derived from Theorem~\ref{thm:independent}. Our analysis reveals that the gating mechanism in H3 imitates attention. We also empirically show that H3 has the advantage of implementing sample-weighting which allows it to outperform linear attention in temporally-heterogeneous problem settings in Section~\ref{sec exp} and Figure~\ref{fig:add exp}.
\item Proposition~\ref{lemma:eqv} allows for task and features to be correlated to each other as long as odd moments are zero. Through this, we can assess the impact of distributional alignment on the sample complexity of ICL. {Specifically, we characterize the performance of \emph{Retrieval Augmented Generation} (RAG) (c.f.~Theorem~\ref{thm:rag} and Fig.~\ref{intro fig:rag loss}) and \emph{Task-Feature Alignment} (c.f.~Theorem~\ref{thm:feature}), where the in-context examples are $\alpha$-correlated with either the query or the task vector. For both settings, we prove that alignment amplifies the \emph{effective sample size} of ICL by a factor of $\alpha^2d+1$, highlighting that aligned data are crucial for the success of ICL in few-shot settings.}
\item We show that, under low-rank parameterization, optimal attention-weights still implements PGD according to the truncated eigenspectrum of the fused task-feature covariance (see Section~\ref{sec:low rank}). We similarly derive risk upper bounds for LoRA adaptation (c.f.~Eq.~\eqref{formula lora} and Fig.~\ref{intro fig:lora}), and show that, these bounds accurately predict the empirical performance.
\end{enumerate}

\section{Problem Setup and Preliminaries}\label{sec:setup}
We begin with a short note on notation. 
Let bold lowercase and uppercase letters (e.g., $\x$ and $\X$) represent vectors and matrices, respectively. The symbol $\odot$ is defined as the element-wise (Hadamard) product, and $\ast$ denotes the convolution operator. $\onebb_d$ and $\zerbb_d$ denote the $d$-dimensional all-ones and all-zeros vectors, respectively; and $\Iden_d$ denotes the identity matrix of dimension $d\times d$. Additionally, let $\tr{\W}$ denote the trace of the square matrix $\W$.

As mentioned earlier, we study the optimization landscapes of 1-layer linear attention~\citep{katharopoulos2020transformers,schlag2021linear} and H3~\citep{fu2023hungry} models when training with prompts containing in-context data following a linear model.  We construct the input in-context prompt similar to~~\citet{ahn2024transformers,mahankali2024one,zhang2024trained} as follows.

\noindent\textbf{Linear data distribution.} Let $(\x,y)\in\R^{d}\times\R$ be a (feature, label) pair generated by a $d$-dimensional linear model parameterized by $\bt\in\R^{d}$, i.e., $y=\x^\top\bt+\xi$, where $\x$ and $\bt$ are feature and task vectors, and $\xi$ is the label noise.
Given demonstrations $(\x_i,y_i)_{i=1}^{n+1}$ sampled from a single $\bt$, define the input in-context prompt 
\begin{align}
\Z=[\z_1~\dots~\z_n~\z_{n+1}]^\top=\begin{bmatrix}
    \x_1&\dots&\x_n&\x_{n+1}\\
    y_1&\dots&y_n&0
\end{bmatrix}^\top \in \R^{(n+1) \times (d+1)}.\label{def Z}
\end{align}
Here, we set $\z_{i}=\begin{bmatrix}
    \x_{i}\\
    y_i
\end{bmatrix}$ for $i\leq n$ and the last/query token $\z_{n+1}=\begin{bmatrix}
    \x_{n+1}\\
    0
\end{bmatrix}$. Then, given $\Z$, the goal of the model is to predict the correct label $y_{n+1}$ corresponding to $\x_{n+1}$.  For cleaner notation, when it is clear from context, we drop the subscript $n+1$ and set $\x=\x_{n+1},~\z=\z_{n+1}$. Different from the previous work~\citep{ahn2024transformers,mahankali2024one,zhang2024trained,mahdavi2024revisiting} where $(\x_i)_{i=1}^{n+1}$ and $\bt$ are assumed to be independent, our analysis focuses on a more general linear setting that captures the dependency between $(\x_i)_{i=1}^{n+1}$ and $\bt$. 

\noindent\textbf{Model architectures.} To start with, we first review the architectures of both Transformer and state-space model (SSM). Similar to the previous work~\citep{von2023transformers, ahn2024transformers,mahankali2024one,zhang2024trained}\asrnote{Please verify the citation.} and to simplify the model structure, we focus on single-layer models and omit the nonlinearity, e.g., softmax operation and MLP activation,  from the Transformer. Given the input prompt $\Z\in\R^{(n+1)\times (d+1)}$ 
in \eqref{def Z}, which can be treated as a sequence of $(d+1)$-dimensional tokens, the single-layer linear attention $\att$ and H3-like single-layer SSM $\ssm$ are denoted by 
\begin{subequations}\label{att ssm}
\begin{align}
&\att(\Z)=(\Z\WQ\WK^\top\Z^\top)\Z\WV\label{att}\\
&\ssm(\Z)=\left((\Z\WQ)\odot((\Z\WK\odot\Z\WV) \ast\fb)\right)\label{ssm}
\end{align}
\end{subequations}
where $\WK,~\WQ,~\WV\in\R^{(d+1)\times(d+1)}$ denote the key, query and value weight matrices, respectively. In~\eqref{ssm}, the parameter $\fb\in\R^{n+1}$ is a 1-D convolutional filter that mixes tokens. The Hadamard product $\odot$ is the gating mechanism \citep{dauphin2017language} between key and query channels, which is crucial for attention-like feature creation. Thus, \eqref{ssm} is more generally a gated-convolution layer. For $\fb$ only, we use indexing $\fb=[f_0~\dots~f_n]^\top\in\R^{n+1}$ and given any vector $\ab$, denote convolution output $(\ab\ast\fb)_i=\sum_{j=1}^{i} f_{i-j}a_j$. Note that our notation slightly differs from the original H3 model \citep{fu2023hungry} in two ways:
\begin{enumerate}[leftmargin=22pt]
    \item SSMs provide efficient parameterization of $\fb$ which would otherwise grow with sequence length. In essence, H3 utilizes a linear state-space model $\s_i=\A\s_{i-1}+\B u_i$ and $y_i=\Cb\s_i$ with parameters {$(\A\in\R^{d\times d},\B\in\R^{d\times 1},\Cb\in\R^{1\times d})$} from which the filter $\fb$ is obtained via the impulse response $f_i=\Cb\A^i\B$ for $i\geq 0$. Here $d$ is the state dimension and, in practice, $\A$ is chosen to be diagonal. Observe that, setting $d=1$ and $\A=\rho,\Cb=\B=1$, SSM reduces to the exponential smoothing $f_i=\rho^i$ for $i\geq 0$. Thus, H3 also captures the all-ones filter as a special instance. {As we show in Proposition~\ref{lemma:eqv}, this simple filter is optimal under independent data model and exactly imitates linear attention.} Note that, utilizing a filter $\fb$ as in \eqref{ssm} is strictly more expressive than the SSM as it captures all possible impulse responses. 
    \item H3 also applies a shift SSM to the key embeddings to enable the retrieval of the local context around associative recall hits. We opted not to incorporate this shift operator in our model. This is because unless the features of the neighboring tokens are correlated (which is not the case for the typical independent data model), the entry-wise products between values and shifted keys will have zero mean and be redundant for the final prediction.
\end{enumerate}
{We note that we conduct all empirical evaluations with the original H3 model, which displays exact agreement with our theory formalized for \eqref{obj ssm}, further validating our modeling choice.}


\subsection{In-context Linear Estimation}

We will next study the algorithms that can be implemented by the single-layer attention and state-space models. {Through this, we will show that training $\att$ and $\ssm$ with linear ICL data is equivalent to the prediction obtained from one step of optimally-\textit{preconditioned gradient descent} (PGD) and \textit{sample-weighted preconditioned gradient descent} (WPGD), respectively. We will further show that under mild assumption, the optimal sample weighting for $\ssm$ (e.g., $\fb$) is an all-ones vector and therefore, establishing the equivalence among PGD, $\att$, and $\ssm$.}

\noindent \textbf{Background: 1-step gradient descent.} Consider minimizing  squared loss and solving linear regression using one step of PGD and WPGD. Given $n$ samples $(\x_i,y_i)_{i=1}^n$, define 
$$\X=[\x_1~\cdots~\x_n]^\top\in\R^{n\times d}\quad\text{and}\quad \y=[y_1~\cdots~y_n]^\top\in\R^n.$$
Starting from $\bt_0=\zerbb_d$ and letting $\eta=1/2$ be the step size, a single-step GD preconditioned with weights $\W$ returns prediction
\begin{align}
\hat y=\x^\top\W\X^\top\y:=g_{\gd}(\Z)\label{gd},
\end{align}
and a single-step \textit{sample-weighted} GD given weights $\bom\in\R^{n}$ and $\W\in\R^{d\times d}$ returns prediction
\begin{align}
\hat y=\x^\top\W\X^\top(\bom\odot\y):=g_{\wpgd}(\Z),
\label{wpgd}
\end{align}
where $\Z$ is defined in \eqref{def Z} consisting of $\X,\y$ and $\x$. Our goal is to find the optimal $\W$, as well as $\bom$ in \eqref{wpgd} that minimize the population risks defined as follows. 
\begin{subequations}\label{obj gd wpgd}
\begin{align}
\min_{\W}\Lc_{\gd}(\Wc)\quad&\text{where}\quad\Lc_{\gd}(\Wc)=
\E\left[(y-g_{\gd}(\Z))^2\right],\label{obj gd}
\\\min_{\W,\bom}\Lc_{\wpgd}(\Wc)\quad&\text{where}\quad\Lc_{\wpgd}(\Wc)=
\E\left[(y-g_{\wpgd}(\Z))^2\right].\label{obj wpgd}
\end{align}
\end{subequations}
Here, the expectation is over the randomness in $(\x_i,\xi_i)_{i=1}^{n+1}$ and $\bt$, and we use $\Wc$ to represent the set of corresponding trainable parameters. The search spaces for $\bom$ and $\W$ are $\R^n$ and $\R^{d\times d}$, respectively.

As per \eqref{att ssm}, 
given input prompt $\Z\in\R^{(n+1)\times (d+1)}$, either of the underlying models outputs a $(n+1)$-length sequence. Note that the label for the query $\x=\x_{n+1}$ is excluded from the prompt $\Z$. Similar to \citet{ahn2024transformers,mahankali2024one}, we consider a training objective with a causal mask to ensure inputs cannot attend to their own labels and training can be parallelized. Let $\Zm=[\z_1~\dots~\z_n~0]^\top$ be the features 
post-causal masking at time/index $n+1$. Given weights $\WK,\WQ,\WV$ and the filter $\fb$ for $\ssm$, predictions at the query token $\z=\begin{bmatrix}
    \x\\0
\end{bmatrix}$ take the following forms following sequence-to-sequence mappings in \eqref{att ssm}:
\begin{align*}
&g_{\att}(\Z)=(\z^\top\WQ\WK^\top\Zm^\top)\Zm\WV\vb,\\
&g_{\ssm}(\Z)=\left((\z^\top\WQ)^\top\odot ((\Zm\WK\odot\Zm\WV) \ast\fb)_{n+1}\right)\vb,
\end{align*}
where $\vb\in\R^{d+1}$ is the linear prediction head and $((\Zm\WK\odot\Zm\WV) \ast\fb)_{n+1}$ returns the last row of the convolution output. Note that SSM can implement the mask by setting $f_0=0$. 
Now consider the meta learning setting 
and select loss function to be the squared loss, same as in \eqref{obj gd wpgd}. Thus, the objectives for both models take the following forms.
\begin{subequations}\label{obj att ssm}
\begin{align}
\min_{\WK,\WQ,\WV,\vb}\Lc_{\att}(\Wc)\quad&\text{where}\quad\Lc_{\att}(\Wc)=
\E\left[(y-g_{\att}(\Z))^2\right]\label{obj att},\\
\min_{\WK,\WQ,\WV,\vb,\fb}\Lc_{\ssm}(\Wc)\quad&\text{where}\quad\Lc_{\ssm}(\Wc)=
\E\left[(y-g_{\ssm}(\Z))^2\right].\label{obj ssm}
\end{align}
\end{subequations}
Here, similarly, the expectation subsumes the randomness of $(\x_i,\xi_i)_{i=1}^{n+1}$ and $\bt$ and $\Wc$ represents the set of trainable parameters. The search space for matrices $\WK$, $\WQ$, $\WV$ is $\R^{(d+1)\times(d+1)}$, for head $\vb$ is $\R^{d+1}$, and for $\fb$ is $\R^{n+1}$.

Note that for all the optimization methods (c.f.~\eqref{obj gd wpgd}, \eqref{obj att ssm}), to simplify the analysis, we train the models without capturing additional bias terms. Therefore, in the following, we introduce the centralized data assumptions such that the models are trained to make unbiased predictions.

To begin with, 
a cross moment of random variables is defined as the expectation of a monomial of these variables, with the order of the cross moment being the same as order of the monomial. For example, $\E[\x^\top\W\bt]$ is a sum of cross-moments of order 2. Then, it motivates the following data assumptions. 
\begin{assumption}\label{assum:odd x beta}
    All cross moments of the entries of $(\x_i)_{i=1}^{n+1}$ and $\bt$ with odd orders are zero. 
\end{assumption}


\begin{assumption}\label{assume:noise}
    The label noise $(\xi_i)_{i=1}^{n+1}$ are independent of $(\x_i)_{i=1}^{n+1}$ and $\bt$, and their cross moments with odd orders are zero. 
\end{assumption}
Note that compared to \cite{ahn2024transformers,mahankali2024one,zhang2024trained}, Assumption~\ref{assum:odd x beta} is more general which also subsumes the dependent distribution settings. 
In this work, we consider the following three linear models (omitting noise) satisfying Assumption~\ref{assum:odd x beta}. Let $\bSi_{\bt},\bSi_{\x}\in\R^{d\times d}$ represent the task and feature covariance matrices for independent data, and let $0\leq\alpha\leq1$ be the correlation level when considering data dependency. More specific discussions are deferred to Section~\ref{sec:main}.
 %
\begin{enumerate}[label=$\bullet$,leftmargin=20pt]
    \item Independent task and data: $\bt\sim\Nc(0,\bSi_{\bt}),~\x_i\sim\Nc(0,\bSi_{\x}),~~\text{for all}~~1\leq i\leq n+1$. 
\item Retrieval augmented generation: $\bt,\x\sim\Nc(0,\Iden_d),~ \x_i{\bgl\x}\sim\Nc(\alpha\x,(1-\alpha^2)\Iden_d),~~\text{for all}~~1\leq i\leq n$.
\item Task-feature alignment: $\bt\sim\Nc(0,\Iden_d),~\x_i{\bgl \bt}\sim\Nc(\alpha\bt,\Iden_d),~~\text{for all}~~1\leq i\leq n+1.$
\end{enumerate}

Next, we introduce the following result which establishes the equivalence among optimizing 1-layer linear attention (c.f.~\eqref{obj att}), 1-layer H3 (c.f.~\eqref{obj ssm}), and 1-step gradient descent (c.f. \eqref{obj gd wpgd}).

{
\begin{proposition}\label{lemma:eqv}Suppose Assumptions~\ref{assum:odd x beta} and \ref{assume:noise} hold. Consider the objectives as defined in \eqref{obj gd  wpgd} and \eqref{obj att ssm}, and let $\Lc_{\gd}^\st,~\Lc_{\wpgd}^\st,~\Lc^\st_\att$, and $\Lc^\st_{\ssm}$ be their optimal risks, respectively. Then, 
\[
\Lc^\st_{\gd}=\Lc_{\att}^\st
\quad\text{and}\quad
\Lc^\st_{\wpgd}=\Lc_{\ssm}^\st.
\]
{Additionally, if the examples $(\x_i,y_i)_{i=1}^n$ follow the same distribution and are conditionally independent given $\x,\bt$, then SSM/H3 can achieve the optimal loss using the all-ones filter and $\Lc^\st_{\gd}=\Lc_{\ssm}^\st$.}
\end{proposition}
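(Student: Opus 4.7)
My plan is to reduce all three claims to a common template: decompose the model's prediction at the query into a gradient-descent piece matching $g_\gd$ (resp.\ $g_\wpgd$) plus an auxiliary piece $E(\Z)$; then use Assumptions~\ref{assum:odd x beta}--\ref{assume:noise} to show that $E(\Z)$ is orthogonal to both $y$ and the gradient-descent piece, yielding a Pythagorean identity. Combined with an explicit parameter choice that realizes any target $(\W,\bom)$ while zeroing out $E$, this pins down the optimal risks. For $\att$, partition $\WQ\WK^\top=\bigl(\begin{smallmatrix}\A&b\\c^\top&d\end{smallmatrix}\bigr)$ and $\WV\vb=\bigl(\begin{smallmatrix}\ub\\v\end{smallmatrix}\bigr)$ to match the $d/1$ split in $\z=\bigl(\begin{smallmatrix}\x\\0\end{smallmatrix}\bigr)$, $\z_i=\bigl(\begin{smallmatrix}\x_i\\y_i\end{smallmatrix}\bigr)$; a direct expansion of $g_\att(\Z)=\sum_i(\z^\top\WQ\WK^\top\z_i)(\vb^\top\WV^\top\z_i)$ then gives
\begin{align*}
g_\att(\Z)=\sum_{i=1}^n \x^\top\!\bigl(v\A+b\ub^\top\bigr)\x_i y_i\;+\;\underbrace{\sum_{i=1}^n\bigl[\x^\top\A\x_i\,\ub^\top\x_i+v\,\x^\top b\,y_i^2\bigr]}_{=:E(\Z)},
\end{align*}
so the first sum is precisely $g_\gd(\Z;\W)$ with $\W:=v\A+b\ub^\top$. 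A parallel column-wise split of the H3 weights (writing each column $Q_k,K_k,V_k$ of $\WQ,\WK,\WV$ as an $\x$-block plus a $y$-block) gives $g_\ssm(\Z)=g_\wpgd(\Z;\W,\bom)+E'(\Z)$ with $\omega_i=f_{n+1-i}$, $\W=\sum_k v_k\bigl(V_k^y Q_k^x K_k^{x\top}+K_k^y Q_k^x V_k^{x\top}\bigr)$, and $E'$ consisting of a cubic-in-feature piece and a $y_i^2$ piece.

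Substituting $y=\x^\top\bt+\xi$, $y_i=\x_i^\top\bt+\xi_i$ and expanding $\E[y\,E(\Z)]$ and $\E[g_\gd(\Z;\W)\,E(\Z)]$, every term carrying an odd power of some $\xi_j$ dies by Assumption~\ref{assume:noise}, and the remaining pure $(\x,\bt)$-cross-moments all turn out to have odd total degree in the entries of $\{\x_i\}_{i=1}^{n+1}\cup\{\bt\}$: for instance $\x^\top\A\x_i\,\ub^\top\x_i\,\x^\top\bt$ has degree $5$, $v\x^\top b\,(\x_i^\top\bt)^2\,\x^\top\bt$ has degree $7$, and even the stray noise residual $\E[\xi_i^2]\E[(\x^\top b)(\x^\top\bt)]$ collapses to a degree-$3$ cross-moment. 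Assumption~\ref{assum:odd x beta} drives them all to zero, producing the Pythagorean identity $\Lc_\att(\Wc)=\Lc_\gd(\W)+\E[E(\Z)^2]$ and hence $\Lc_\gd^\st\le\Lc_\att^\st$; the same bookkeeping for $E'$ yields $\Lc_\wpgd^\st\le\Lc_\ssm^\st$. For the reverse inequalities, the choice $\ub=0,\,b=0,\,v\A=\W$ zeroes out $E$ and gives $g_\att=g_\gd(\cdot;\W)$, so $\Lc_\att^\st\le\Lc_\gd^\st$; the column-wise choice $V_k^x=0,\,K_k^y=0$ zeroes out $E'$ while the filter $\fb$ remains free to realize any $\bom$, so $\Lc_\ssm^\st\le\Lc_\wpgd^\st$. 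This establishes the two equalities.

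For the final claim, under the conditional i.i.d.\ assumption the examples $(\x_i,y_i)_{i=1}^n$ are exchangeable given $(\x,\bt)$, so the WPGD risk $R(\bom,\W):=\E[(y-\x^\top\W\X^\top(\bom\odot\y))^2]$ is invariant under permutations of $\bom$. Since $R(\cdot,\W)$ is a convex quadratic in $\bom$, Jensen applied to the uniform symmetrization $\bar\bom:=\bigl(\tfrac1n\sum_i\omega_i\bigr)\onebb_n$ gives $R(\bar\bom,\W)\le R(\bom,\W)$, so constant weights are optimal; a constant-weighted WPGD coincides with PGD on a rescaled $\W$, so $\Lc_\wpgd^\st=\Lc_\gd^\st$, and the all-ones filter (with $f_0=0$ to respect the causal mask) realizes this optimum inside $\ssm$. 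The main obstacle is the orthogonality step for $E'$: the triple key/query/value product in H3 produces a cubic-in-$\x_i$ interaction whose cross-moments with both $y$ and $g_\wpgd(\cdot;\W,\bom)$ must be checked case by case, including the diagonal $i=j$ contributions in the double sums and the $\xi^2$-residuals that collapse to odd $(\x,\bt)$-moments only after a second application of Assumption~\ref{assum:odd x beta}.
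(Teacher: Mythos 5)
Your proof of the two equalities $\Lc^\st_{\gd}=\Lc^\st_{\att}$ and $\Lc^\st_{\wpgd}=\Lc^\st_{\ssm}$ follows essentially the same route as the paper: the identical block decomposition of $\WQ\WK^\top$ and $\WV\vb$ (resp.\ the column-wise split of the H3 weights), the same two residual pieces (a cubic-in-features term and a $y_i^2$ term), the same odd-moment orthogonality bookkeeping yielding the Pythagorean identity, and the same realizability constructions for the reverse inequalities --- the paper's SSM construction likewise zeroes the feature block of the values and the label block of the keys, exactly as you do. Where you genuinely diverge is the final all-ones-filter claim: the paper fixes $\W$, computes $\partial\Lc/\partial\omega_i = 2c_1\bom^\top\onebb_n + 2(c_2-c_1)\omega_i - 2c_3$ with coefficients $c_1,c_2,c_3$ that are index-independent by conditional i.i.d.-ness, and reads off that stationary points have constant entries, treating the degenerate case $c_2=c_1$ separately. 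Your exchangeability-plus-Jensen symmetrization reaches the same conclusion without a gradient computation and without case-splitting: permutation invariance of the risk together with convexity in $\bom$ gives $R(\bar{\bom},\W)\le R(\bom,\W)$ for the uniform average $\bar{\bom}=\bigl(\tfrac{1}{n}\sum_i\omega_i\bigr)\onebb_n$, which is cleaner and automatically robust to degeneracies of the quadratic; both arguments then absorb the constant into $\W$ in the same way. I see no gaps.
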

}

We defer the proof to Appendix~\ref{app:proof eqv}. 
Proposition~\ref{lemma:eqv} 
establishes that analyzing the optimization landscape of ICL for both single-layer linear attention and the H3 model can be effectively reduced to examining the behavior of a one-step 
PGD algorithm. {Notably, under the independent, RAG and task-feature alignment data settings discussed above, examples $(\x_i,y_i)_{i=1}^n$ are independently sampled given $\x$ and $\bt$, and we therefore conclude that $\Lc_{\gd}^\st=\Lc_{\att}^\st=\Lc_{\ssm}^\st$.} Leveraging this result, the subsequent section of the paper concentrate on addressing \eqref{obj gd}, taking into account various linear data distributions. 

{
While Proposition~\ref{lemma:eqv} demonstrates the equivalence of optimal losses, we also study the uniqueness and equivalence of optimal prediction functions. To this end, we analyze the strong convexity of $\Lc_{\gd}(\Wc)$ and derive the subsequent lemmas.
\begin{lemma}\label{lemma cvx 1}
    {Suppose Assumption~\ref{assume:noise} holds} and let $\bxi=[\xi_1~\xi_2~\cdots~\xi_n]^\top$. Then the loss $\Lc_{\gd}(\Wc)$ in \eqref{obj gd} is strongly-convex if and only if $\E[(\x^\top\W\X^\top\X\bt)^2]+\E[(\x^\top\W\X^\top\bxi)^2]$ is strongly-convex. Additionally, let $g_{\gd}^\st$, $g^\st_\att$ be the optimal prediction functions of \eqref{obj gd} and \eqref{obj att}. Then under the conditions of Assumptions~\ref{assum:odd x beta} and \ref{assume:noise}, and the strong convexity,  $g^\st_{\gd}=g_{\att}^\st$. 
\end{lemma}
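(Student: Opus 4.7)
For Part 1, I would expand $\Lc_{\gd}(\W)=\E[(y-\x^\top\W\X^\top\y)^2]$ as a multivariate quadratic in the entries of $\W\in\R^{d\times d}$. Writing $y=\x^\top\bt+\xi$ and $\y=\X\bt+\bxi$, the loss splits into pieces that are constant, linear, and quadratic in $\W$. Assumption~\ref{assume:noise} kills every summand in which $\xi$ or $\bxi$ appears to odd order, since these noise variables are zero-mean and independent of $(\x,\bt,\X)$; in particular the cross expectation $2\E[\x^\top\W\X^\top\X\bt\cdot\x^\top\W\X^\top\bxi]$ inside $\E[(\x^\top\W\X^\top\y)^2]$ vanishes. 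Hence the purely quadratic-in-$\W$ contribution equals exactly $\E[(\x^\top\W\X^\top\X\bt)^2]+\E[(\x^\top\W\X^\top\bxi)^2]$. A multivariate quadratic is strongly convex iff its Hessian, equivalently its purely quadratic part, is positive definite, so Part~1 follows.

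For Part 2, the first step is to observe that the attention function class contains the PGD function class: choosing $\WQ\WK^\top$ with top-left $d\times d$ block equal to $\W$ and zero elsewhere, together with $\WV\vb=[\zerbb_d^\top,1]^\top$, produces $g_{\att}(\Z)=\x^\top\W\X^\top\y$ exactly. Consequently $\Lc^\st_{\att}\le\Lc^\st_{\gd}$, and Proposition~\ref{lemma:eqv} upgrades this to equality. Strong convexity of $\Lc_{\gd}$ then renders the PGD minimizer $\W^\st$ unique, so that $g^\st_{\gd}=\x^\top\W^\st\X^\top\y$ is a unique element of $L^2$, and the construction above already exhibits one attention configuration realizing this prediction.

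The substantive step is to verify that \emph{every} attention optimum yields the same $L^2$ prediction. I would decompose the attention output as $g_{\att}(\Z;\Wc)=A+B$, where $A=q_2\,\x^\top P_{11}\X^\top\y$ is the PGD-shaped piece with effective weight $\W_{\mathrm{eff}}=q_2 P_{11}$, and $B$ collects the remaining contributions arising from the off-diagonal block of $\WQ\WK^\top$ and the feature-channel component of $\WV\vb$. Expanding $\Lc_{\att}(\Wc)=\E[(y-A-B)^2]$ and invoking both Assumptions~\ref{assum:odd x beta} and~\ref{assume:noise} to kill the odd-order cross moments between $y-A$ and $B$, the target is a non-negative separation $\Lc_{\att}(\Wc)=\Lc_{\gd}(\W_{\mathrm{eff}})+R(\Wc)$ with $R(\Wc)\ge 0$ that vanishes iff $B\equiv 0$ almost surely. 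At any attention optimum both terms are minimized individually, forcing $\W_{\mathrm{eff}}=\W^\st$ by strong convexity and $B\equiv 0$, so $g^\st_{\att}=\x^\top\W^\st\X^\top\y=g^\st_{\gd}$ in $L^2$.

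The main obstacle is producing the non-negative decomposition $\Lc_{\att}=\Lc_{\gd}(\W_{\mathrm{eff}})+R$ with $R\ge 0$. The required cross-term cancellations hinge on the odd-order vanishing in Assumption~\ref{assum:odd x beta}, which explains why this assumption is needed for Part~2 but not for Part~1. If the direct moment expansion becomes too unwieldy, a cleaner fallback is to reuse the explicit representation of the attention optimum constructed inside the proof of Proposition~\ref{lemma:eqv}: that proof already casts the optimum in PGD form, after which strong convexity supplies the uniqueness to conclude $g^\st_{\att}=g^\st_{\gd}$.
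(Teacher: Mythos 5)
Your proposal is correct and follows essentially the same route as the paper: Part 1 is the identical quadratic-in-$\W$ decomposition with the noise cross-term annihilated by Assumption~\ref{assume:noise} (leaving the purely quadratic part $\E[(\x^\top\W\X^\top\X\bt)^2]+\E[(\x^\top\W\X^\top\bxi)^2]$ plus a linear term), and Part 2 is precisely the decomposition $g_{\att}=\x^\top\Wt\X^\top\y+\eps$ with $\Lc_{\att}=\E[(\tilde g_{\att}-y)^2]+\E[\eps^2]$ already carried out inside the proof of Proposition~\ref{lemma:eqv}, followed by uniqueness of $\W_\st$ from strong convexity. One small detail to get right when executing your $A+B$ split: the effective weight is $\Wt=v\Wb+\w_1\vb_1^\top$ rather than the top-left block alone, since the off-diagonal block $\w_1$ paired with the feature-channel head $\vb_1$ also produces a PGD-shaped term $\x^\top\w_1\vb_1^\top\X^\top\y$ that must be absorbed into $A$, or else the cross term between $y-A$ and $B$ does not vanish.
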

\begin{lemma}\label{lemma cvx 2}
   {Suppose that the label noise $(\xi_i)_{i=1}^n$ are i.i.d., zero-mean, variance $\sigma^2$ and independent of everything else}, and that there is a decomposition $\x=\x_1+\x_2$, $\X=\X_1+\X_2$, and $\bt=\bt_1+\bt_2$ such that either of the following holds
    \begin{enumerate}[label=$\bullet$,leftmargin=20pt]
    \item $\sigma>0$, and $(\x_1,\X_1)$ have full rank covariance and are independent of each other and $(\x_2,\X_2)$.
    \item $(\x_1,\bt_1,\X_1)$ have full rank covariance and are independent of each other and $(\x_2,\bt_2,\X_2)$.
    \end{enumerate}
    Then, the loss $\Lc_{\gd}(\Wc)$ in \eqref{obj gd} is strongly-convex.
\end{lemma}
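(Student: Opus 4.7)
The plan is to invoke Lemma~\ref{lemma cvx 1}, reducing the claim to strong convexity of the non-negative quadratic form $\W \mapsto \E[(\x^\top\W\X^\top\X\bt)^2]+\E[(\x^\top\W\X^\top\bxi)^2]$; since each summand is a PSD form in $\mathrm{vec}(\W)$, it suffices to exhibit strong convexity of whichever single term the hypothesis at hand makes available (the noise term in Case 1, the signal term in Case 2). A preliminary mean-shift simplifies the bookkeeping dramatically: Assumption~\ref{assum:odd x beta} forces $\E[\x]=\E[\bt]=\zerbb_d$ and $\E[\X]=\zerbb$, so I would replace $(\x_1,\x_2)$ by $(\x_1-\E[\x_1],\,\x_2+\E[\x_1])$ and analogously for $\bt,\X$. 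This preserves both the full-rank covariances and all the stipulated independence relations (they are stable under deterministic shifts), and yields $\E[\x_1]=\E[\x_2]=\E[\bt_1]=\E[\bt_2]=\E[\X_1]=\E[\X_2]=0$ without loss of generality.

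For Case 1 ($\sigma>0$), set $A:=\W^\top$ and write $\|\X A\x\|^2=\sum_{i=1}^n (\x_i^\top A \x)^2$ with $\x_i$ the $i$-th row of $\X$, decomposed as $\x_i=\x_{1,i}+\x_{2,i}$ via the rows of $\X_1,\X_2$. Expanding $(\x_i^\top A\x)=\sum_{j,k\in\{1,2\}}\x_{j,i}^\top A \x_k$ and using $\x_1\perp\X_1$ together with $(\x_1,\X_1)\perp(\x_2,\X_2)$ and the zero means above, each cross product between the $(1,1)$-term $\x_{1,i}^\top A \x_1$ and any of the other three expansion terms has zero expectation. Hence
\[
\E[(\x_i^\top A\x)^2] \;\geq\; \E[(\x_{1,i}^\top A\x_1)^2] \;=\; \tr{A^\top \E[\x_{1,i}\x_{1,i}^\top]\, A\, \E[\x_1\x_1^\top]},
\]
where the identity uses $\x_1\perp\x_{1,i}$ (inherited from $\x_1\perp\X_1$). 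Summing in $i$ gives $\sigma^2\E[\|\X A\x\|^2]\geq \sigma^2\lambda_{\min}(\E[\X_1^\top\X_1])\,\lambda_{\min}(\E[\x_1\x_1^\top])\,\tf{\W}^2$, strictly positive for $\W\neq 0$ since full-rank covariance of $\mathrm{vec}(\X_1)$ makes each diagonal block $\E[\x_{1,i}\x_{1,i}^\top]$ positive definite, so $\E[\X_1^\top\X_1]=\sum_i\E[\x_{1,i}\x_{1,i}^\top]\succ 0$.

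For Case 2, I focus on the signal term and condition on $(c,c',C):=(\x_2,\bt_2,\X_2)$. Writing $\tilde\x,\tilde\bt,\tilde\X$ for the zero-mean mutually independent triple $\x_1,\bt_1,\X_1$ (independent of $(c,c',C)$) and $\tilde M:=(\tilde\X+C)^\top(\tilde\X+C)$, the polynomial $P:=\x^\top\W\X^\top\X\bt$ has conditional mean $c^\top\W\tilde M c'$ given $(\tilde M,c,c',C)$, and all pairwise cross-terms in the residual $R_1+R_2+R_3$, with $R_1=\tilde\x^\top\W\tilde M\tilde\bt$, $R_2=\tilde\x^\top\W\tilde M c'$, $R_3=c^\top\W\tilde M\tilde\bt$, vanish by zero-mean independence. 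Thus
\[
\Var(P\,|\,\tilde M,c,c',C) \;\geq\; \E[R_1^2\,|\,\tilde M] \;=\; \tr{\W^\top \bSi_{\x_1}\, \W\, \tilde M\, \bSi_{\bt_1}\, \tilde M}.
\]
Averaging out $\tilde\X$ and applying the operator-convexity inequality $\E[\tilde M\,S\,\tilde M\,|\,C]\succeq \E[\tilde M\,|\,C]\, S\, \E[\tilde M\,|\,C]$ (valid for symmetric $\tilde M$ and $S=\bSi_{\bt_1}\succeq 0$, via $\E[(\tilde M-\E\tilde M)\,S\,(\tilde M-\E\tilde M)]\succeq 0$), together with $\E[\tilde M\,|\,C]=\E[\tilde\X^\top\tilde\X]+C^\top C\succeq \E[\tilde\X^\top\tilde\X]\succ 0$ and the elementary bound $\lambda_{\min}(ASA)\geq \lambda_{\min}(A)^2\lambda_{\min}(S)$ for symmetric PD $A$ and PSD $S$, yields $\E[P^2\,|\,c,c',C]\geq \kappa\,\tf{\W}^2$ uniformly in the conditioning, where $\kappa:=\lambda_{\min}(\bSi_{\x_1})\,\lambda_{\min}(\bSi_{\bt_1})\,\lambda_{\min}(\E[\tilde\X^\top\tilde\X])^2>0$; integrating out $(c,c',C)$ completes the case. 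The hard part is Case 2, where $P$ is fourth-order in the primitive variables and the full quadratic form in $\W$ carries sixteen product terms; the conditional-variance decomposition plus the operator Jensen bound is precisely what lets me bypass enumerating them.
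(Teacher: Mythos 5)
Your proof is correct. For Case 1 it is essentially the paper's argument: after centering the $(\x_1,\X_1)$ components, expand $\x^\top\W\X^\top\bxi$ over the decomposition, observe that the cross terms with the $(1,1)$ contribution vanish, and extract the strongly convex core $\sigma^2\tr{\W^\top\E[\x_1\x_1^\top]\W\,\E[\X_1^\top\X_1]}$; your row-by-row bookkeeping with $\lambda_{\min}$ factors is just a more explicit rendering of the paper's $\sigma^2\tf{\sqrt{\bSb_{\x}}\W\sqrt{\bSb_{\X}}}^2$. For Case 2 you take a genuinely different route. The paper splits $\X^\top\X$ additively into a zero-mean fluctuation $\A=\X_1^\top\X_1-\E[\X_1^\top\X_1]$ and a remainder $\B$, expands the square, discards convex terms, and lands on $\E[(\x_1^\top\W\,\E[\X^\top\X]\,\bt_1)^2]=\tf{\sqrt{\bSb_{\x}}\W\,\E[\X^\top\X]\sqrt{\bSb_{\bt}}}^2$; this requires arguing that several cross terms (including those generated by $\X_1^\top\X_2+\X_2^\top\X_1$, which the paper's identity $\X^\top\X=\A+\B$ silently drops) do not spoil the lower bound. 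You instead condition on $(\x_2,\bt_2,\X_2)$ and the full Gram matrix, lower-bound the conditional variance by the single term $\E[(\x_1^\top\W\X^\top\X\bt_1)^2\mid\X]$, and then replace the random $\X^\top\X$ by its conditional mean via the operator Jensen inequality $\E[\M S\M]\succeq\E[\M]S\E[\M]$. Both arguments terminate at the same positive-definite quadratic form, but your conditioning keeps $\X^\top\X$ intact and so never has to account for the mixed $\X_1$--$\X_2$ Gram terms, which makes the Case 2 argument cleaner and more airtight than the paper's. One cosmetic remark: you do not need Assumption~\ref{assum:odd x beta} for the mean shift — only the centering of the ``1'' components is ever used (exactly as in the paper), so the lemma stands without that extra hypothesis.
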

As mentioned above, in this work, we study three specific linear models: with general independent, RAG-related, and task-feature alignment data. Note that for all the three cases, according to Proposition~\ref{lemma:eqv}, we have $\Lc^\st_{\gd}=\Lc^\st_{\att}=\Lc^\st_{\ssm}$. Additionally, the second claim in Lemma~\ref{lemma cvx 2} holds, and $\Lc_{\gd}(\Wc)$ is strongly convex. Therefore, following Lemma~\ref{lemma cvx 1}, we have $g^\st_{\gd}=g^\st_\att$. Thanks to the equivalence among PGD, $\att$, and $\ssm$, in the next section, we focus on the solution of objective~\eqref{obj gd} under different scenarios, which will reflect the optimization landscapes of $\att$ and $\ssm$ models.}


\section{Main Results}\label{sec:main}
In light of Proposition~\ref{lemma:eqv}, optimizing a single layer linear-attention or H3 model is equivalent to solving the objective \eqref{obj gd}. Therefore, in this section, we examine the properties of the one-step 
PGD in \eqref{obj gd}. To this end, we consider multiple problem settings, including distinct data distributions and low-rank training. The latter refers to the scenario where the key and query matrices have rank restrictions, e.g., $\WK,\WQ\in\R^{(d+1)\times r}$, as well as LoRA-tuning when adapting the model under distribution shift.  
\subsection{Analysis of Linear Data Models}
We first consider the standard independent data setting. We will then examine correlated designs.

\noindent\textbf{Independent data model.} Let $\bSi_{\x}$ and $\bSi_{\bt}$ be the covariance matrices of the input feature and task vectors, respectively, and $\sigma\geq0$ be the noise level. We assume
\begin{align}
\bt\sim\Nc(0,\bSi_{\bt}),\quad\x_i\sim\Nc(0,\bSi_{\x}),\quad\xi_i\sim\Nc(0,\sigma^2),\quad 1\leq i\leq n+1\label{data ind}
\end{align}
and the label is obtained via $y_i=\x_i^\top\bt+\xi_i$. Our following result characterizes  the optimal solution of \eqref{obj gd}. Note that the data generated from \eqref{data ind} satisfies the conditions in Proposition~\ref{lemma:eqv}. Therefore, the same results can be applied to both linear-attention and H3 models.
\begin{theorem}\label{thm:independent}
    Consider independent linear data as defined in \eqref{data ind}, and suppose the covariance matrices $\bSi_{\x},\bSi_{\bt}$ are full rank. Recap the objective from \eqref{obj gd} and let $\W_\st:=\arg\min_{\W}\Lc_{\gd}(\W)$, and $\Lc_\st=\Lc_{\gd}(\W_\st)$. Additionally, let $\bSi=\bSi^{1/2}_{\x}\bSi_{\bt}\bSi_{\x}^{1/2}$ and $M=\tr{\bSi}+\sigma^2$. Then $\Ws$ and $\Lcs$ satisfy
    \begin{align}
    \W_\st=\bSi_{\x}^{-1/2}\Wb_\st\bSi_{\x}^{-1/2}\qquad\text{and}\qquad
    \Lc_\st=M-n\tr{\bSi\Wb_\st}\label{formula ind},
    \end{align}
    where we define $\Wb_\st=\left((n+1)\Iden_d+M\bSi^{-1}\right)^{-1}$.
\end{theorem}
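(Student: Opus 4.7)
The plan is to expand the squared loss $\Lc_{\gd}(\W) = \E[(y - \x^\top\W\X^\top\y)^2]$ by substituting $y = \x^\top\bt + \xi$ and $\y = \X\bt + \bxi$, and then exploit Gaussian isotropy via a whitening change of variables. By Assumption~\ref{assume:noise} every cross term containing a single copy of $\xi$ or $\bxi$ vanishes, so the loss splits into a signal piece involving only $\x,\X,\bt$ plus an additive noise piece $\sigma^2\E[\x^\top\W\X^\top\X\W^\top\x]$. The first reduction I would make is the bijective reparametrization $\Wb := \bSi_\x^{1/2}\W\bSi_\x^{1/2}$, together with $\hat\x_i := \bSi_\x^{-1/2}\x_i$ and $\tilde\bt := \bSi_\x^{1/2}\bt$; this is legal because $\bSi_\x\succ 0$ and replaces the problem by its isotropic counterpart with effective task covariance $\bSi := \bSi_\x^{1/2}\bSi_\bt\bSi_\x^{1/2}$ and standard Gaussian features $\hat\x_i\sim\Nc(0,\Iden_d)$.

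In these variables the loss becomes a polynomial in $\Wb$ whose coefficients are Gaussian moments. The easy pieces give $\E[y^2]=M$, $\E[y\cdot\hat\x^\top\Wb\hat\X^\top\hat\X\tilde\bt]=n\tr{\Wb\bSi}$, and the noise term equals $n\sigma^2\tr{\Wb\Wb^\top}$; these follow from $\E[\hat\X^\top\hat\X]=n\Iden_d$ and $\E[\tilde\bt\tilde\bt^\top]=\bSi$. The main obstacle is the quartic piece $\E[(\hat\x^\top\Wb\hat\X^\top\hat\X\tilde\bt)^2]$, which I would handle by evaluating $\E[\hat\X^\top\hat\X\,\bSi\,\hat\X^\top\hat\X]$. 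Writing $\hat\X^\top\hat\X=\sum_i\hat\x_i\hat\x_i^\top$ and expanding the double sum, the diagonal $i=j$ contribution requires the Gaussian fourth-moment identity $\E[\hat\x\hat\x^\top\bSi\hat\x\hat\x^\top]=2\bSi+\tr{\bSi}\Iden_d$ (Isserlis/Wick), while the $i\neq j$ terms each contribute $\bSi$ by independence, yielding a total of $n(n+1)\bSi + n\tr{\bSi}\Iden_d$. Collecting everything collapses the loss to
\[
\Lc_{\gd}(\Wb)=M-2n\tr{\Wb\bSi}+n(n+1)\tr{\Wb\bSi\Wb^\top}+nM\tr{\Wb\Wb^\top},
\]
which is strictly convex in $\Wb$ since $\bSi\succ 0$ and $M>0$ (consistent with Lemma~\ref{lemma cvx 2}).

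Finally, $\nabla_{\Wb}\Lc_{\gd}=0$ yields the first-order condition $\Wb\bigl((n+1)\bSi+M\Iden_d\bigr)=\bSi$; because $\bSi$ and $(n+1)\bSi+M\Iden_d$ commute, the unique solution is $\Wb_\st=\bigl((n+1)\Iden_d+M\bSi^{-1}\bigr)^{-1}$, and undoing the reparametrization recovers $\W_\st=\bSi_\x^{-1/2}\Wb_\st\bSi_\x^{-1/2}$. For $\Lc_\st$, I would right-multiply the optimality equation by $\Wb_\st^\top$ and take trace to obtain $n(n+1)\tr{\Wb_\st\bSi\Wb_\st^\top}+nM\tr{\Wb_\st\Wb_\st^\top}=n\tr{\Wb_\st\bSi}$, which cancels the quadratic terms when substituted into the loss expression and leaves $\Lc_\st = M - n\tr{\Wb_\st\bSi}$. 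The heart of the argument is the fourth-moment identity; once that is in hand, everything else is bookkeeping plus one change of variables.
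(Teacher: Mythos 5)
Your proposal is correct and follows essentially the same route as the paper's proof: the same whitening reparametrization $\Wb=\bSi_\x^{1/2}\W\bSi_\x^{1/2}$, the same key computation $\E[\Xb^\top\Xb\bSi\Xb^\top\Xb]=n(n+1)\bSi+n\tr{\bSi}\Iden_d$ (you invoke the Wick identity where the paper computes entrywise, but it is the same calculation), the same quadratic loss form, and the same first-order condition. The only cosmetic difference is your trace trick for simplifying $\Lc_\st$, which the paper obtains by direct substitution.
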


\begin{corollary}\label{corol:iid}
    Consider noiseless i.i.d. linear data where $\bSi_{\x}=\bSi_{\bt}=\Iden_d$ and $\sigma=0$. Then, the objective in \eqref{obj gd} returns
    \[
    \W_\st=\frac{1}{n+d+1}\Iden_d\qquad\text{and}\qquad\Lcs=d-\frac{nd}{n+d+1}.
    \]
\end{corollary}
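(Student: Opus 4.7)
The plan is to apply Theorem~\ref{thm:independent} directly, since the specialized setting $\bSi_{\x}=\bSi_{\bt}=\Iden_d$, $\sigma=0$ trivially satisfies the theorem's full-rank hypothesis on the covariance matrices. So the proof reduces to substitution, and nothing new needs to be shown from first principles.

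First I would compute the ``fused'' covariance $\bSi=\bSi_{\x}^{1/2}\bSi_{\bt}\bSi_{\x}^{1/2}=\Iden_d$ and the scalar $M=\tr{\bSi}+\sigma^2=d$. Plugging these into the expression for $\Wb_\st$ from Theorem~\ref{thm:independent} yields
\[
\Wb_\st=\bigl((n+1)\Iden_d+M\bSi^{-1}\bigr)^{-1}=\bigl((n+1)\Iden_d+d\,\Iden_d\bigr)^{-1}=\tfrac{1}{n+d+1}\Iden_d.
\]
Since $\bSi_{\x}^{-1/2}=\Iden_d$ in this case, the conjugation $\W_\st=\bSi_{\x}^{-1/2}\Wb_\st\bSi_{\x}^{-1/2}$ collapses to $\W_\st=\Wb_\st$, giving the claimed form of the optimal weight matrix. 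For the optimal loss, I would simply use $\tr{\bSi\Wb_\st}=\tfrac{1}{n+d+1}\tr{\Iden_d}=\tfrac{d}{n+d+1}$ and substitute into $\Lcs=M-n\tr{\bSi\Wb_\st}$, recovering $\Lcs=d-\tfrac{nd}{n+d+1}$.

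There is no real obstacle here; the corollary is a direct specialization of Theorem~\ref{thm:independent}, and the only routine checks are that the covariances are full rank (immediate) and that the algebra above goes through. If I wanted to add value, I would add a brief remark interpreting the result, e.g.\ that the effective step size $1/(n+d+1)$ smoothly interpolates between the underdetermined ($n\ll d$) and overdetermined ($n\gg d$) regimes, and that the residual risk behaves like $\tfrac{d(d+1)}{n+d+1}$, which is consistent with the $\order{d}$-sample intuition highlighted in (Q2).
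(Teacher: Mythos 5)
Your proposal is correct and is exactly how the paper obtains the corollary: it is a direct specialization of Theorem~\ref{thm:independent} with $\bSi=\Iden_d$ and $M=d$, and the paper provides no separate argument beyond this substitution. The algebra checks out, including your closing observation that the residual risk equals $d(d+1)/(n+d+1)$.
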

See Appendix~\ref{app:independent} for proofs. 
Note that Theorem~\ref{thm:independent} is consistent with prior work \citep[Theorem 1]{ahn2024transformers} when specialized to isotropic task covariance, i.e.,~$\bSi_{\bt}=\Iden_d$. However, their result is limited as the features and task are assumed to be independent. This prompts us to ask: \emph{What is the optimization landscape with correlated in-context samples?} Toward this, we consider the following RAG-inspired and task-feature alignment models, 
where Assumptions~\ref{assum:odd x beta} and \ref{assume:noise} continue to hold and Proposition~\ref{lemma:eqv} applies. 

\noindent\textbf{Retrieval augmented generation.} To provide a statistical model of the practical RAG approaches, given the query vector $\x_{n+1}=\x$, we propose to draw ICL demonstrations that are similar to $\x$ with the same shared task vector $\bt$. Modeling feature similarity through the cosine angle, RAG should sample the ICL examples $\x_i$, $i\leq n$, from the original feature distribution conditioned on the event $\text{cos}(\x_i,\x)\geq \alpha$ where $\alpha$ is the similarity threshold. 
As an approximate proxy, under the Gaussian distribution model, we assume that $\bt\sim\Nn(0,\Iden_d),~\x\sim\Nn(0,\Iden_d)$ and that RAG samples $\alpha$-correlated demonstrations $(\x_i,y_i)_{i=1}^n$ as follows:
\begin{align}
\x_i\bgl\x\sim\Nn(\alpha\x,(1-\alpha^2)\Iden_d),\quad\xi_i\sim\Nc(0,\sigma^2)\quad\text{and}\quad y_i=\x_i^\top \bt+\xi_i,\quad 1\leq i\leq n.\label{data rag}
\end{align}
Note that the above normalization ensures that the marginal feature distribution remains $\Nn(0,\Iden_d)$. 
The full analysis of RAG is provides in Appendix~\ref{app:rag}. Specifically, when we carry out the analysis by assuming $\alpha=\order{1/\sqrt{d}}$ and $d/n=\order{1}$ where $\order{\cdot}$ denotes proportionality, our derivation leads to the following result:
{
\begin{theorem}\label{thm:rag}
    Consider linear model as defined in \eqref{data rag}. Recap the objective from \eqref{obj gd} and let $\W_\st:=\arg\min_{\W}\Lc_{\gd}(\W)$, and $\Lc_\st=\Lc_{\gd}(\W_\st)$. Additionally, let $\kappa=\alpha^2d+1$ and suppose $\alpha=\order{1/\sqrt{d}}$, $d/n=\order{1}$ and $d$ is sufficiently large. Then $\Ws$ and $\Lcs$ have approximate forms
    \begin{align}
    \W_\st\approx\frac{1}{\kappa n+d+\sigma^2}\Iden_d\qquad\text{and}\qquad
\Lc_\st\approx d+\sigma^2-\frac{\kappa nd}{\kappa n+d+\sigma^2}.\label{formula rag}
\end{align}
\end{theorem}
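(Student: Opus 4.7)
The plan is to exploit rotational symmetry to reduce the matrix optimization to a scalar problem, then carry out the leading-order Gaussian moment computations in the scaling regime $\alpha^2 d = O(1)$, $d/n = O(1)$.

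\textbf{Step 1 (symmetry reduction).} The joint law of $(\x,\bt,\X)$ under the RAG model is invariant under the simultaneous rotation $(\x,\bt,\x_i)\mapsto(O\x,O\bt,O\x_i)$ for any orthogonal $O\in\R^{d\times d}$, since $\x_i=\alpha\x+\sqrt{1-\alpha^2}\z_i$ with $\z_i\sim\Nc(0,\Iden_d)$ i.i.d.\ and $y_i=\x_i^\top\bt+\xi_i$ are preserved. Substituting into $\Lc_{\gd}(\W)=\E[(y-\x^\top\W\X^\top\y)^2]$ and using $\x^\top O^\top\W O\X^\top=(O\x)^\top\W(\X O^\top)^\top$ gives $\Lc_{\gd}(\W)=\Lc_{\gd}(O^\top\W O)$. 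Averaging any minimizer over the orthogonal group---or invoking the uniqueness implied by Lemma~\ref{lemma cvx 2}---shows that there is a minimizer of the form $\W_\st=c_\st\Iden_d$.

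\textbf{Step 2 (scalar reduction).} With $\W=c\Iden_d$, the loss collapses to the one-dimensional quadratic $\Lc(c)=\E[y^2]-2c\,E_1+c^2\,E_2$, minimized at $c_\st=E_1/E_2$ with $\Lcs=\E[y^2]-E_1^2/E_2$, where $E_1=\E[y\,\x^\top\X^\top\y]$ and $E_2=\E[(\x^\top\X^\top\y)^2]$.

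\textbf{Step 3 (moment computations).} Expanding $\X^\top\y=\sum_i y_i\x_i$ and using $\bt\perp(\x,\X)$ with $\E[\bt\bt^\top]=\Iden_d$ gives $\E[y^2]=d+\sigma^2$ and reduces the needed expectations to
\bea{&\E[y y_i\x^\top\x_i]=\E[(\x^\top\x_i)^2],\quad \E[y_i^2(\x^\top\x_i)^2]=\E[\|\x_i\|^2(\x^\top\x_i)^2]+\sigma^2\E[(\x^\top\x_i)^2],\nn\\ &\E[y_i y_j(\x^\top\x_i)(\x^\top\x_j)]=\E[(\x_i^\top\x_j)(\x^\top\x_i)(\x^\top\x_j)]\quad(i\neq j).}
Substituting $\x_i=\alpha\x+\sqrt{1-\alpha^2}\z_i$ and evaluating standard Gaussian moments of $\|\x\|^2,\x^\top\z_i,\z_i^\top\z_j$ in the regime $\alpha^2 d=O(1)$, I expect the leading behaviors $\E[(\x^\top\x_i)^2]\simeq d\kappa$, $\E[\|\x_i\|^2(\x^\top\x_i)^2]\simeq d^2\kappa$, and $\E[(\x_i^\top\x_j)(\x^\top\x_i)(\x^\top\x_j)]\simeq d\kappa^2$. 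Summing gives $E_1\simeq nd\kappa$ and $E_2\simeq nd\kappa\,[d+\sigma^2+(n-1)\kappa]$. Replacing $(n-1)\kappa$ by $n\kappa$---an $O(\kappa)$ correction absorbed into the ``$\simeq$'' since $n\asymp d$ is large---then yields $c_\st\simeq 1/(n\kappa+d+\sigma^2)$ and $\Lcs\simeq (d+\sigma^2)-nd\kappa/(n\kappa+d+\sigma^2)$, matching \eqref{formula rag}.

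\textbf{Main obstacle.} The delicate step is the cross moment $\E[(\x_i^\top\x_j)(\x^\top\x_i)(\x^\top\x_j)]$: each of the three inner products is individually $\Theta(\sqrt d)$, yet their joint expectation is only $\Theta(d)$, so the computation hinges on tracking correlations through the decomposition $\z_i=(\x^\top\z_i/\|\x\|^2)\x+\z_i^\perp$. Three pieces $\alpha^4\E[\|\x\|^6]$, $2\alpha^2(1-\alpha^2)\E[\|\x\|^4]$, and $(1-\alpha^2)^2\E[\|\x\|^2]$ all contribute at the same leading order $d$ and must assemble exactly into $d(\alpha^2 d+1)^2=d\kappa^2$. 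Checking that every remaining correction is indeed subleading under the stated scaling---and that the $(n-1)\kappa\to n\kappa$ substitution is consistent---is the principal technical bookkeeping required.
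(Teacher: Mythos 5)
Your proposal is correct, and it reaches the result by a genuinely different route from the paper. The paper (Theorem~\ref{thm:rag exact} in Appendix~\ref{app:rag}) expands $\Lc_{\gd}(\W)$ for a \emph{general} $\W$ in terms of the trace invariants $N_1=\tr{\W}^2+\tr{\W\W^\top}+\tr{\W^2}$, $N_2=\tr{\W\W^\top}$, $N_3=\tr{\W}$ using explicit fourth- and sixth-moment identities, and only then deduces $\W_\st=c\Iden_d$ from the structure of the matrix stationarity equation; this yields an exact, non-asymptotic formula for $c$ and $\Lc_\st$ (which is what is actually plotted in Fig.~\ref{fig:rag loss}), with the approximation \eqref{formula rag} obtained at the very end. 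You instead establish $\W_\st\propto\Iden_d$ \emph{a priori} from orthogonal invariance of the RAG law together with convexity (or the uniqueness supplied by Lemma~\ref{lemma cvx 2}, which the paper verifies applies here), and then only ever need the two scalar moments $E_1=\E[y\,\x^\top\X^\top\y]$ and $E_2=\E[(\x^\top\X^\top\y)^2]$. This is leaner: it avoids the general-$\W$ matrix calculus entirely. Your leading-order values check out against the paper's exact constants --- indeed $E_1=d\,A_3=nd(\alpha^2(d+2)+\gamma^2)$ and $E_2=d\,[A_1(d+2)+A_2+n\sigma^2(\alpha^2(d+2)+\gamma^2)]$ in the paper's notation, and your exact evaluation of the delicate cross moment, $\alpha^4 d(d+2)(d+4)+2\alpha^2\gamma^2 d(d+2)+\gamma^4 d=d\kappa^2(1+O(1/d))$, is the same three-piece decomposition the paper's sixth-moment computation produces. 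The trade-off is that your argument, as written, delivers only the asymptotic statement of Theorem~\ref{thm:rag}, not the exact finite-$d$ expression \eqref{formula rag app 1}; if you wanted the latter you would simply keep the exact values of $E_1,E_2$ rather than their leading terms, at which point $c_\st=E_1/E_2$ reproduces the paper's $c$ verbatim. The one step you flag as "expected" --- the $(n-1)\kappa\to n\kappa$ replacement and the subleading corrections --- is indeed harmless under $\alpha^2 d=\order{1}$, $d/n=\order{1}$, since every neglected term is $O(1/d)$ relative to the retained ones.
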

}

Here, \eqref{formula rag} is reminiscent of Corollary~\ref{corol:iid} and has a surprisingly clean message. Observe that, $\alpha^2d+1$ is the dominant multiplier ahead of $n$ in both equations. Thus, we deduce that, RAG model follows the same error bound as the independent data model, however, its sample size is amplified by a factor of $\alpha^2d+1$. $\alpha=0$ reduces to the result of Corollary~\ref{corol:iid} whereas we need to set $\alpha=\order{1/\sqrt{d}}$ for constant amplification. When $\alpha=1$, RAG achieves the approximate risk $\Lcs\approx 2+\sigma^2$, where the constant bias is due to the higher order moments (e.g., the $4$'th and $6$'th moments) of the standard Gaussian distribution.  As $d$ increases, the normalized loss $\Lc_\st/d\to 0$. The full analysis of its optimal solution $\W_\st$ and loss $\Lc_\st$ are deferred Theorem~\ref{thm:rag exact} in Appendix~\ref{app:rag}. 


{
\noindent\textbf{Task-feature alignment.} We also consider another dependent data setting where task and feature vectors are assumed to be correlated. {This dataset model has the following motivation: In general, an LLM can generate any token within the vocabulary. However, once we specify the task (e.g.~domain of the prompt), the LLM output becomes more deterministic and there are much fewer token candidates. For instance, if the task is ``Country'', ``France'' is a viable output compared to ``Helium'' and vice versa when the task is ``Chemistry''. Formally speaking, this can be formalized as the input $\x$ having a diverse distribution whereas it becomes more predictable conditioned on $\bt$. Therefore, it can be captured through a linear model by making the conditional covariance of $\x\bgl\bt$ to be approximately low-rank. This formalism can be viewed as a \emph{spectral alignment} between input and task, which is also well-established in deep learning both empirically and theoretically \citep{li2020gradient,arora2019fine,canatar2021spectral,cao2019towards}. Here, we consider such a setting where} the shared task vector is sampled as standard Gaussian distribution $\bt~\sim\Nc(0,\Iden_d)$ and letting $\kappa=\alpha^2d+1$, we sample the $\alpha$-correlated ICL demonstrations $(\x_i,y_i)_{i=1}^{n+1}$ as follows:}
\begin{align}
{\x_i\bgl\bt} \sim\Nn(\alpha\bt,\Iden_d),\quad\xi_i\sim\Nc(0,\sigma^2)\quad\text{and}\quad y_i={\kappa^{-1/2}}\x_i^\top \bt+\xi_i,\quad 1\leq i\leq n+1.\label{data tfa}
\end{align}
{Above, ${\kappa^{-1/2}}$ is a normalization factor to ensure that label variance remains invariant to $\alpha$.} To keep the exposition cleaner, we defer the full analysis of its optimal solution $\W_\st$ and loss $\Lc_\st$ to Theorem~\ref{thm:feature app} in Appendix~\ref{app:task feature}. Similar to the RAG setting, by assuming $\alpha=\order{1/\sqrt{d}}$ and $d/n=\order{1}$, we obtain the following results for the optimal parameter and risk.
\begin{theorem}\label{thm:feature}
    Consider linear model as defined in \eqref{data tfa}. Recap the objective from \eqref{obj gd} and let $\W_\st:=\arg\min_{\W}\Lc_{\gd}(\W)$, and $\Lc_\st=\Lc_{\gd}(\W_\st)$. Additionally, given $\kappa=\alpha^2d+1$ and suppose $\alpha=\order{1/\sqrt{d}}$, $d/n=\order{1}$ and $d$ is sufficiently large. Then $\Ws$ and $\Lcs$ have approximate forms
    \begin{align}
    \W_\st\approx\frac{1}{\kappa n+(d+\sigma^2)/\kappa}\Iden_d\qquad\text{and}\qquad\Lc_\st\approx  d+\sigma^2-\frac{\kappa nd}{\kappa n+(d+\sigma^2)/\kappa}.
    \label{formula task feature}
    \end{align}
\end{theorem}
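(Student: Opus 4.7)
The plan is to exploit the rotational symmetry of the task-feature data model to collapse the matrix optimization into a one-parameter problem, and then evaluate the resulting quadratic by direct Gaussian moment computations. The joint law of $(\bt, \x_1, \ldots, \x_{n+1})$ is invariant under $\bt \mapsto O\bt$, $\x_i \mapsto O\x_i$ for any orthogonal $O \in \R^{d \times d}$, and the labels $y_i = \kappa^{-1/2}\x_i^\top\bt + \xi_i$ are preserved, so $\Lc_{\gd}(O\W O^\top) = \Lc_{\gd}(\W)$. Combined with the strong convexity guaranteed by Lemma~\ref{lemma cvx 2}, the unique minimizer must commute with every orthogonal matrix, forcing $\W_\st = c_\st \Iden_d$ for some scalar $c_\st$.

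With this reduction, the risk becomes a scalar quadratic $\Lc(c) = \E[y^2] - 2c\,\E[y\,\x^\top\X^\top\y] + c^2\,\E[(\x^\top\X^\top\y)^2]$, so the optimum satisfies $c_\st = \E[y\,\x^\top\X^\top\y]/\E[(\x^\top\X^\top\y)^2]$ and $\Lc_\st = \E[y^2] - c_\st\,\E[y\,\x^\top\X^\top\y]$. I would decompose $\x_i = \alpha\bt + \g_i$ and $\x = \alpha\bt + \g$ with $\g, \g_1, \ldots, \g_n \distas \Nc(0,\Iden_d)$ mutually independent and independent of $\bt$, then condition on $\bt$ to exploit the conditional independence of the query, in-context features, and noises. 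The first two moments reduce to closed-form sums of chi-squared moments $\E[\|\bt\|^{2k}] = d(d+2)\cdots(d+2k-2)$; for example, $\E[y\,\x \mid \bt] = \kappa^{-1/2}(1 + \alpha^2\|\bt\|^2)\bt$ gives $\E[y\,\x^\top\X^\top\y] = n\kappa^{-1}\,\E[(1 + \alpha^2\|\bt\|^2)^2\|\bt\|^2]$.

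The technical heart is $\E[(\x^\top\X^\top\y)^2] = \sum_{i,j}\E[y_i y_j (\x^\top\x_i)(\x^\top\x_j)]$, which I split into diagonal ($i=j$, $n$ terms) and off-diagonal ($i \neq j$, $n(n-1)$ terms) contributions. Conditioning on $\bt$, each term becomes a quartic Gaussian moment in the conditionally independent variables, evaluable by Isserlis' theorem (or direct expansion in $(\g, \g_i, \g_j)$). The outer expectation over $\bt$ again produces chi-squared moments, yielding an exact closed-form expression for $c_\st$ and $\Lc_\st$ in the parameters $(\alpha, d, n, \sigma)$ that becomes the content of Theorem~\ref{thm:feature app} in the appendix.

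To obtain the approximate formulae stated, I invoke the scaling $\alpha = \order{1/\sqrt{d}}$, $d/n = \order{1}$, and $d$ large, under which $\kappa = \alpha^2 d + 1$ is $\order{1}$. Each chi-squared moment satisfies $\E[\|\bt\|^{2k}] = d^k(1 + \order{1/d})$, so $\alpha^{2k}\E[\|\bt\|^{2k+2}] \approx d(\alpha^2 d)^k = d(\kappa - 1)^k$ and binomial expansions collapse to clean powers of $\kappa$: for instance, $\E[(1 + \alpha^2\|\bt\|^2)^2\|\bt\|^2] \approx d\kappa^2$, yielding $\E[y\,\x^\top\X^\top\y] \approx nd\kappa$. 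Similar accounting gives $\E[(\x^\top\X^\top\y)^2] \approx n^2 d\kappa^2 + nd(d+\sigma^2)$, which combine to produce the claimed $c_\st \approx 1/(\kappa n + (d+\sigma^2)/\kappa)$ and $\Lc_\st \approx d + \sigma^2 - \kappa n d / (\kappa n + (d+\sigma^2)/\kappa)$. The main obstacle is the bookkeeping in the quartic moment computation: identifying which of the many Wick contractions survive at leading order in $d$ and verifying that the subleading terms genuinely fall into an $o(1)$ relative error. This is structurally parallel to the RAG calculation (Theorem~\ref{thm:rag}) but is more intricate here because the query $\x$ and the in-context features $\x_i$ share a common mean drift through $\bt$, so the $\g$-decoupling must be applied simultaneously across all slots of the quartic expectation.
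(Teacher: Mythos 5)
Your proposal is correct and reaches the same formulas, but it takes a genuinely different route from the paper's proof (Theorem~\ref{thm:feature app} in Appendix~\ref{app:task feature}). The paper never invokes symmetry: it expands $\Lc(\W)$ for a \emph{general} $\W$ into the trace functionals $N_1,\dots,N_4$ using the precomputed 4th, 6th and 8th Gaussian moment identities \eqref{moment 4}--\eqref{moment 8}, computes the full matrix gradient, and only then observes that the stationarity equation forces $\W_\st = c\Iden_d$, after which it solves for $c$ exactly and passes to the $\alpha=\order{1/\sqrt{d}}$, $d/n=\order{1}$ asymptotics. You instead establish $\W_\st = c\Iden_d$ \emph{a priori} from orthogonal invariance of the data law together with convexity (strong convexity via Lemma~\ref{lemma cvx 2} gives uniqueness, though plain convexity plus Haar-averaging of the minimizer set already suffices, which is worth noting since verifying the decomposition hypothesis of Lemma~\ref{lemma cvx 2} for this correlated model takes some care), which collapses the problem to a scalar quadratic requiring only the two moments $\E[y\,\x^\top\X^\top\y]$ and $\E[(\x^\top\X^\top\y)^2]$. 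Your reported leading-order values, $\E[y\,\x^\top\X^\top\y]\approx nd\kappa$ and $\E[(\x^\top\X^\top\y)^2]\approx n^2d\kappa^2+nd(d+\sigma^2)$, agree exactly with what the paper's constants $\Delta_0,\dots,\Delta_4$ reduce to under the stated scaling, so the sketch is verifiably on track. What each approach buys: yours is shorter and makes the isotropy of $\W_\st$ conceptually transparent, avoiding the $N_4$ functional and the matrix differentiation entirely; the paper's approach yields the risk as an explicit function of an arbitrary $\W$, which is what enables the exact non-asymptotic statement of Theorem~\ref{thm:feature app} and is the same machinery reused elsewhere (e.g., the low-rank analysis). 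The one piece you have not carried out in detail is the quartic moment $\E[(\x^\top\X^\top\y)^2]$, but you correctly identify its structure (diagonal vs.\ off-diagonal terms, conditioning on $\bt$, Wick contractions, chi-squared moments), and the target value you state for it is the right one.
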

Similar to \eqref{formula rag}, \eqref{formula task feature} contains $\kappa=\alpha^2+1$ multiplier ahead of $n$, which reduces the in-context sample complexity and setting $\alpha=0$ reduces to the results of Corollary~\ref{corol:iid}. 

\subsection{Low-rank Parameterization and LoRA}\label{sec:low rank}

In this section, we investigate training low-rank models, which assume $\WK,\WQ\in\R^{(d+1)\times r}$ where $r$ is the rank restriction. 
Equivalently, we consider objective~\eqref{obj gd} under condition $\rank{\W}=r$.

\begin{lemma}\label{low rank attn}
Consider independent linear data as defined in \eqref{data ind}. Recap the objective from \eqref{obj gd} and enforce $\rank{\W}\leq r$ and $\W^\top=\W$. Let $\bSi=\bSi^{1/2}_{\x}\bSi_{\bt}\bSi_{\x}^{1/2}$ and $M=\tr{\bSi}+\sigma^2$. 
Denoting $\la_i$ to be the $i$'th largest eigenvalue of $\bSi$, we have that 
\begin{align}
\min_{\rank{\W}\leq r,\W=\W^\top}\Lc(\W)=M-\sum_{i=1}^r\frac{n\la_i^2}{(n+1)\la_i+M}.\label{formula low rank}
\end{align}
\end{lemma}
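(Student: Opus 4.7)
The plan is to reduce the rank-constrained problem to a scalar optimization in the eigenbasis of $\bSi$, and then handle the rank constraint via a Schur--Horn plus Cauchy interlacing argument. Following the setup used in Theorem~\ref{thm:independent}, I would introduce the change of variables $\Wb = \bSi_\x^{1/2}\W\bSi_\x^{1/2}$; because $\bSi_\x^{1/2}$ is invertible and symmetric, this bijection preserves both symmetry and rank, mapping the feasible set to $\{\Wb=\Wb^\top,\ \rank{\Wb}\leq r\}$. Expanding the squared loss and applying the Gaussian fourth-moment identity $\E[\x\x^\top\M\x\x^\top]=2\bSi_\x\M\bSi_\x+\bSi_\x\tr{\M\bSi_\x}$ together with the independence of $\bt$, $(\x_i)$, and the noise gives the clean quadratic
\[
\Lc_{\gd}(\W) \;=\; M - 2n\tr{\bSi\Wb} + n(n+1)\tr{\bSi\Wb^2} + nM\tr{\Wb^2}.
\]

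Next, any feasible $\Wb$ can be written as $\Wb = \V\Db\V^\top$ with $\V\in\R^{d\times r}$ having orthonormal columns and $\Db=\diag(\omega_1,\ldots,\omega_r)$. Letting $\mu_j := \vb_j^\top\bSi\vb_j$ and plugging this form into the three traces above, the loss decouples across $j$:
\[
\Lc_{\gd} \;=\; M + \sum_{j=1}^r \bigl[-2n\mu_j\omega_j + n\bigl((n+1)\mu_j+M\bigr)\omega_j^2\bigr].
\]
Minimizing over each $\omega_j$ in closed form yields $\omega_j^\st = \mu_j/((n+1)\mu_j+M)$ and residual value $\Lc^\st(\V) = M - \sum_{j=1}^r f(\mu_j)$, where $f(\la) := n\la^2/((n+1)\la+M)$.

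It then remains to maximize $\sum_{j=1}^r f(\mu_j)$ over orthonormal $\V$. Using the partial-fraction rewriting $f(\la) = \tfrac{n}{n+1}\la - \tfrac{nM}{(n+1)^2} + \tfrac{nM^2/(n+1)^2}{(n+1)\la+M}$, one sees immediately that $f$ is nondecreasing and convex on $[0,\infty)$. By the Schur--Horn theorem, the diagonal $(\mu_j)$ of $\V^\top\bSi\V$ is majorized by its eigenvalues, and by Cauchy interlacing those eigenvalues are bounded termwise by $(\la_1,\ldots,\la_r)$. Chaining these two facts gives weak majorization $(\mu_j)\prec_w(\la_1,\ldots,\la_r)$, and combined with monotone convexity of $f$ this upgrades to $\sum_{j=1}^r f(\mu_j)\leq\sum_{j=1}^r f(\la_j)$, with equality when $\V$ spans the top-$r$ eigenspace of $\bSi$. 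Substituting back produces exactly the claimed formula \eqref{formula low rank}.

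The main obstacle I expect is this final step: stringing Schur--Horn with Cauchy interlacing to obtain weak majorization, and then invoking the convex-monotone version of the majorization inequality to certify that no non-commuting symmetric $\Wb$ of rank $r$ can beat the diagonal-in-eigenbasis choice. Everything preceding it is quadratic-form manipulation or scalar minimization, essentially mirroring the unconstrained derivation in Theorem~\ref{thm:independent}.
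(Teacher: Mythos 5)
Your proposal is correct, and it follows the paper's skeleton (the change of variables $\Wb=\bSi_{\x}^{1/2}\W\bSi_{\x}^{1/2}$, the same quadratic loss, inner minimization for fixed eigenvectors, then an outer optimization over the subspace), but it diverges at the crux in a genuine way. The paper parameterizes $\Wb=\Ub\Eb\Ub^\top$ with $\Eb$ a \emph{full symmetric} $r\times r$ block and minimizes over $\Eb$ in matrix form, obtaining $\Eb_\st=(M\Iden+(n+1)\bSb)^{-1}\bSb$ with $\bSb=\Ub^\top\bSi\Ub$; the partially-minimized risk then depends only on the \emph{eigenvalues} of $\bSb$, so Cauchy interlacing ($\lab_i\leq\la_i$) plus mere monotonicity of $\la\mapsto n\la^2/((n+1)\la+M)$ finishes the argument. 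You instead diagonalize $\Wb=\V\Db\V^\top$ from the outset, so after the scalar minimizations your residual depends on the \emph{diagonal entries} $\mu_j=\vb_j^\top\bSi\vb_j$ of $\V^\top\bSi\V$ rather than its spectrum, and you must pay for this with the extra Schur--Horn step, the chaining to weak majorization $(\mu_j)\prec_w(\la_1,\dots,\la_r)$, and the convexity of $f$ (correctly verified via your partial-fraction decomposition) to invoke the Tomi\'c--Weyl inequality. Both routes are sound; the paper's buys a lighter final step (no convexity needed, only monotonicity and interlacing), while yours reduces everything to scalar calculus early and makes the decoupling of the objective across eigendirections completely explicit. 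One cosmetic point: equality requires the columns of $\V$ to be the top-$r$ eigenvectors themselves (so that $\mu_j=\la_j$), not merely that $\V$ spans the top-$r$ eigenspace, since a rotated basis within that eigenspace would strictly decrease $\sum_j f(\mu_j)$ by strict convexity of $f$.
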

Note that $\tr{\bSi}=\sum_{i=1}^d\la_i$. Removing the rank constraint and considering noiseless data setting, this reduces to the following optimal risk
 %
$\Lcs=\sum_{i=1}^d\frac{\la_i+M}{n+1+M/\la_i}$. See Appendix~\ref{app:low-rank} for more details.

\noindent\textbf{Impact of LoRA:} Based on the above lemma, we consider the impact of LoRA for adapting the pretrained model to a new task distribution under jointly-diagonalizable old and new eigenvalues of $\bSi,~\bSi^{new}$, $(\la_i)_{i=1}^d,~(\la_i^{new})_{i=1}^d$. 
Consider adapting LoRA matrix to the combined key and value weights in attention, which reflects minimizing the population loss $\tilde\Lc(\WLR):=\Lc(\W+\WLR)$ in \eqref{obj gd} with fixed $\W$. 
Suppose $\tr{\bSi}=\tr{\bSi^{new}}=M$, $\sigma=0$ and $\W$ is jointly diagonalizable with $\bSi,~\bSi^{new}$, then LoRA's risk is upper-bounded by
\begin{align}
\min_{\rank{\WLR}\leq r}\tilde\Lc(\W_{lora})\leq
\min_{|\Ic|\leq r,\Ic\subset[d]}\left(\sum_{i\not\in\Ic}\frac{\lambda_i+M}{n+1+M/\la_i}+\sum_{i\in\Ic}\frac{\la^{new}_i+M}{n+1+M/\la^{new}_i}\right). \label{formula lora}
\end{align}
Note that, the right hand side is provided assuming the optimal LoRA-updated model $\WLR$ is also jointly diagonalizable with covariances $\bSi,~\bSi^{new}$, and $\W$.

\begin{figure}[!t]
\begin{subfigure}{0.325\linewidth}    
\centering
\includegraphics[height=0.7\linewidth]{figs/test_iid.pdf}
\caption{$\bSi_{\x}=\bSi_{\bt}=\Iden_d$ and $\sigma=0$}\label{fig:iid}
\end{subfigure}
\begin{subfigure}{0.325\linewidth}    
    \centering
    \includegraphics[height=.7\columnwidth]{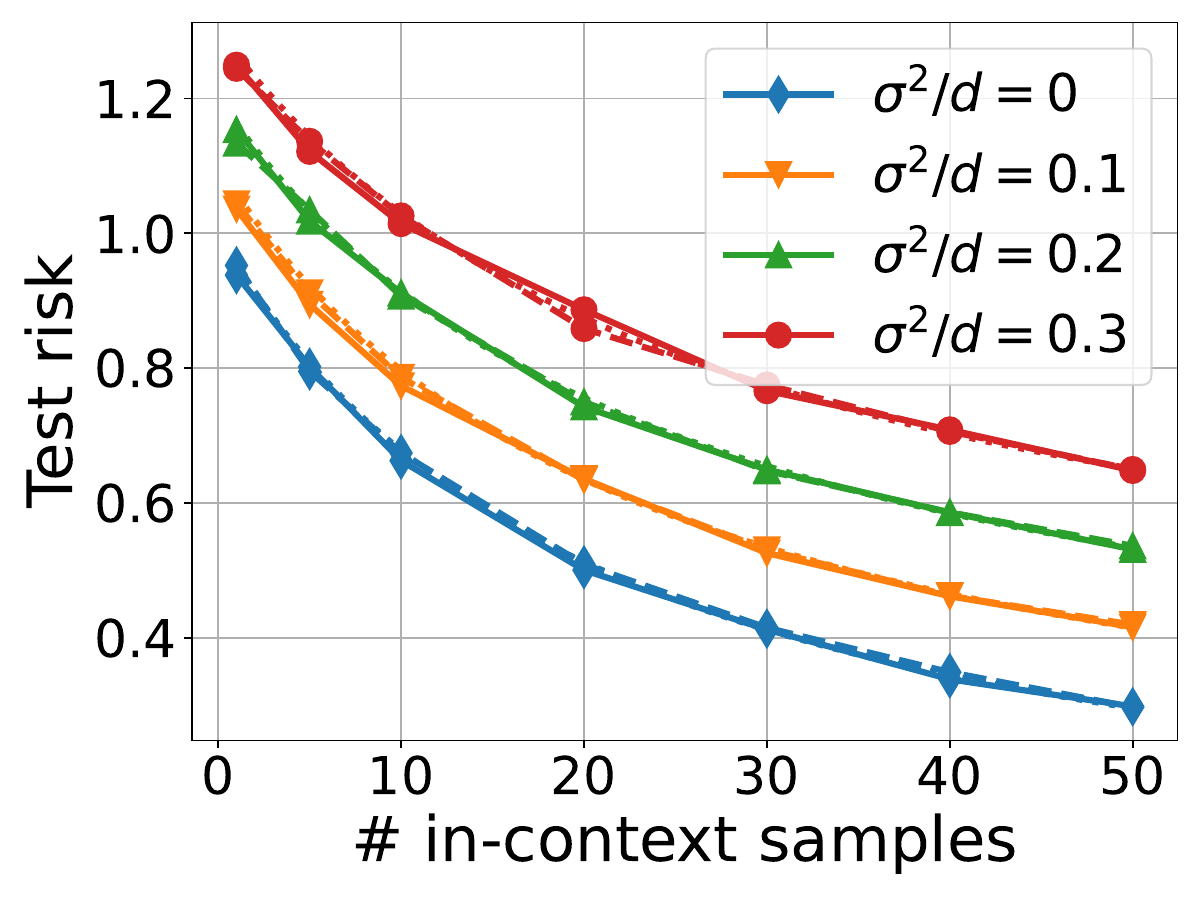}
\caption{Noisy label}\label{fig:noise}
\end{subfigure}
\begin{subfigure}{0.325\linewidth}    
\centering
\includegraphics[height=.7\columnwidth]{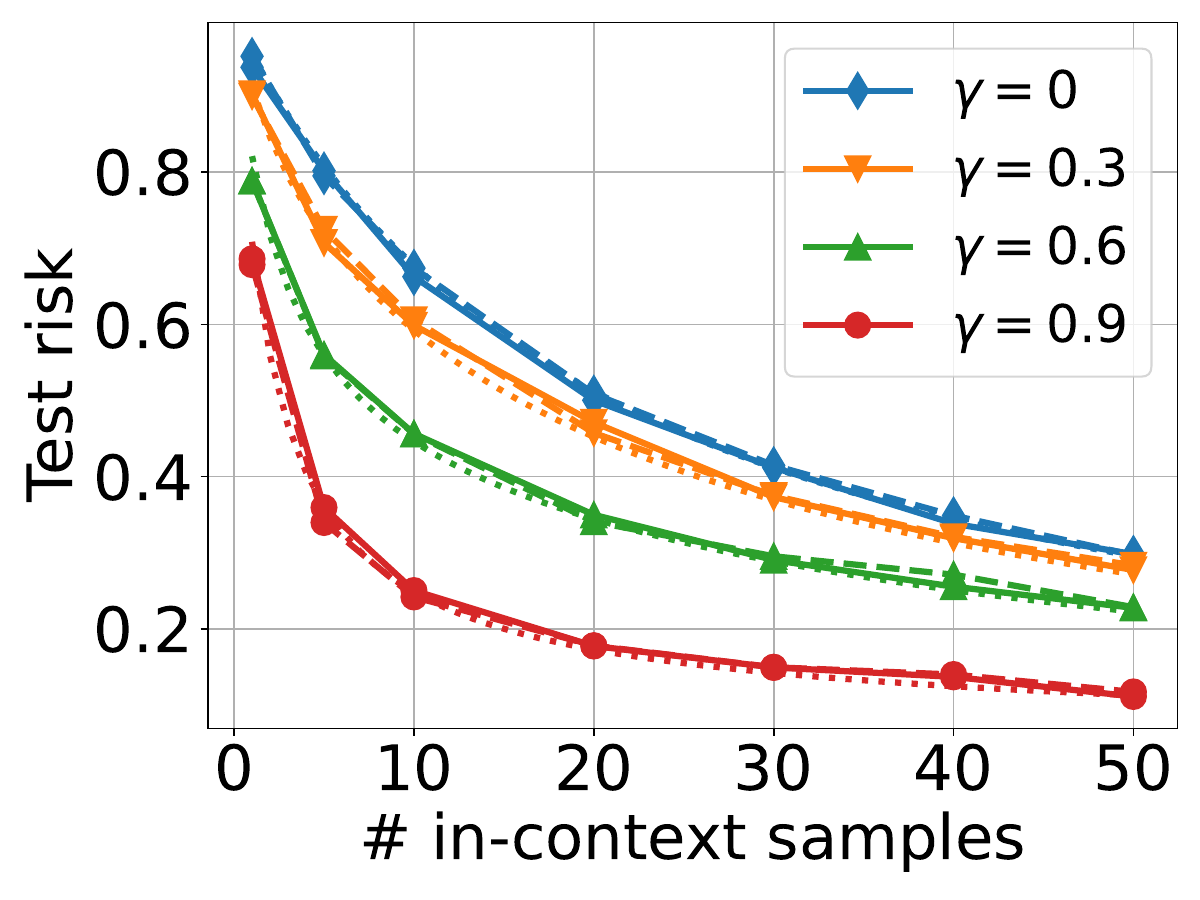}
\caption{Non-isotropic task}\label{fig:cov}
\end{subfigure}
\caption{Empirical evidence validates Theorem~\ref{thm:independent} and Proposition~\ref{lemma:eqv}. We train 1-layer linear attention and H3 models with prompts containing independent demonstrations following a linear model, and dotted curves are the theory curves following Eq.~\eqref{formula ind}. \textbf{(a):} We consider noiseless i.i.d. setting where $\bSi_{\x}=\bSi_{\bt}=\Iden_{d}$ and $\sigma=0$, with results presented in red (attention) and blue (H3) solid curves. \textbf{(b):} We conduct noisy label experiments by choosing $\sigma\neq0$. \textbf{(c):} Consider non-isotropic task by setting $\bSi_{\bt}=\gamma\onebb\onebb^\top+(1-\gamma)\Iden_d$. Solid and dashed curves in (b) and (c) represent attention and H3 results, respectively. The alignments in (a), (b) and (c) show the equivalence between attention and H3, validating Theorem~\ref{thm:independent} and Proposition~\ref{lemma:eqv}. More experimental details are discussed in Section~\ref{sec exp}.}\label{fig:independent}
\end{figure}

\section{Experiments}\label{sec exp}


We now conduct synthetic experiments 
to support our theoretical findings and further explore the behavior of different models of interest under different conditions. The experiments are designed to investigate various scenarios, including independent data, retrieval-augmented generation (RAG), task-feature alignment, low-rank parameterization, and LoRA adaption. 

\textbf{Experimental setting.} We train 1-layer attention and H3 models for solving the linear regression ICL. As described in Section~\ref{sec:setup}, we consider meta-learning setting where task parameter $\bt$ is randomly generated for each training sequence. In all experiments, we set the dimension $d=20$. 
Depending on the in-context length ($n$), different models are trained to make in-context predictions.  We train each model for $10000$ iterations with batch size $128$ and Adam optimizer with learning rate $10^{-3}$. Since our study focuses on the optimization landscape, and experiments are implemented via gradient descent, we repeat $20$ model trainings from different initialization and results are presented as the minimal test risk among those $20$ trails. 
{In all the plots, theoretical predictions are obtained via the corresponding formulae presented in Section~\ref{sec:main} and the test risks are normalized by the dimension $d$.}

\begin{figure}[!t]
\centering
\centering
\begin{subfigure}{0.244\linewidth}
    \includegraphics[height=0.75\linewidth]{figs/test_rag_fixed.pdf}
\caption{RAG}\label{fig:rag loss}
\end{subfigure}
\begin{subfigure}{0.244\linewidth}
    \includegraphics[height=0.75\linewidth]{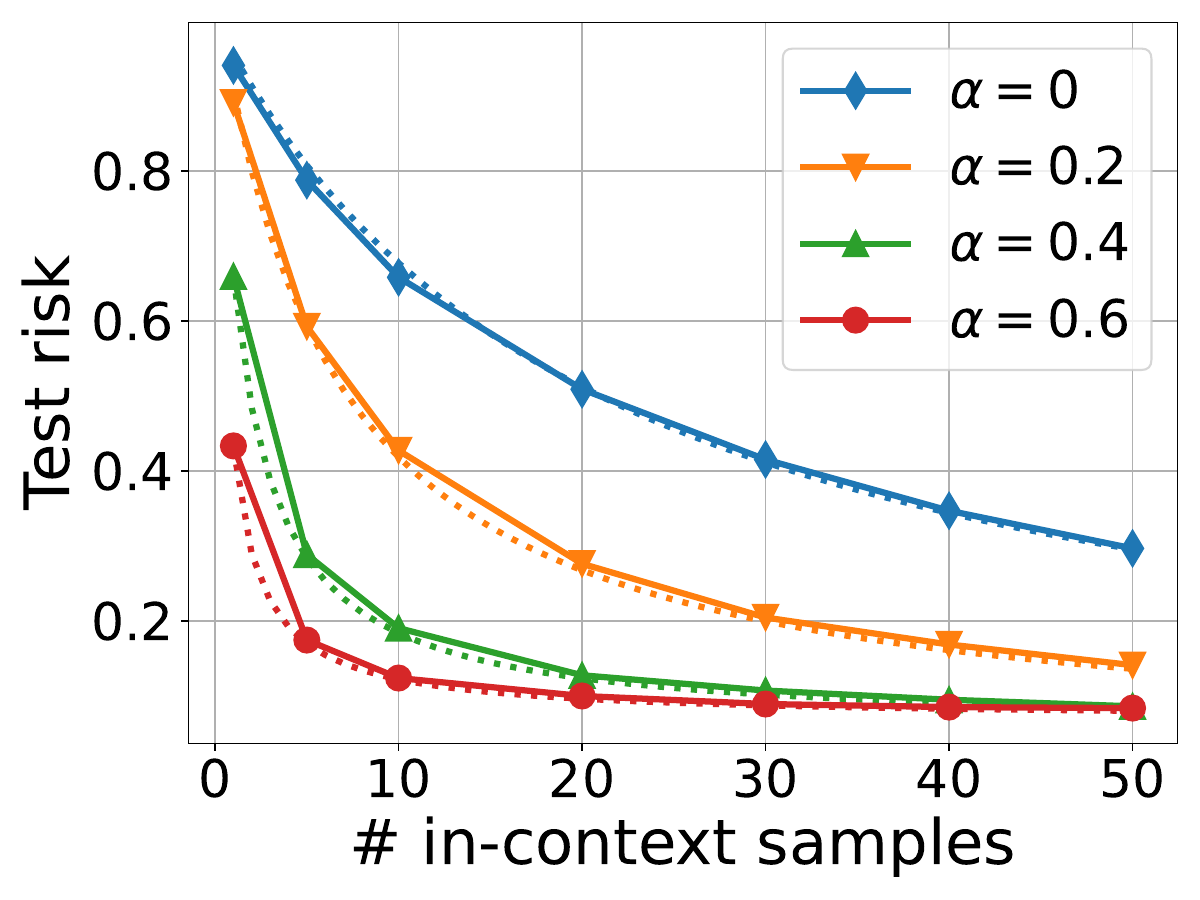}
\caption{Task-feature alignment}\label{fig:corr feature}
\end{subfigure}
\begin{subfigure}{0.244\linewidth}
    \includegraphics[height=0.75\linewidth]{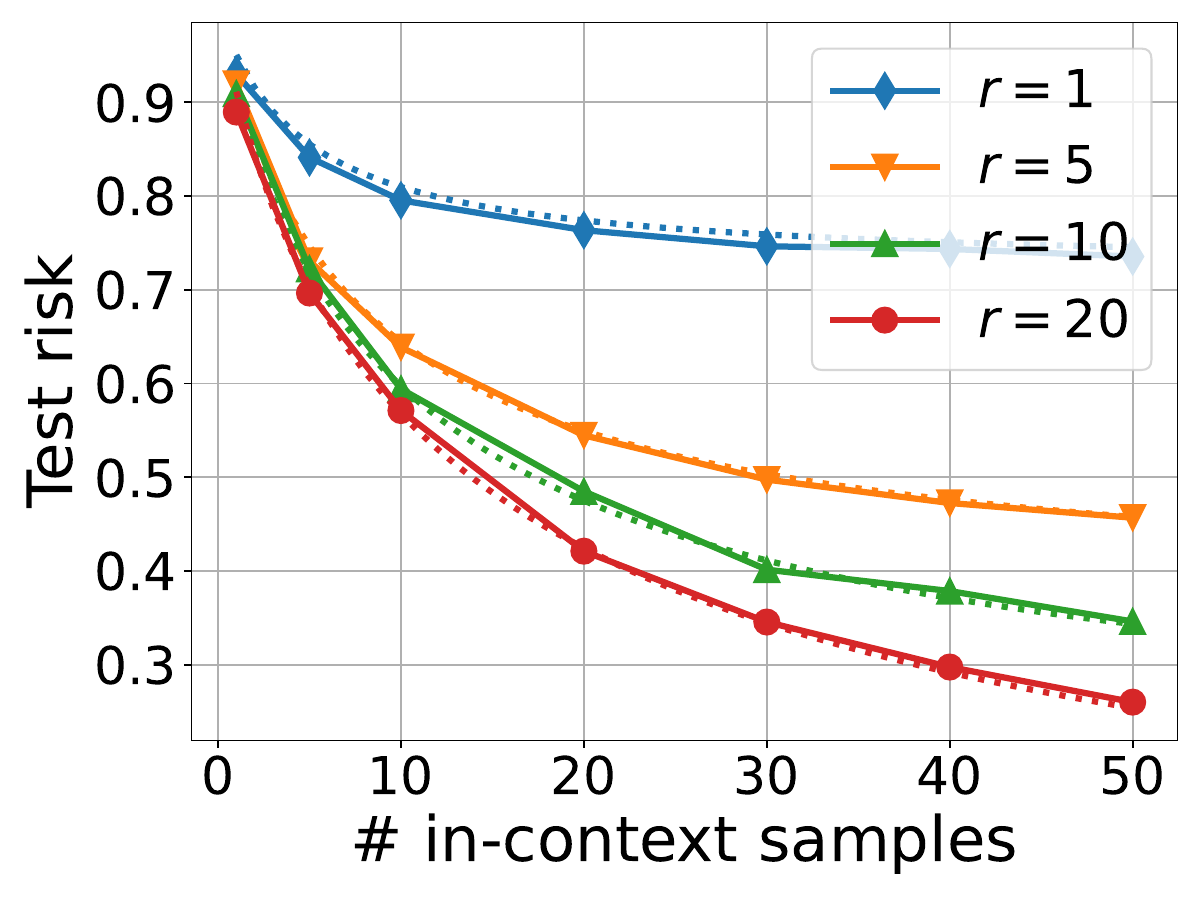}
\caption{Low-rank attention}\label{fig:low rank d20}
\end{subfigure}
\centering
\centering
\begin{subfigure}{0.244\linewidth}    
\centering
    \includegraphics[height=.75\columnwidth]{figs/lora_d20_2.pdf}
\caption{LoRA adaptation}\label{fig:lora}
\end{subfigure}
\caption{Distributional alignment and low-rank parameterization experiments. 
\textbf{(a)} and \textbf{(b)} show the ICL results using data generated via \eqref{data rag} and \eqref{data tfa}, respectively, by changing $\alpha$ from $0$ to $0.6$. In \textbf{(c)}, we train low-rank linear attention models by setting $\W_k,\W_q\in\R^{(d+1)\times r}$ and in \textbf{(d)}, we apply the low-rank LoRA adaptor, $\WLR:=\W_{\text{up}}\W_{\text{down}}^\top$ where $\W_{\text{up}},\W_{\text{down}}\in\R^{(d+1)\times r}$, to pretrained linear attention models and adjust the LoRA parameters under different task distribution. Solid and dotted curves correspond to the linear attention and  theoretical results (c.f.~Section~\ref{sec:main}), respectively, and the alignments validate our theorems in Section~\ref{sec:main}.  
More experimental details are discussed in Section~\ref{sec exp}.}
\end{figure}

\noindent$\bullet$ \textbf{Equivalence among $\Lc^\st_{\gd}$, $\Lc_{\att}^\st$ and $\Lc_{\ssm}^\st$ (Figure~\ref{fig:independent}).}
To verify Proposition~\ref{lemma:eqv} as well as Theorem~\ref{thm:independent}, we run random linear regression instances where in-context samples are generated obeying \eqref{data ind}. Fig.~\ref{fig:iid} is identical to Fig.~\ref{intro fig:iid} where we set $\bSi_{\x}=\bSi_{\bt}=\Iden_d$ and $\sigma=0$. In Fig.~\ref{fig:noise}, set $\bSi_{\x}=\bSi_{\bt}=\Iden$ and vary noise level $\sigma^2$ from $0$ to $0.3\times d$. In Fig.~\ref{fig:cov}, we consider noiseless labels, $\sigma=0$, isotropic feature distribution $\bSi_{\x}=\Iden_d$ and set task covariance to be $\bSi_{\bt}=\gamma\onebb\onebb^\top+(1-\gamma)\Iden_d$ by choosing $\gamma$ in $\{0, 0.3, 0.6, 0.9\}$. Note that in Fig.~\ref{fig:cov}, we train a sufficient number of models (greater than $20$) to ensure the optimal model is obtained. In all the figures, solid and dashed curves correspond to the ICL results from training 1-layer $\att$ and $\ssm$ models, respectively, and dotted curves are obtained from \eqref{formula ind} in Theorem~\ref{thm:independent}. The alignment of solid, dashed and dotted curves validates our Proposition~\ref{lemma:eqv} and Theorem~\ref{thm:independent}.

\noindent$\bullet$ \textbf{Distributional alignment experiments (Figs.~\ref{fig:rag loss}\&\ref{fig:corr feature}).}
In Figs.~\ref{fig:rag loss} and \ref{fig:corr feature}, we generate RAG and task-feature alignment data following \eqref{data rag} and \eqref{data tfa}, respectively, by setting $\sigma=0$ and varying $\alpha$ from $0$ to $0.6$. Attention training results are displayed in solid curves, and we generate theory curve (dotted) via the $\Lcs$ formula as described in \eqref{formula rag app 1} in Appendix~\ref{app:rag} and \eqref{formula task feature app 1} in Appendix~\ref{app:task feature}.
The empirical alignments corroborate Theorems~\ref{thm:rag exact} and \ref{thm:feature app}, further confirming that Proposition~\ref{lemma:eqv} is applicable to a broader range of real-world distributional alignment data.

\noindent$\bullet$ \textbf{Low-rank (Fig.~\ref{fig:low rank d20}) and LoRA (Fig.~\ref{fig:lora}) experiments.} We also run simulations to verify our theoretical findings in Section~\ref{sec:low rank}. Consider the independent data setting as described in \eqref{data ind}. In Fig.~\ref{fig:low rank d20}, we set 
$\bSi_{\x}=\Iden_d$, $\sigma=0$ and task covariance to be diagonal with diagonal entries $c[1~{2}^{-1}~\cdots~d^{-1}]^\top$ for some normalization constant $c=d/\sum_{i=1}^di^{-1}$, and parameterize the attention model using matrices $\WK,\WQ\in\R^{(d+1)\times r}$ and vary $r$ across the set $\{1,5,10,20\}$. Results show that empirical (solid) and theoretical (dotted, c.f.~\eqref{formula low rank}) curves overlap. In Fig.~\ref{fig:lora}, we implement two phases of training. \textit{Phase 1:} Setting $\bSi_{\x}=\bSi_{\bt}=\Iden_d$ and $\sigma=0$, we pretrain the model with full rank parameters and obtain weights $\WKh,\WQh,\WVh\in\R^{(d+1)\times (d+1)}$. \textit{Phase 2:} We generate new examples with task covariance $\bSi_{\bt}$ being a diagonal matrix with diagonal entries $c'[{2}^{-1}~2^{-2}~\cdots~2^{-d}]^\top$ for some normalization constant $c'=d/\sum_{i=1}^d2^{-i}$. Given the rank restriction $r$, we train additional LoRA parameters $\W_{\text{up}},\W_{\text{down}}\in\R^{(d+1)\times r}$
where $\WLR:=\W_{\text{up}}\W_{\text{down}}^\top$ and \eqref{att} becomes $\att(\Z)=(\Z(\WQh\WKh^\top+\W_{\text{up}}\W_{\text{down}}^\top)\Z^\top)\Z\WVh$. Fig.~\ref{fig:lora} presents the results after two phases of training where dotted curves are drawn from the right hand side of \eqref{formula lora} directly. Here, note that since $\bSi,\bSi^{new}$ are diagonal, the right hand side of \eqref{formula lora} returns the exact optimal risk of  LoRA and the alignments verify it. 

\begin{figure}[!t]
\centering
\begin{subfigure}{0.244\linewidth}
    \includegraphics[height=0.75\linewidth]{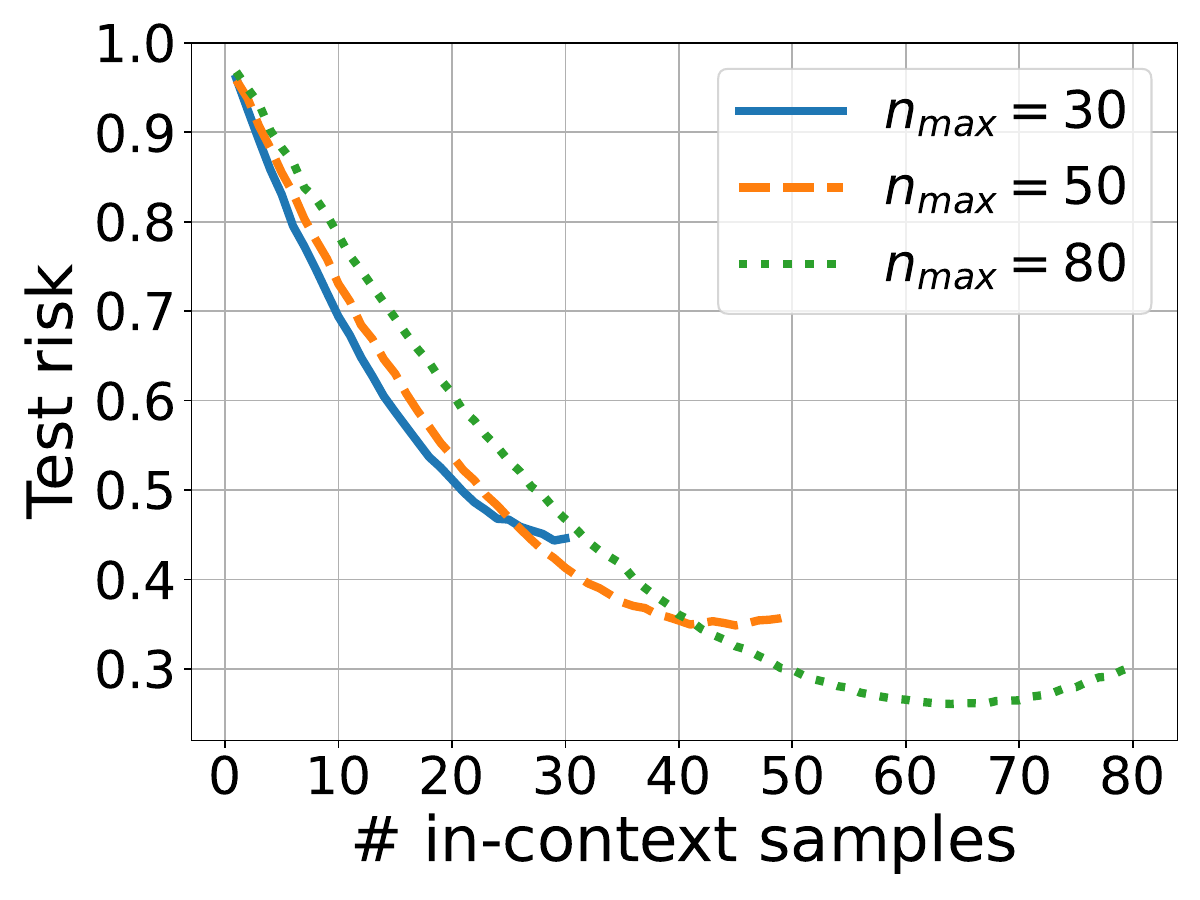}
\caption{Linear attention}\label{fig:avg att}
\end{subfigure}
\begin{subfigure}{0.244\linewidth}    
    \includegraphics[height=.75\columnwidth]{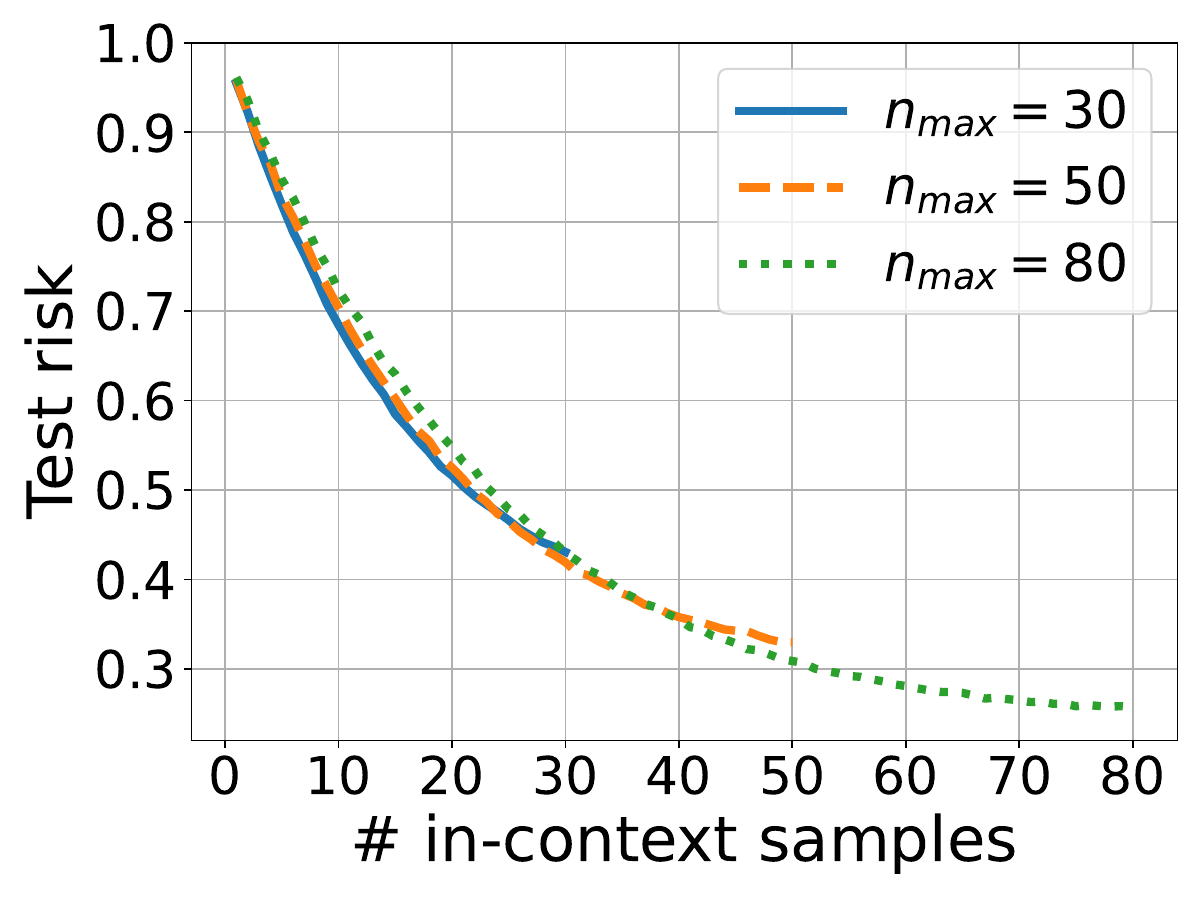}
\caption{H3}\label{fig:avg h3}
\end{subfigure}
\begin{subfigure}{0.244\linewidth}    
    \includegraphics[height=.75\columnwidth]{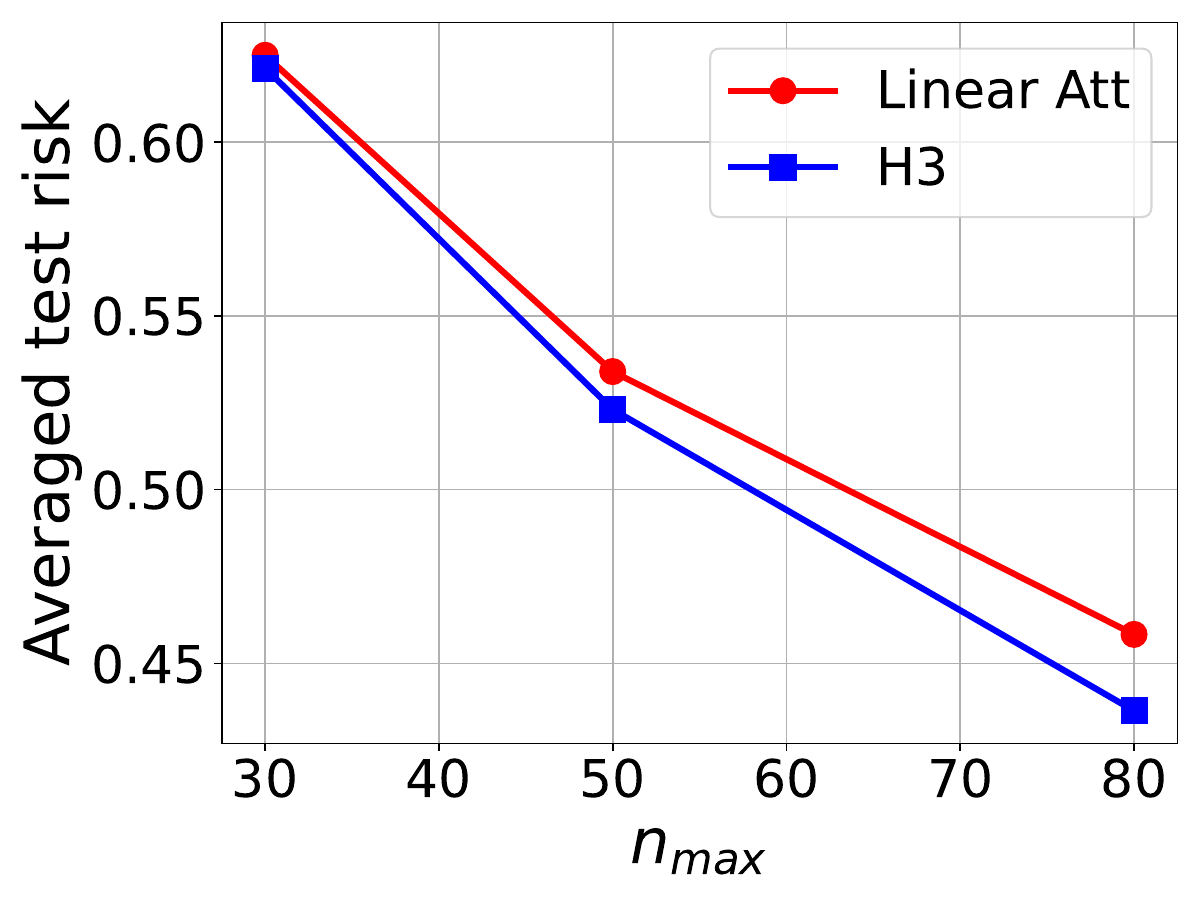}
\caption{Averaged risk}\label{fig:avg risk}
\end{subfigure}
\begin{subfigure}{0.244\linewidth}    
    \includegraphics[height=.75\columnwidth]{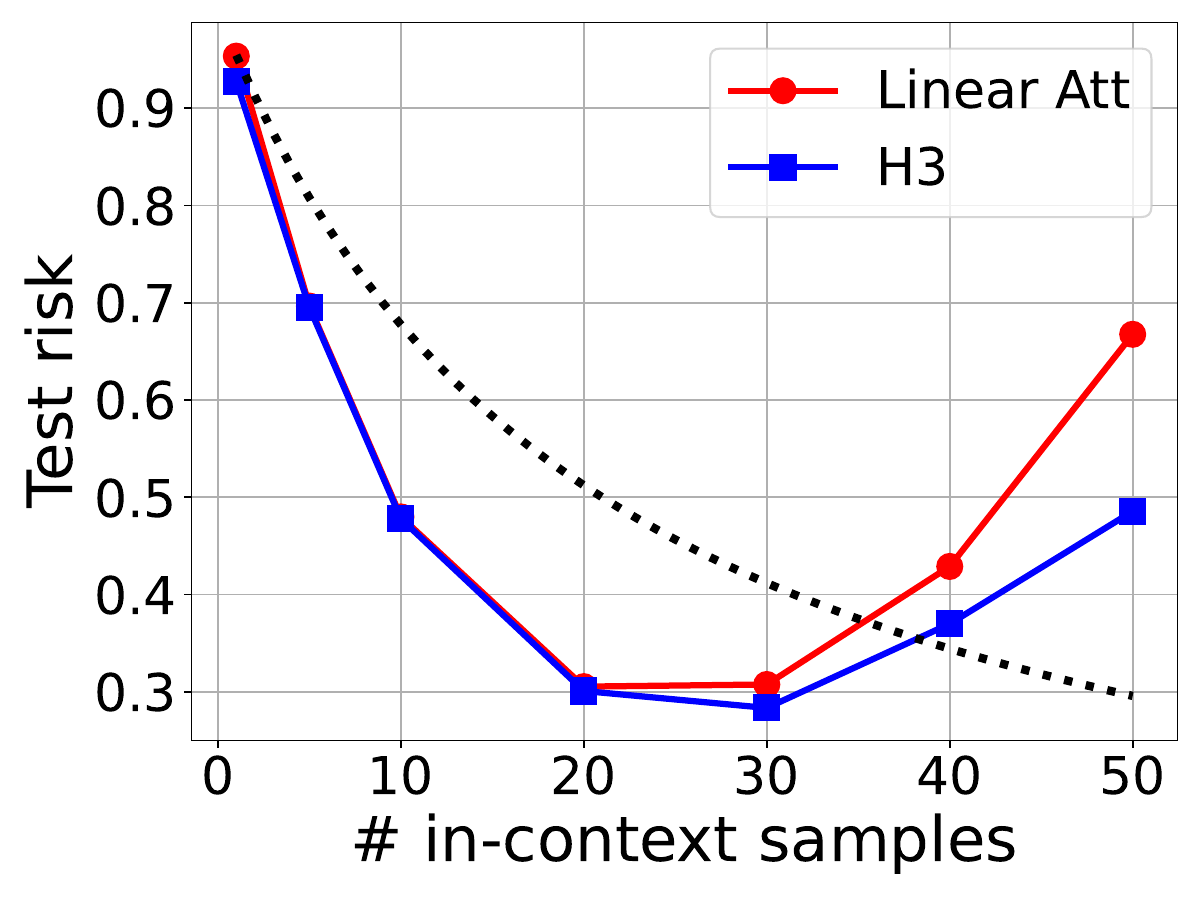}
\caption{Evolving $\bt$}\label{fig:evolve}
\end{subfigure}

\caption{Further comparison for linear attention and H3. In \textbf{(a)} and \textbf{(b)}, given maximum context lengths $n_{\max}$, we train linear attention and H3 models to minimize the average loss across all positions $n$ from $1$ to $n_{\max}$. Averaged test risks are presented in \textbf{(c)}. In \textbf{(d)}, the task vector $\bt$ evolves gradually over the context positions $i\leq n$ via $\bt_i=(i/n)\bt_1+(1-i/n)\bt_2$. In both scenarios, H3 outperforms linear attention benefiting from its additional convolutional filter (c.f.~$\fb$ in \eqref{ssm}). More experimental details are discussed in Section~\ref{sec exp}.}
\label{fig:add exp}
\end{figure}

\noindent$\bullet$ \textbf{H3 outperforms linear attention (Figure~\ref{fig:add exp}).} Until now, our analysis has established the equivalence between linear attention and H3 models in solving linear ICL problem. Furthermore, we also investigate settings where H3 could outperform linear attention due to its sample weighting ability. In Figs.~\ref{fig:avg att} and \ref{fig:avg h3}, instead of training separate models to fit the different context lengths, we train a single model with fixed max-length $n_{\max}$ and loss is evaluated as the average loss given samples from $1$ to $n_{\max}$. Such setting has been wildly studied in the previous ICL work \citep{garg2022can,akyrek2023what,li2023transformers}. We generate data according to \eqref{data ind} with $\bSi_{\x}=\bSi_{\bt}=\Iden_d$ and $\sigma=0$, and train 1-layer linear attention (Fig.~\ref{fig:avg att}) and H3 (Fig.~\ref{fig:avg h3}) models with different max-lengths $n_{\max}=30,50,80$. Comparison between Fig.~\ref{fig:avg att} and \ref{fig:avg h3} shows that 1-layer attention and H3 implement different algorithms in solving the averaged linear regression problem and H3 is more consistent in generalizing to longer context lengths. In Fig.~\ref{fig:avg risk}, we plot the averaged risks for each model and H3 outperforms linear attention. Furthermore, in Fig.~\ref{fig:evolve}, we focus on the setting where in-context examples are generated using evolving task vector $\bt$. Specifically, consider that each sequence corresponds to two individual task parameters $\bt_1\sim\Nc(0,\Iden_d)$ and $\bt_2\sim\Nc(0,\Iden_d)$. Then the $i$'th sample is generated via $\x_i\sim\Nc(0,\Iden_d)$ and $y_i=\bt_i^\top\x_i$ where $\bt_i=\lambda_i\bt_1+(1-\lambda_i)\bt_2$ and $\lambda_i=i/n$. The results are reported in Fig.~\ref{fig:evolve} which again shows that H3 achieves better performance compared to linear attention, as H3 may benefit from the additional convolutional filter (c.f. $\fb$ in \eqref{ssm}). Here, dotted curve represent the theoretical results under i.i.d. and noiseless setting, derived from Corollary~\ref{corol:iid}. 
\section{Related Work}

There is growing interest in understanding the mechanisms behind ICL~\citep{brown2020language,liu2023pre, rae2021scaling} in large language models (LLMs) due to its success in continuously enabling novel applications for LLMs~\citep{team2023gemini,gpt4_techreport,touvron2023_llama}. 
%
Towards this, \citet{xie2022an} explain ICL by language model's ability to perform implicit Bayesian inference where, under specific assumptions on the pre-training data distribution, the model infers a shared latent concept among the in-context examples and leverages the concept to make a prediction. \citet{muller2021transformers,hollmann2022tabpfn,muller2023pfns4bo} introduce prior-data fitted network (PFN) to approximate Bayesian inference on synthetic datasets and use it to perform downstream tasks such as tabular dataset classification. On the other hand, \citet{olsson2022context} posit induction heads as the key mechanism enabling ICL in Transformers. \citet{park2024can} study how various distributional properties of training data aid in the emergence of ICL in Transformers.

In the previous work, \citet{garg2022can} explored ICL ability of Transformers. In particular, they considered in-context prompts where each in-context example is labeled by a target function from a given function class, including linear models. A number of works have studied this and related settings to develop a theoretical understanding of ICL \citep{von2023uncovering,gatmirycan,collins2024context,lin2024dual,li2024dissecting,bai2024transformers,akyrek2023what,zhang2023trained,du2023can}.
\citet{akyrek2023what} focus on linear regression and provide a construction of Transformer weights that can enable a single step of GD based on in-context examples. They further show that Transformers trained on in-context prompts exhibit behaviors similar to the models recovered via explicit learning algorithm on the in-context examples in a prompt. Along the similar line, \citet{von2023transformers} provide a construction of weights in linear attention-only Transformers that can emulate GD steps on in-context examples for a linear regression task. Interestingly, they find similarity between their constructed networks and the networks resulting from training on in-context prompts corresponding to linear regression tasks. Similar to this line of work, \citet{dai2023gpt} argue that pre-trained language models act as meta-optimizer which utilize attention to apply meta-gradients to the original language model based on the in-context examples. Focusing on various NLP tasks, they further connect it to a specific form of explicit fine-tuning that performs gradient updates to the attention-related parameters. Inspired by the connection between linear attention and GD, they developed a novel attention mechanism that mirrors the behavior of GD with momentum. 
Beyond Transformers, existing work \citep{lee2023exploring,zucchet2023gated,grazzi2024mamba} demonstrate that other model architectures, such as SSM and RNNs, are also capable of in-context learning (ICL).  



Building on these primarily empirical studies, \citet{zhang2024trained, mahankali2024one, ahn2024transformers,duraisamy2024finite} focus on developing a theoretical understanding of Transformers trained to perform ICL. For single-layer linear attention model trained on in-context prompts for random linear regression tasks with isotropic Gaussian features and isotropic Gaussian weight vectors, \citet{mahankali2024one, ahn2024transformers} show that the resulting model implements a single step of GD on in-context examples in a test prompt, thereby corroborating the findings of \citep{von2023transformers}. They also show that the learned model implements a PGD step, when faced with anisotropic Gaussian features, with \citet{mahankali2024one} also considering anisotropic Gaussian weight vectors. \asrnote{Connect with our work.}\citet{ahn2024transformers} further study multi-layer model and show that the trained model can implement a generalization of GD++ algorithm, supporting an empirical observation in~\cite{von2023transformers}. On the other hand, \citet{mahankali2024one} extend their single-layer setup to consider suitable non-linear target functions, showing that learned Transformer again implements a single step of GD on lineare regression objective.
%
%
\asrnote{connect this to our setup and results ...Zhang et al. allow for anisotropic Gaussian feature vectors. – weight vector is still isotropic Gaussian…Akyurek et al. and Oswald et al. – both assume isotropic Gaussian features vectors and isotropic Gaussian parameter/weight vectors}
For a single-layer linear attention model, \citet{zhang2024trained} study the optimization dynamics of gradient flow while training such a model on in-context prompts for random linear regression tasks. Despite the non-convexity of the underlying problem, they show the convergence to the global minimum of the population objective. Similar to \citet{mahankali2024one, ahn2024transformers}, they show that the trained model implements a single step of GD and PGD for isotropic and anisotropic Gaussian features, respectively. In addition, they also characterize the test-time prediction error for the trained model while highlighting its dependence on train and test prompt lengths. 
Interestingly, \citet{zhang2024trained} further explore the effect of various distributional shifts, including the shift in task weight vector distributions between train and test time as well as the covariate shifts among train and test in-context prompts. Interestingly, they find that while linear-attention models are robust to most shifts, they exhibit brittleness to the covariate shifts. 

While our work shares similarities with this line of works, as discussed in our contributions in the introduction, we expand the theoretical understanding of ICL along multiple novel dimensions, which includes the first study of LoRA adaptation for ICL in the presence of a distributional shift. Furthermore, we strive to capture the effect of retrieval augmentation~\citep{lewis2020retrieval, nakano2021webgpt} on ICL through our analysis. Retrieval augmentation allows for selecting most relevant demonstration out of a large collection for a test instance, e.g., via a dense retrieval model~\citep{izacard2023atlas}, which can significantly outperform the typical ICL setup where fixed task-specific demonstrations are provided as in-context examples~\citep{wang2022training, basu2023statistical}. Through a careful modeling of retrieval augmentation via correlated design, we show that it indeed has a desirable amplification effect where the effective number in-context examples becomes larger with higher correlation which corresponds to preforming a successful retrieval of query-relevant demonstrations in a practical retrieval augmented setup.
%








Recently, state space models (SSMs)~\citep{gu2021combining, gu2021efficiently, fu2023hungry, gu2023mamba} have appeared as potential alternatives to Transformer architecture, with more efficient scaling to input sequence length. Recent studies demonstrate that such SSMs can also perform ICL for simple non-language tasks~\citep{park2024can, grazzi2024mamba} as well as complex NLP tasks~\citep{grazzi2024mamba}. That said, a rigorous theoretical understanding of ICL for SSMs akin to \cite{zhang2024trained, mahankali2024one, ahn2024transformers} is missing from the literature. In this work, we provide the first such theoretical treatment for ICL with SSMs. Focusing on H3 architecture~\citep{fu2023hungry}, we highlight its advantages over linear attention in specific ICL settings.












\section{Discussion}
In this work, we revisited the loss landscape of in-context learning with 1-layer sequence models. We have established a general connection between ICL and gradient methods that accounts for correlated data, non-attention architectures (specifically SSMs), and the impact of low-rank parameterization including LoRA adaptation. Our results elucidate two central findings: (1) The functions learned by different sequence model architectures exhibit a strong degree of \emph{universality} and (ii) \emph{Dataset and prompt design}, such as RAG, can substantially benefit ICL performance.

\noindent\textbf{Future directions and limitations.} The results of this work fall short of being a comprehensive theory for ICL in LLMs and can be augmented in multiple directions. First, while the exact equivalence between H3 and linear attention is remarkable, we should examine whether it extends to other SSMs. Secondly, while empirically predictive, our RAG and LoRA analyses are not precise and fully formal. Thirdly, it is desirable to develop a deeper understanding of multilayer architectures and connect to iterative GD methods as in \citep{ahn2024transformers,von2023transformers}. Finally, we have studied the population risk of ICL training whereas one can also explore the sample complexity of pretraining \citep{wu2023many,yuelu}. Moving beyond the theoretically tractable setup of this work, our simplified models are trained on in-context prompts from random initialization. Therefore, this theoretical study doesn't address more challenging in-context learning tasks, such as question answering, where both in-context demonstration and general knowledge from pretraining are required.  Future work in this area could also shed light on how certain contexts might elicit undesirable behaviors acquired by an LLM during pretraining, an aspect not covered in our current analysis. This work also studies a theoretical model for retrieval augmentation-based ICL. In a real-life retrieval augmentation-based ICL, one needs to account for the quality of the collection of the retrievable demonstrations and its (negative) impacts on the final predictions.

\subsection*{Acknowledgements}

This work was supported in part by the National Science Foundation grants CCF-2046816, CCF-2403075, the Office of Naval Research award N000142412289, an Adobe Data Science Research award, and a gift by Google Research.

\bibliography{ref}
\bibliographystyle{plainnat}

\appendix
\newpage


\addcontentsline{toc}{section}{Appendix} 
\part{Appendix} 
\parttoc 
\section{Equivalence among Gradient Descent, Attention, and State-Space Models}
In this section, we present the proofs related to Section~\ref{sec:setup}. 
Recap that given data
\begin{align*}
&\X=[\x_1~\cdots~\x_n]^\top\in\R^{n\times d},\\
&\bxi=[\xi_1~\cdots~\xi_n]^\top\in\R^n,\\
&\y=[y_1~\cdots~y_n]^\top=\X\bt+\bxi\in\R^n,\\
&
\Z_0=[\z_1~\dots~\z_n~\zerbb_{d+1}]^\top=\begin{bmatrix}
    \x_1&\dots&\x_n&\zerbb_d\\
    y_1&\dots&y_n&0
\end{bmatrix}^\top \in \R^{(n+1) \times (d+1)},
\end{align*}
and corresponding prediction functions
\begin{subequations}\label{pred all app}
\begin{align}
&g_{\gd}(\Z)=\x^\top\W\X^\top\y,\label{pred gd app}\\
&g_{\wpgd}(\Z)=\x^\top\W\X^\top(\bom\odot\y),\label{pred wpgd app}\\
&g_{\att}(\Z)=(\z^\top\WQ\WK^\top\Zm^\top)\Zm\WV\vb,\label{pred att app}\\
&g_{\ssm}(\Z)=\left((\z^\top\WQ)^\top\odot((\Zm\WK\odot\Zm\WV) \ast\fb)_{n+1}\right)\vb,\label{pred ssm app}
\end{align}
\end{subequations}
we have objectives 
\begin{subequations}\label{obj all app}
\begin{align}
\min_{\W}\Lc_{\gd}(\Wc)\quad&\text{where}\quad\Lc_{\gd}(\Wc)=
\E\left[(y-g_{\gd}(\Z))^2\right]\label{obj gd app},\\
\min_{\W,\bom}\Lc_{\wpgd}(\Wc)\quad&\text{where}\quad\Lc_{\wpgd}(\Wc)=
\E\left[(y-g_{\wpgd}(\Z))^2\right]\label{obj wpgd app},\\
\min_{\WK,\WQ,\WV,\vb}\Lc_{\att}(\Wc)\quad&\text{where}\quad\Lc_{\att}(\Wc)=
\E\left[(y-g_{\att}(\Z))^2\right]\label{obj att app},\\
\min_{\WK,\WQ,\WV,\vb,\fb}\Lc_{\ssm}(\Wc)\quad&\text{where}\quad\Lc_{\ssm}(\Wc)=
\E\left[(y-g_{\ssm}(\Z))^2\right].\label{obj ssm app}
\end{align}
\end{subequations}
Here, the expectation is over the randomness in $(\x_i,\xi_i)_{i=1}^{n}$ and $\bt$, and the search space for $\W$ is $\R^{d\times d}$, for $\bom$ is $\R^{n}$, for $\WK,\WQ,\WV$ is $\R^{(d+1)\times(d+1)}$, for $\vb$ is $\R^{d+1}$, and for $\fb$ is $\R^{n+1}$. 

\subsection{Proof of Proposition~\ref{lemma:eqv}}\label{app:proof eqv}
Consider the problem setting as discussed in Section~\ref{sec:setup}, 
Proposition~\ref{lemma:eqv} can be proven by the following two lemmas.
\begin{lemma}\label{lemma gd=att}
Suppose Assumptions~\ref{assum:odd x beta} and \ref{assume:noise} hold. Then, given the objectives \eqref{obj gd app} and \eqref{obj att app}, we have
\[
\min_{\WQ,\WK,\WV,\vb}\Lc_{\att}(\Wc)=\min_{\W}\Lc_{\gd}(\Wc).
\]
\end{lemma}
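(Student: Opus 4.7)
\textbf{Plan for the proof of Lemma~\ref{lemma gd=att}.}
The goal is to show that the optimal attention loss equals the optimal PGD loss. The plan is to decompose $g_{\att}$ into a ``PGD part'' matching \eqref{pred gd app} plus ``junk terms'' that are forced to be orthogonal to $y$ and to the PGD part by Assumptions~\ref{assum:odd x beta} and~\ref{assume:noise}. I will then establish the identity $\Lc_\att(\Wc) = \Lc_\gd(\W) + \text{(nonneg.~junk)}$, which yields both inequalities $\min \Lc_\att \le \min \Lc_\gd$ and $\min \Lc_\att \ge \min \Lc_\gd$.

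\emph{Step 1: Block decomposition.} Write
\[
\WQ\WK^\top = \begin{bmatrix} \A & \bb \\ \cb^\top & d_{\circ} \end{bmatrix}, \qquad \WV\vb = \begin{bmatrix} \ub \\ v \end{bmatrix},
\]
with $\A \in \R^{d\times d}$ and $\bb,\ub \in \R^d$, $v\in\R$. Since $\z=[\x;0]$ and the last row of $\Zm$ is zero, the parameters $\cb,d_{\circ}$ drop out. A direct computation using \eqref{pred att app} yields
\[
g_\att(\Z) = T_1 + T_2 + T_3 + T_4,
\]
where $T_1 = \sum_{i=1}^n(\x^\top\A\x_i)(\x_i^\top\ub)$, $T_2 = v\,\x^\top\A\X^\top\y$, $T_3 = \x^\top\bb\,\ub^\top\X^\top\y$, $T_4 = v(\x^\top\bb)\sum_{i=1}^n y_i^2$. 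The key observation is $T_2+T_3 = \x^\top\W_\att\X^\top\y = g_\gd$ with $\W_\att := v\A + \bb\ub^\top$, while $T_1,T_4$ are ``non-PGD'' remainders.

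\emph{Step 2: Orthogonality via odd moments.} The crux is to prove
\[
\E[y\,T_1] = \E[y\,T_4] = 0 \quad\text{and}\quad \E[T_1\,g_\gd] = \E[T_4\,g_\gd] = 0.
\]
For each expression, I expand $y = \x^\top\bt+\xi$ and $y_i = \x_i^\top\bt+\xi_i$. Every surviving monomial is a product of entries of $(\x,\x_i,\bt)$ times noise terms. Using Assumption~\ref{assume:noise}, all noise-containing terms reduce either to a factor $\E[\xi_i]=0$ or to $\E[\xi_i^2]=\sigma^2$ multiplying a pure $(\x,\x_i,\bt)$ monomial. Careful bookkeeping shows that in every surviving term the total order in $(\x,\x_i,\bt)$ is odd, so Assumption~\ref{assum:odd x beta} kills them. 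Expanding the loss then gives the clean identity
\[
\Lc_\att(\Wc) \;=\; \Lc_\gd(\W_\att) \;+\; \E[(T_1+T_4)^2].
\]

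\emph{Step 3: Two inequalities.} Since $\E[(T_1+T_4)^2]\ge 0$, we immediately have $\Lc_\att(\Wc)\ge \Lc_\gd(\W_\att)\ge \min_{\W}\Lc_\gd(\W)$, hence $\min\Lc_\att\ge\min\Lc_\gd$. For the reverse, given any $\W\in\R^{d\times d}$ I pick $\A=\W$, $v=1$, $\bb=\zerbb_d$, $\ub=\zerbb_d$; then $\W_\att = \W$ and $T_1=T_4=0$, so $\Lc_\att(\Wc)=\Lc_\gd(\W)$. Taking the infimum over $\W$ gives $\min\Lc_\att\le\min\Lc_\gd$. Combining yields equality.

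\emph{Main obstacle.} The only real work is Step~2: the combinatorial moment counting to check that every cross-term is killed by an odd-order cross moment. The argument is mechanical but must be done case-by-case on the expansions of $y^2$, $y\cdot T_1$, $y\cdot T_4$, $T_1\cdot g_\gd$, and $T_4\cdot g_\gd$, separating the $\bt$-contributions (handled by Assumption~\ref{assum:odd x beta}) from the $\xi$-contributions (handled by Assumption~\ref{assume:noise}, including the $\xi_i^2$ subcases where one must verify that the residual $(\x,\x_i,\bt)$-monomial still has odd total order).
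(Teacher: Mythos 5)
Your proposal is correct and follows essentially the same route as the paper's proof: the identical block decomposition of $\WQ\WK^\top$ and $\WV\vb$, the same splitting of $g_{\att}$ into a PGD term $\x^\top(v\A+\bb\ub^\top)\X^\top\y$ plus residuals (your $T_1,T_4$ are the paper's $\eps$), the same odd-moment orthogonality argument to get $\Lc_{\att}=\Lc_{\gd}(\W_{\att})+\E[(T_1+T_4)^2]$, and the same explicit weight construction for the reverse inequality. The case-by-case moment bookkeeping you flag as the main obstacle is exactly the computation the paper carries out in its terms $(a)$--$(d)$.
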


\begin{proof}
Recap the linear attention estimator from \eqref{pred att app} and denote 
\[
\WQ\WK^\top=\begin{bmatrix}
    \Wb&\w_1\\
    \w_2^\top&w
\end{bmatrix}\qquad\text{and}\qquad\WV\vb=\begin{bmatrix}
    \vb_1\\
    v
\end{bmatrix},
\]
where $\Wb\in\R^{d\times d}$, $\w_1,\w_2,\vb_1\in\R^d$, and $w,v\in\R$. 
Then we have 
\begin{align}
g_{\att}(\Z)&=(\z^\top\WQ\WK^\top\Z_0^\top)\Z_0\WV\vb\nn\\
&=[\x^\top~0]\begin{bmatrix}
    \Wb&\w_1\\
    \w_2^\top&w
\end{bmatrix}\begin{bmatrix}
    \X^\top&\zerbb_d\\
    \y^\top&0
\end{bmatrix}\begin{bmatrix}
    \X&\y\\
    \zerbb_d^\top&0
\end{bmatrix}\begin{bmatrix}
    \vb_1\\
    v
\end{bmatrix}\nn\\
&=(\x^\top\Wb\X^\top+\x^\top\w_1\y^\top)(\X\vb_1+\y v)\nn\\
&=\x^\top(v\Wb)\X^\top\y+\x^\top\w_1\y^\top\X\vb_1+\x^\top\left(\Wb\X^\top\X\vb_1+v\tn{\y}^2\w_1\right)\nn\\
&=\x^\top(v\Wb+\w_1\vb_1^\top)\X^\top\y+\x^\top\left(\Wb\X^\top\X\vb_1+v\tn{\y}^2\w_1\right)\nn\\
&=\underset{\tilde{g}_{\att}(\Z)}{\underbrace{\x^\top\Wt\X^\top\y}}+\underset{\eps}{\underbrace{\x^\top\left(\Wb\X^\top\X\vb_1+v\tn{\y}^2\w_1\right)}},\label{gd=att eq 1}
\end{align}
where $\Wt:=v\Wb+\w_1\vb_1^\top$. 

We first show that for any given parameters $\WK,\WQ,\WV,\vb$,  
\begin{align}
\E\left[(g_{\att}(\Z)-y)^2\right]\geq\E\left[(\tilde g_{\att}(\Z)-y)^2\right].\label{lower bound}
\end{align}
To this goal, we have
\begin{align}
    \E\left[(g_{\att}(\Z)-y)^2\right]-\E\left[(\tilde g_{\att}(\Z)-y)^2\right]
    &=\E\left[(\tilde g_{\att}(\Z)+\eps-y)^2\right]-\E\left[(\tilde g_{\att}(\Z)-y)^2\right]\nn\\
    &=\E[\eps^2]+2\E[(\tilde g_{\att}(\Z)-y)\eps]\label{risk gap}
\end{align}
where we have decomposition
\begin{align*}
    (\tilde g_{\att}(\Z)-y)\eps&=(\x^\top\Wt\X^\top\y-y)\x^\top\left(\Wb\X^\top\X\vb_1+v\tn{\y}^2\w_1\right)\\
    &=\y^\top\X\Wt^\top\x\x^\top\left(\Wb\X^\top\X\vb_1+v\tn{\y}^2\w_1\right)-y\x^\top\left(\Wb\X^\top\X\vb_1+v\tn{\y}^2\w_1\right)\\
    &=\underset{(a)}{\underbrace{\y^\top\X\Wt^\top\x\x^\top\Wb\X^\top\X\vb_1}}+\underset{(b)}{\underbrace{v\tn{\y}^2\y^\top\X\Wt^\top\x\x^\top\w_1}}-\underset{(c)}{\underbrace{y\x^\top\Wb\X^\top\X\vb_1}}-\underset{(d)}{\underbrace{ vy\tn{\y}^2\x^\top\w_1}}.
\end{align*}
In the following, we consider the expectations of $(a),(b),(c),(d)$ sequentially, which return zeros under Assumptions~\ref{assum:odd x beta} and \ref{assume:noise}. Note that since Assumption~\ref{assum:odd x beta} holds, expectation of any odd \emph{order} of monomial of the entries of $\X,\x,\bt$ returns zero, i.e., 
order of $\x^\top\bt\x$ is 3 and therefore  $\E[\x^\top\bt\x]=\zerbb_d$. 
\begin{align*}
    (a):\quad&\E\left[\y^\top\X\Wt^\top\x\x^\top\Wb\X^\top\X\vb_1\right]\hspace*{300pt}\\
    &=\E\left[(\X\bt+\bxi)^\top\X\Wt^\top\x\x^\top\Wb\X^\top\X\vb_1\right]\\
    &=\E\left[\bt^\top\X^\top\X\Wt^\top\x\x^\top\Wb\X^\top\X\vb_1\right]+\E\left[\bxi^\top\X\Wt^\top\x\x^\top\Wb\X^\top\X\vb_1\right]\\
    &=0.
\end{align*}
\begin{align*}
    (b):\quad&\E\left[v\tn{\y}^2\y^\top\X\Wt^\top\x\x^\top\w_1\right]\hspace*{300pt}\\
    &=\E\left[v(\X\bt+\bxi)^\top(\X\bt+\bxi)(\X\bt+\bxi)^\top\X\Wt^\top\x\x^\top\w_1\right]\\
    &=\E\left[v\tn{\bxi}^2\bxi^\top\X\Wt^\top\x\x^\top\w_1\right]\\
    &=0.
\end{align*}
\begin{align*}
    (c):\quad&\E\left[y\x^\top\Wb\X^\top\X\vb_1\right]\hspace*{300pt}\\
    &=\E\left[(\x^\top\bt+\xi)\x^\top\Wb\X^\top\X\vb_1\right]\\
    &=\E\left[\bt^\top\x\x^\top\Wb\X^\top\X\vb_1\right]+\E\left[\xi\x^\top\Wb\X^\top\X\vb_1\right]\\
    &=0.
\end{align*}
\begin{align*}
    (d):\quad&\E\left[vy\tn{\y}^2\x^\top\w_1\right]\hspace*{300pt}\\
    &=v\E\left[(\bt^\top\x+\xi)(\X\bt+\bxi)^\top(\X\bt+\bxi)\x^\top\w_1\right]\\
    &=v\E\left[\xi\tn{\bxi}^2\x^\top\w_1\right]\\
    &=0.
\end{align*}
Combining the results with \eqref{risk gap} returns that 
\begin{align}
\E\left[(g_{\att}(\Z)-y)^2\right]-\E\left[(\tilde g_{\att}(\Z)-y)^2\right]=\E[\eps^2]\geq0\label{gd=att eq 2}
\end{align}
which completes the proof of \eqref{lower bound}. Therefore, we obtain
\[
\min_{\WQ,\WK,\WV,\vb}\E\left[(g_{\att}(\Z)-y)^2\right]\geq\min_{\tilde\W}\E\left[(\tilde g_{\att}(\Z)-y)^2\right]=\min_{\W}\E\left[(g_{\gd}(\Z)-y)^2\right].
\]
We conclude the proof of this lemma by showing that for any $\W\in\R^{d\times d}$ in $g_{\gd}$, there exist $\WK,\WQ,\WV,\vb$ such that $g_\att(\Z)=g_{\gd}(\Z)$. Let 
\[
\WK=\WV=\Iden_{d+1},\qquad\WQ=\begin{bmatrix}
    \W&\zerbb_d\\
    \zerbb_d^\top&0
\end{bmatrix},\quad\text{and}\quad\vb=\begin{bmatrix}
    \zerbb_d\\1
\end{bmatrix}.
\]
Then we obtain
\begin{align}
g_{\att}(\Z)=\x^\top\W\X^\top\y=g_{\gd}(\Z), \label{gd=att eq 3}
\end{align}
which completes the proof.
\end{proof}

\begin{lemma}Suppose Assumptions~\ref{assum:odd x beta} and \ref{assume:noise} hold. Then, given the objectives in \eqref{obj all app}, we have
\begin{align}
\min_{\WQ,\WK,\WV,\vb,\fb}\Lc_{\ssm}(\Wc)=\min_{\W,\bom}\Lc_{\wpgd}(\Wc).\label{lemma ssm=wpgd eq1}
\end{align}
{Additionally, if the examples $(\x_i,y_i)_{i=1}^n$ follow the same distribution and are conditionally independent given $\x$ and $\bt$, then SSM/H3 can achieve the optimal loss using the all-ones filter and
\begin{align}
\min_{\W,\bom}\Lc_{\wpgd}(\Wc)=\min_{\W}\Lc_{\gd}(\Wc).\label{lemma ssm=wpgd eq2}
\end{align}}
\end{lemma}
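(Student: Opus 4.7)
The plan is to mirror the proof of Lemma~\ref{lemma gd=att}, isolating a ``clean'' WPGD-shaped piece inside $g_\ssm$ and showing the remainder is orthogonal to the residual in expectation. Writing $\WQ,\WK,\WV$ in their $2{\times}2$ block form and expanding $g_\ssm(\Z)$, the convolution with $\fb$ introduces sample weights $\om_j := f_{n+1-j}$ for $j=1,\dots,n$ (the $j=n{+}1$ slot is killed by the zero last row of $\Zm$). The gated product $(\Zm\WK)\odot(\Zm\WV)$ splits, column by column, into four monomial categories in $(\x_j,y_j)$: $(\x_j^\top\kb_i)(\x_j^\top\ub_i)$, $y_j(\x_j^\top\kb_i)$, $y_j(\x_j^\top\ub_i)$, and $y_j^2$, where $\kb_i,\ub_i\in\R^d$ are the $i$-th columns of the top blocks of $\WK,\WV$. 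Multiplying by $\z^\top\WQ$ (which contributes $\x^\top\qb_i$) and the head $\vb$, the two middle categories assemble into $\tilde g_\ssm(\Z):=\x^\top\Wt\X^\top(\bom\odot\y)$ for an explicit matrix $\Wt$ built from the blocks of $\WQ,\WK,\WV,\vb$, while the first and fourth categories make up the remainder $\eps := g_\ssm-\tilde g_\ssm$.

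The next step is to verify $\E[(\tilde g_\ssm-y)\eps]=0$ under Assumptions~\ref{assum:odd x beta} and \ref{assume:noise}. This reduces to a case analysis analogous to the (a)--(d) enumeration in Lemma~\ref{lemma gd=att}: every cross-term in $(\tilde g_\ssm-y)\eps$ either has odd total monomial degree in the entries of $(\x,\x_i,\bt)$ and so vanishes by Assumption~\ref{assum:odd x beta}, or contains a $\xi$-factor of odd degree that, by the zero-mean and independence properties in Assumption~\ref{assume:noise}, kills the expectation. The $y_j^2$ pieces are handled by expanding $y_j^2=(\x_j^\top\bt)^2+2(\x_j^\top\bt)\xi_j+\xi_j^2$, applying $\E[\xi_j]=0$ on the middle summand, and noting that the remaining summands reduce to odd-order $(\x,\bt,\x_i)$-monomials after $\E[\xi_j^2]=\sigma^2$ is pulled out. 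Consequently $\Lc_\ssm(\Wc)=\Lc_\wpgd(\Wt,\bom)+\E[\eps^2]\ge\Lc_\wpgd(\Wt,\bom)$, giving $\min\Lc_\ssm\ge\min\Lc_\wpgd$. The reverse inequality is a direct construction: choosing $\WK=\WV=\Iden_{d+1}$, $\WQ$ with top-left block $\W$ and zeros elsewhere, $\vb=[\zerbb_d^\top,1]^\top$, and $f_{n+1-j}=\om_j$ for $j=1,\dots,n$ yields $g_\ssm(\Z)=\x^\top\W\X^\top(\bom\odot\y)=g_\wpgd(\Z)$, proving \eqref{lemma ssm=wpgd eq1}.

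For \eqref{lemma ssm=wpgd eq2}, suppose $(\x_i,y_i)_{i=1}^n$ are conditionally i.i.d.\ given $(\x,\bt)$. Then $\Lc_\wpgd(\W,\bom)$ is quadratic---hence convex---in $\bom$ for each fixed $\W$, and by exchangeability of the conditionally i.i.d.\ data one has $\Lc_\wpgd(\W,\bom)=\Lc_\wpgd(\W,\pi(\bom))$ for every permutation $\pi$. Averaging any minimizer $\bom^\star$ of $\Lc_\wpgd(\W^\star,\cdot)$ over the symmetric orbit produces a constant vector $c\onebb_n$ which, by convexity, is still a minimizer; and $g_\wpgd(\Z)$ evaluated at $(\W^\star,c\onebb_n)$ coincides with $g_\gd(\Z)$ at $c\W^\star$. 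Thus $\min\Lc_\wpgd\ge\min\Lc_\gd$, the reverse inequality being trivial via $\bom=\onebb_n$, and the SSM-optimal filter is therefore $f_1=\cdots=f_n=c$ (with $f_0$ arbitrary), i.e.\ all-ones up to absorbing $c$ into $\W$. The main obstacle is the lengthy bookkeeping in the second paragraph: the block decomposition and convolution together generate many more monomial types than appear in Lemma~\ref{lemma gd=att}, each of which must be separately classified as either odd-order in $(\x_i,\bt)$ or carrying an odd $\xi$-power. No new algebraic phenomenon appears, but the enumeration is tedious.
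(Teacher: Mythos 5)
Your overall architecture matches the paper's: expand the gated product channel-by-channel, isolate the terms linear in $y_j$ as a WPGD-shaped predictor $\x^\top\Wt\X^\top(\bom\odot\y)$ with $\om_j=f_{n+1-j}$, show the cubic-in-features and $y_j^2$ remainders are orthogonal to the residual via the odd-moment assumptions, and then symmetrize over $\bom$ for the all-ones claim. The decomposition and the orthogonality bookkeeping are correct in outline, and your symmetrization argument for \eqref{lemma ssm=wpgd eq2} (permutation invariance of $\Lc_{\wpgd}(\W,\cdot)$ under conditional exchangeability, plus convexity of the quadratic in $\bom$, forces a constant minimizer) is a valid and arguably cleaner route than the paper's, which instead computes $\partial\Lc/\partial\omega_i$ explicitly and solves the stationarity conditions.

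However, your achievability construction is wrong, and this is a genuine gap: without it you only get $\min\Lc_{\ssm}\geq\min\Lc_{\wpgd}$, not equality. You copied the witness from the attention lemma ($\WK=\WV=\Iden_{d+1}$, $\WQ$ with top-left block $\W$, $\vb=[\zerbb_d^\top~1]^\top$), but the gated-convolution layer composes weights channel-wise, not via the bilinear form $\WQ\WK^\top$. With $\WK=\WV=\Iden_{d+1}$, channel $i$ of $\Zm\WK\odot\Zm\WV$ at sample $j$ is $(\z_j^\top\eb_i)^2$, i.e.\ $x_{ji}^2$ for $i\leq d$ and $y_j^2$ for $i=d+1$; no cross term $y_j\x_j$ is ever produced. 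Worse, with your $\WQ$ the query gate is $[(\W^\top\x)^\top~0]^\top$, whose last entry is $0$, and $\vb=[\zerbb_d^\top~1]^\top$ reads off exactly that entry, so $g_{\ssm}\equiv 0$. To realize $g_{\wpgd}$ you need the key and value projections to target \emph{different} coordinates of $\z_i$ so the gate forms $y_i\cdot(\W\x_i)$, e.g.\ the paper's choice
\[
\WQ=\Iden_{d+1},\quad\WK=\begin{bmatrix}\W^\top&\zerbb_d\\\zerbb_d^\top&0\end{bmatrix},\quad\WV=\begin{bmatrix}\zerbb_{d\times d}&\zerbb_d\\\onebb_d^\top&0\end{bmatrix},\quad\vb=\begin{bmatrix}\onebb_d\\0\end{bmatrix},\quad\fb=[0~\omega_n~\cdots~\omega_1]^\top,
\]
which yields $((\Zm\WK\odot\Zm\WV)\ast\fb)_{n+1}=[\,(\W\X^\top(\y\odot\bom))^\top~0]^\top$ and hence $g_{\ssm}(\Z)=\x^\top\W\X^\top(\y\odot\bom)$. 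With that substitution the rest of your argument goes through.
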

\begin{proof}
Recap the SSM estimator from \eqref{pred ssm app} and let 
\begin{align*}
    &\WQ=\begin{bmatrix}
        \w_{q1}&
        \w_{q2}&
        \cdots&
        \w_{q,{d+1}}
    \end{bmatrix},\\
    &\WK=\begin{bmatrix}
        \w_{k1}&
        \w_{k2}&
        \cdots&
        \w_{k,{d+1}}
    \end{bmatrix},\\
    &\WV=\begin{bmatrix}
        \w_{v1}&
        \w_{v2}&
        \cdots&
        \w_{v,{d+1}}
    \end{bmatrix},
\end{align*}
where $\w_{qj},\w_{kj},\w_{vj}\in\R^{d+1}$ for $j\leq d+1$, and let
\[
\vb=\begin{bmatrix}
    v_1\\
    v_2\\
    \cdots\\
    v_{d+1}
\end{bmatrix},\quad\text{and}\quad\fb=\begin{bmatrix}
    f_{0}\\
    f_{1}\\
    \cdots\\
    f_{n}
\end{bmatrix}.
\]
Then we have
\begin{align*}
    g_\ssm(\Z)&=\left((\z^\top\WQ)^\top\odot((\Zm\WK\odot\Zm\WV) \ast\fb)_{n+1}\right)\vb\\
    &=\sum_{i=1}^nf_{n+1-i}\cdot\vb^\top\left(\begin{bmatrix}
        \w_{q1}^\top\z\\
        \cdots\\
        \w_{q,{d+1}}^\top\z
    \end{bmatrix}\odot\begin{bmatrix}
        \w_{k1}^\top\z_i\w_{v1}^\top\z_i\\
        \cdots\\
        \w_{k,{d+1}}^\top\z_i\w_{v,{d+1}}^\top\z_i
    \end{bmatrix}\right)\\
    &=\sum_{i=1}^nf_{n+1-i}\cdot\vb^\top\begin{bmatrix}
        \w_{q1}^\top\z\w_{k1}^\top\z_i\w_{v1}^\top\z_i\\
        \cdots\\
        \w_{q,{d+1}}^\top\z\w_{k,{d+1}}^\top\z_i\w_{v,{d+1}}^\top\z_i
    \end{bmatrix}.
\end{align*}
Next for all $j\leq d+1$, let 
\[
\w_{qj}=\begin{bmatrix}
    \wb_{qj}\\
    w_{qj}
\end{bmatrix},\quad\w_{kj}=\begin{bmatrix}
    \wb_{kj}\\
    w_{kj}
\end{bmatrix},\quad\w_{vj}=\begin{bmatrix}
    \wb_{vj}\\
    w_{vj}
\end{bmatrix}
\]
where $\wb_{qj},\wb_{kj},\wb_{vj}\in\R^d$ and $w_{qj},w_{kj},w_{vj}\in\R$. 
Then we have
\begin{align*}
    \w_{qj}^\top\z\w_{kj}^\top\z_i\w_{vj}^\top\z_i&=\left(\wb_{qj}^\top\x\right)\left(\wb_{kj}^\top\x_i+w_{kj}y_i\right)\left(\wb_{vj}^\top\x_i+w_{vj}y_i\right)\\
    &=\x^\top\wb_{qj}\left(w_{vj}\wb_{kj}^\top+w_{kj}\wb_{vj}^\top\right)\x_iy_i+\left(\wb_{qj}^\top\x\right)\left(\wb_{kj}^\top\x_i\right)\left(\wb_{vj}^\top\x_i\right)+\left(w_{kj}w_{vj}\wb_{qj}^\top\x y_i^2\right)\\
    &=\x^\top\W'_j\x_i y_i+\delta_j(\x,\x_i,\x_i)+{\w_j'}^\top \x y_i^2
\end{align*}
where 
\begin{align*}
&\W'_j:=\wb_{qj}\left(w_{vj}\wb_{kj}^\top+w_{kj}\wb_{vj}^\top\right)\in\R^{d\times d},\\
&\w_j':=w_{kj}w_{vj}\wb_{qj}\in\R^d,\\
&\delta_j(\x,\x_i,\x_i):=\left(\wb_{qj}^\top\x\right)\left(\wb_{kj}^\top\x_i\right)\left(\wb_{vj}^\top\x_i\right)\in\R.
\end{align*}
Then
\begin{align*}
    g_{\ssm}(\Z)&=\sum_{i=1}^nf_{n+1-i}\cdot\sum_{j=1}^{d+1}v_j\left(\x^\top\W'_j\x_i y_i+\delta_j(\x,\x_i,\x_i)+{\w_j'}^\top \x y_i^2\right)\\
    &=\x^\top\left(\sum_{j=1}^{d+1}v_j\W'_j\right)\X(\y\odot\tilde\fb)+\sum_{i=1}^nf_{n+1-i}\cdot\sum_{j=1}^{d+1}v_j\cdot\delta_j(\x,\x_i,\x_i)+\left(\sum_{j=1}^{d+1}v_j{\w_j'}^\top\right)\x \y^\top(\y\odot\tilde\fb)\\
    &=\underset{\tilde g_{\ssm}(\Z)}{\underbrace{\x^\top\tilde\W\X\tilde\y}}+\underset{\eps_1}{\underbrace{\tilde\delta(\x,\X,\X)}}+\underset{\eps_2}{\underbrace{\tilde\w^\top\x \y^\top\tilde\y}}.
\end{align*}
where
\begin{align*}
&\tilde\fb:=[f_n~\cdots~f_1]^\top\in\R^n,\\
&\tilde\y:=\y\odot\tilde\fb\in\R^n,\\
&\tilde\W:=\sum_{j=1}^{d+1}v_j\W'_j\in\R^{d\times d},\\
&\tilde\w:=\sum_{j=1}^{d+1}v_j{\w_j'}\in\R^d,\\
&\tilde\delta(\x,\X,\X):=\sum_{i=1}^nf_{n+1-i}\cdot\sum_{j=1}^{d+1}v_j\cdot\delta_j(\x,\x_i,\x_i)\in\R.
\end{align*}
Next we will show that for any $\WK,\WQ,\WV,\vb$, 
\[
\E\left[(g_{\ssm}(\Z)-y)^2\right]\geq\E\left[(\tilde g_{\ssm}(\Z)-y)^2\right].
\]
To start with, we obtain
\begin{align}
    \E\left[(g_\ssm(\Z)-y)^2\right]&=\E\left[(\tilde g_{\ssm}(\Z)+\eps_1+\eps_2-y)^2\right]\nn\\
    &=\E\left[(\tilde g_{\ssm}(\Z)-y)^2\right]+\E\left[(\eps_1+\eps_2)^2\right]+2\E\left[(\tilde g_{\ssm}(\Z)-y)(\eps_1+\eps_2)\right]\label{risk gap ssm}
\end{align}
where there is decomposition
\begin{align*}
    (\tilde g_{\ssm}(\Z)-y)(\eps_1+\eps_2)=\underset{(a)}{\underbrace{\tilde\delta(\x,\X,\X)\cdot\x^\top\tilde\W\X\tilde\y}}-\underset{(b)}{\underbrace{\tilde\delta(\x,\X,\X)y}}+\underset{(c)}{\underbrace{\tilde\w^\top\x \y^\top\tilde\y\cdot\x^\top\tilde\W\X\tilde\y}}-\underset{(d)}{\underbrace{y\cdot\tilde\w^\top\x \y^\top\tilde\y}}.
\end{align*}
In the following, similar to the proof of Lemma~\ref{lemma gd=att}, we consider the expectations of $(a),(b),(c),(d)$ sequentially, which return zeros under Assumptions~\ref{assum:odd x beta} and \ref{assume:noise}. Note that $\delta_j(\x,\x_i,\x_i)$'s and $\tilde\delta(\x,\X,\X)$ are summation of monomials of entries of $(\x,\X,\bt)$ with order 3, and entries of $\y$ and $y$ are summation of monomials of entries of $(\x,\X,\bt)$ with even orders: e.g., $y=\x^\top\bt+\xi$ where $\xi$ is of oder 0 and  $\x^\top\bt$ is of order 2.

\begin{align*}
    (a):\quad&\E\left[\tilde\delta(\x,\X,\X)\cdot\x^\top\tilde\W\X\tilde\y\right]\hspace*{300pt}\\
    &=\E\left[\tilde\delta(\x,\X,\X)\cdot\x^\top\tilde\W\X(\X\bt\odot\tilde\fb)\right]+\E\left[\tilde\delta(\x,\X,\X)\cdot\x^\top\tilde\W\X(\bxi\odot\tilde\fb)\right]\\
    &=\E\left[\tilde\delta(\x,\X,\X)\cdot\x^\top\tilde\W\X\right]\E\left[\bxi\odot\tilde\fb\right]\\
    &=0.
\end{align*}
\begin{align*}
    (b):\quad&\E\left[\tilde\delta(\x,\X,\X)y\right]\hspace*{400pt}\\
    &=\E\left[\tilde\delta(\x,\X,\X)(\x^\top\bt+\xi)\right]\\
    &=\E\left[\tilde\delta(\x,\X,\X)\x^\top\bt\right]+\E\left[\tilde\delta(\x,\X,\X)\xi\right]\\
    &=0.
\end{align*}
\begin{align*}
    (c):\quad&\E\left[\tilde\w^\top\x \y^\top\tilde\y\cdot\x^\top\tilde\W\X\tilde\y\right]\hspace*{400pt}\\
    &=\E\left[\tilde\w^\top\x (\X\bt+\bxi)^\top(\X\bt\odot\tilde\fb+\bxi\odot\tilde\fb)\cdot\x^\top\tilde\W\X(\X\bt\odot\tilde\fb+\bxi\odot\tilde\fb)\right]\\
    &=0.
\end{align*}
\begin{align*}
    (d):\quad&\E\left[y\cdot\tilde\w^\top\x \y^\top\tilde\y\right]\hspace*{400pt}\\
    &=\E\left[(\x^\top\bt+\xi)\cdot\tilde\w^\top\x (\X\bt+\bxi)^\top(\X\bt\odot\tilde\fb+\bxi\odot\tilde\fb)\right]\\
    &=0.
\end{align*}
Combining the results with \eqref{risk gap ssm} results that
\begin{align*}
    \E\left[(g_\ssm(\Z)-y)^2\right]-\E\left[(\tilde g_\ssm(\Z)-y)^2\right]=\E\left[(\eps_1+\eps_2)^2\right]\geq0.
\end{align*}
Therefore we obtain,
\[
\min_{\WQ,\WK,\WV,\vb,\fb}\E\left[(g_{\ssm}(\Z)-y)^2\right]\geq\min_{\tilde\W,\tilde\fb}\E\left[(\tilde g_{\ssm}(\Z)-y)^2\right]=\min_{\W,\bom}\E\left[( g_{\wpgd}(\Z)-y)^2\right].
\]
Next we show that for any choices of $\W$ and $\bom$ in $g_{\wpgd}$, there are $\W_{q,k,v},\vb,\fb$ such that $g_{\ssm}\equiv g_{\wpgd}$. To this end, given $\bom=[\omega_1~\dots~\omega_n]^\top$, let 
\[
\WQ=\Iden_{d+1},\quad\WK=\begin{bmatrix}
    \W^\top&\zerbb_d\\
    \zerbb_d^\top&0
\end{bmatrix},\quad\WV=\begin{bmatrix}
    \zerbb_{d\times d}&\zerbb_d\\
    \onebb_d^\top&0
\end{bmatrix},\quad\vb=\begin{bmatrix}
    \onebb_d\\
    0
\end{bmatrix}\quad\text{and}\quad\fb=\begin{bmatrix}
    0\\
    \omega_n\\
    \cdots\\
    \omega_1
\end{bmatrix}.
\]
Then we get
\begin{align*}
((\Z_0\WK\odot\Z_0\WV)\ast\fb)_{n+1}&=\left(\left(\begin{bmatrix}
    \X\W^\top&\zerbb_n\\
    \zerbb_d&0
\end{bmatrix}\odot\begin{bmatrix}
    \y\onebb_d^\top&\zerbb_n\\
    \zerbb_d&0
\end{bmatrix}\right)\ast\fb\right)_{n+1}\\
&=\begin{bmatrix}\sum_{i=1}^n\omega_i\cdot y_i\W\x_i\\
0\end{bmatrix}\\
&=\begin{bmatrix}\W\X^\top(\y\odot\bom)\\0\end{bmatrix},
\end{align*}
and therefore
\[
g_\ssm(\Z)=\x^\top\W\X^\top(\y\odot\bom)=g_{\wpgd}(\Z),
\]
which completes the proof of \eqref{lemma ssm=wpgd eq1}.

Next, to show \eqref{lemma ssm=wpgd eq2}, for any $\W\in\R^{d\times d}$, let $\Lc(\bom)=\E\left[\left(\x^\top\W\X^\top(\y\odot\bom)-y\right)^2\right]$.
Then we have
\begin{align*}
\frac{\partial\Lc(\bom)}{\partial \omega_i}&=\E\left[2\left(\x^\top\W \sum_{j=1}^n\omega_jy_j\x_j-y\right)\left(\x^\top\W y_i\x_i\right)\right]\\
&=2\sum_{j=1}^n\omega_j\E\left[(\x^\top\W y_j\x_j)(\x^\top\W y_i\x_i)\right]-2\E\left[y\x^\top\W y_i\x_i\right].
\end{align*}
Here since $(\x_i,y_i)_{i=1}^n$ follow the same distribution and are conditionally independent given $\x$ and $\bt$, for any $i\neq j\neq j'$, $\E\left[(\x^\top\W y_i\x_i)^2\right]=\E\left[(\x^\top\W y_{j}\x_{j})^2\right]$ and $\E\left[(\x^\top\W y_j\x_j)(\x^\top\W y_i\x_i)\right]=\E\left[(\x^\top\W y_{j'}\x_{j'})(\x^\top\W y_i\x_i)\right]$. Then let
\[
\E\left[(\x^\top\W y_j\x_j)(\x^\top\W y_i\x_i)\right]=\begin{cases}
    c_1,& i\neq j\\
    c_2,& i=j
\end{cases}\quad\text{and}\quad\E\left[y\x^\top\W y_i\x_i\right]=c_3,
\]
where $(c_1,c_2,c_3):=(c_1(\W),c_2(\W),c_3(\W))$. 
We get
\[
\frac{\partial\Lc(\bom)}{\partial \omega_i}
=2c_1\bom^\top\onebb_n+2(c_2-c_1)\omega_i-2c_3.
\]
If $c_2-c_1=0$, then $\frac{\partial\Lc(\bom)}{\partial \omega_i}\equiv2c_1\bom^\top\onebb_n-2c_3$ for all $i\leq n$ and any $\bom\in\R^n$ achieves the same performance.

If $c_2-c_1\neq0$, setting $\frac{\partial\Lc(\bom)}{\partial \omega_i}=0$ returns 
\[
\omega_i=\frac{c_3-c_1\sum_{j=1}^n\omega_j}{c_2-c_1}:=C\quad\text{for all }i\leq n.
\]
Therefore the optimal loss is achieved via setting $\bom=C\onebb_n$. Without loss of generality, we can update $\W\to C\W$. Then $\bom=\onebb_n$, and we obtain 
\[
\min_{\W,\bom}\E\left[\left(\x^\top\W\X^\top(\y\odot\bom)-y\right)^2\right]=\min_{\W}\E\left[(\x^\top\W\X^\top\y-y)^2\right]
\]
which completes the proof of \eqref{lemma ssm=wpgd eq2}.
\end{proof}

\subsection{Proof of Lemma~\ref{lemma cvx 1}}

\begin{proof}
    Recap the loss $\Lc_{\gd}(\Wc)$ in \eqref{obj gd app} and prediction $g_{\gd}(\Z)$ in \eqref{pred gd app}, we have
    \begin{align}
        \Lc_{\gd}(\Wc)&=\E[(y-g_{\gd}(\Z))^2]\nn\\
        &=\E\left[\left(\x^\top\bt+\xi-\x^\top\W\X^\top(\X\bt+\bxi)\right)^2\right]\nn\\
    &=\E\left[(\x^\top\bt-\x^\top\W\X^\top\X\bt)^2+2(\x^\top\bt-\x^\top\W\X^\top\X\bt)(\xi-\x^\top\W\X^\top\bxi)+(\xi-\x^\top\W\X^\top\bxi)^2\right]\nn\\
    &=\E\left[(\x^\top\bt-\x^\top\W\X^\top\X\bt)^2+(\xi-\x^\top\W\X^\top\bxi)^2\right]+2\E[(\x^\top\bt-\x^\top\W\X^\top\X\bt)(\xi-\x^\top\W\X^\top\bxi)]\nn\\
    &=\E\left[(\x^\top\bt-\x^\top\W\X^\top\X\bt)^2+(\xi-\x^\top\W\X^\top\bxi)^2\right]\label{cvx eq 1}\\
    &=\underset{f_1(\W)}{\underbrace{\E\left[(\x^\top\W\X^\top\X\bt)^2+(\x^\top\W\X^\top\bxi)^2\right]}}\underset{f_2(\W)}{\underbrace{-2\E[\bt^\top\x\x^\top\W\X^\top\X\bt+\xi\x^\top\W\X^\top\bxi]}}+\underset{\text{constant}}{\underbrace{\E[(\x^\top\bt)^2+\xi^2]}}\nn
    \end{align}
    where \eqref{cvx eq 1} follows Assumption \ref{assume:noise}. Since $f_2(\W)$ is convex, $\Lc_{\gd}(\Wc)$ is strongly-convex if and only if $f_1(\W)$ is strongly-convex, which completes the proof of strong convexity.

    Next, \eqref{gd=att eq 2} and \eqref{gd=att eq 3} in the proof of Lemma~\ref{lemma gd=att} demonstrate that the optimal loss is achievable and is achieved at $\eps=0$. Subsequently, \eqref{gd=att eq 1} indicates that $g^\st_{\att}$ has the same form as $g^\st_{\gd}$. Under the strong convexity assumption, $g^\st_{\gd}$ is unique, which leads to the conclusion that $g^\st_{\gd} = g^\st_{\att}$.
\end{proof}

\subsection{Proof of Lemma~\ref{lemma cvx 2}}
\begin{proof} 
According to Lemma~\ref{lemma cvx 1}, $\Lc_{\gd}(\Wc)$ is strongly-convex as long as either $\E[(\x^\top\W\X^\top\X\bt)^2]$ or $\E[(\x^\top\W\X^\top\bxi)^2]$ is strongly-convex. Therefore, in this lemma, the two claims correspond to the strong convexity of $\E[(\x^\top\W\X^\top\bxi)^2]$ and $\E[(\x^\top\W\X^\top\X\bt)^2]$ terms, respectively. 

Suppose the decomposition claim holds. Without losing generality, we may assume $(\x_1,\bt_1,\X_1)$ are zero-mean because we can allocate the mean component to $(\x_2,\bt_2,\X_2)$ without changing the covariance. 

\noindent $\bullet$ \textbf{Claim 1:} Let $\bSb_{\x}=\E[\x_1\x_1^\top]$, $\bSb_{\bt}=\E[\bt_1\bt_1^\top]$, and $\bSb_{\X}=\E[\X_1^\top\X_1]$. If the first claim holds, using independence, observe that we can write
\begin{align}
\E[(\x^\top\W\X^\top\bxi)^2]&=\E[(\x_1^\top\W\X_1^\top\bxi)^2]+\E[(\x_1^\top\W\X_2^\top\bxi)^2]+\E[(\x_2^\top\W\X_1^\top\bxi)^2]+\E[(\x_2^\top\W\X_2^\top\bxi)^2],\nn
\end{align}
where the last three terms of the right hand side are convex and the first term obeys
\begin{align*}
\E[(\x_1^\top\W\X_1^\top\bxi)^2]&=\sigma^2\E[\x_1^\top\W\X_1^\top\X_1\W^\top \x_1]\\
&=\sigma^2\tr{\E[\x_1\x_1^\top\W\X_1^\top\X_1\W^\top]}\\
&=\sigma^2\tr{\bSb_{\x}\W\bSb_{\X}\W^\top}\\
&=\sigma^2\tf{\sqrt{\bSb_{\x}}\W\sqrt{\bSb_{\X}}}^2.
\end{align*}
Since noise level $\sigma>0$, using the full-rankness of covariance matrices $\bSb_{\x}$ and $\bSb_{\X}$, we conclude with strong convexity of $\E[(\x^\top\W\X^\top\bxi)^2]$. 

\noindent $\bullet$ \textbf{Claim 2:} Now recall that $\bSb_{\X}=\E[\X_1^\top\X_1]$ and set $\A=\X_1^\top\X_1-\bSb_{\X}$ and $\B=\X_2^\top\X_2+\bSb_{\X}$. Observe that $\E[\A]=0$. If the second claim holds, $\E[\X^\top\X]=\E[\A+\B]$. Note that $(\A,\bt_1,\x_1)$ are independent of each other and $(\B,\bt_2,\x_2)$. Using independence and $\E[\A]=0$, similarly write
\begin{align*}
\E[(\x^\top\W\X^\top\X\bt)^2]&=\E[(\x^\top\W\A\bt)^2]+\E[(\x^\top\W\B\bt)^2].
\end{align*}
Now using $\E[\bt_1]=\E[\x_1]=0$ and their independence from rest, these terms obeys 
\begin{align*}
&\E[(\x^\top\W\A\bt)^2]=\E[(\x_1^\top\W\A\bt_1)^2]+\E[(\x_1^\top\W\A\bt_2)^2]+\E[(\x_2^\top\W\A\bt_1)^2]+\E[(\x_2^\top\W\A\bt_2)^2]\\
&\E[(\x^\top\W\B\bt)^2]=\E[(\x_1^\top\W\B\bt_1)^2]+\E[(\x_1^\top\W\B\bt_2)^2]+\E[(\x_2^\top\W\B\bt_1)^2]+\E[(\x_2^\top\W\B\bt_2)^2].
\end{align*}
In both equations, the last three terms of the right hand side are convex. To proceed, we focus on the first terms. Using independence and setting $\bSi_{\X
}=\E[\X^\top\X]\succeq \bSb_{\X}\succ 0$, we note that
\begin{align*}
\E[(\x_1^\top\W\A\bt_1)^2]+\E[(\x_1^\top\W\B\bt_1)^2]=\E[(\x_1^\top\W\X^\top\X\bt_1)^2]
\end{align*}
where $\x_1,\bt_1,\X$ are independent and full-rank covariance. To proceed, note that 
\begin{align*}
\E[(\x_1^\top\W\X^\top\X\bt_1)^2]=\E[(\x_1^\top\W\bSi_{\X}\bt_1)^2]+\E[(\x_1^\top\W(\X^\top\X-\bSi_{\X})\bt_1)^2].
\end{align*}
Observing the convexity of the right hand side and focusing on the first term, we get
\begin{align*}
\E[(\x_1^\top\W\bSi_{\X}\bt_1)^2]=\tr{\bSb_{\x}\W\bSi_{\X}\bSb_{\bt}\bSi_{\X}\W^\top}=\tf{\sqrt{\bSb_x}\W\bSi_{\X}\sqrt{\bSb_\bt}}^2.
\end{align*}
Using the fact that covariance matrices, $\bSb_{\x},\bSi_{\X},\bSb_{\bt}$,  are full rank concludes the strong convexity proof of $\E[(\x^\top\W\X^\top\X\bt)^2]$.
\end{proof}
\section{Analysis of General Data Distribution}
In this section, we provide the proofs in Section~\ref{sec:main}, which focuses on solving Objective~\eqref{obj gd}. For the sake of clean notation, let $\Lc(\W):=\Lc_{\gd}(\Wc)$ and $g:=g_{\gd}$ in this section. 
\subsection{Supporting Results}
We begin by deriving the even moments of random variables.

\noindent $\bullet$ \textbf{$2n$'th moment of a normally distributed variable:} Let $u\sim\Nc(0,\sigma^2)$. Then we have
\begin{align}
\E[u^{2n}] = \sigma^{2n}(2n-1)!!.\label{moment 2n}
\end{align}
%

\noindent $\bullet$ \textbf{$4$'th moment: } Let $\ub\sim\Nc(0,\Iden_d)$. Then for any $\W,\W'\in\R^{d\times d}$, we have
\begin{align}
&\E\left[(\ub^\top\W\ub)(\ub^\top\W'\ub)\right]\nn\hspace*{300pt}\\
&=\E\left[\left(\sum_{i,j=1}^dW_{ij}u_iu_j\right)\left(\sum_{i,j=1}^dW'_{ij}u_iu_j\right)\right]\nn\\
&=\E\left[\left(\sum_{i=1}^dW_{ii}u_i^2\right)\left(\sum_{i=1}^dW'_{ii}u_i^2\right)\right]+\E\left[\left(\sum_{i\neq j}W_{ij}u_iu_j\right)\left(\sum_{i\neq j}W_{ij}'u_iu_j\right)\right]\nn\\
&=\sum_{i=1}^dW_{ii}W'_{ii}\E\left[u_i^4\right]+\sum_{i\neq j}W_{ii}W'_{jj}\E[u_i^2]\E[u_j^2]+\sum_{i\neq j}W_{ij}W'_{ij}\E[u_i^2]\E[u_j^2]+\sum_{i\neq j}W_{ij}W'_{ji}\E[u_i^2]\E[u_j^2]\nn\\
&=3\sum_{i=1}^dW_{ii}W'_{ii}+\sum_{i\neq j}W_{ii}W'_{jj}+\sum_{i\neq j}W_{ij}W'_{ij}+\sum_{i\neq j}W_{ij}W'_{ji}\nn\\
&=\sum_{i,j=1}^dW_{ii}W'_{jj}+\sum_{i,j=1}^dW_{ij}W'_{ij}+\sum_{i,j=1}^dW_{ij}W'_{ji}\nn\\
&=\trace{\W}\trace{\W'}+\trace{\W'\W^\top}+\trace{\W\W'}.\label{moment 4}
\end{align}

\noindent $\bullet$ \textbf{$4$'th cross-moment: } Let $\ub,\vb\sim\Nc(0,\Iden_d)$ and for any $\W\in\R^{d\times d}$, let $\La_{\W}=\W\odot\Iden_d$. Then we have
\begin{align}
&\E\left[(\ub^\top\W\vb\vb^\top\ub)^2\right]\nn\hspace*{300pt}\\
&=\E\left[\left(\sum_{i,j=1}^dW_{ij}u_iv_j\right)^2\left(\sum_{i=1}^du_iv_i\right)^2\right]\nn\\
&=\E\scalemath{0.9}{\left[\left(\sum_{i,j=1}^dW_{ij}^2u_i^2v_j^2+\sum_{i\neq i'}W_{ij}W_{i'j}u_iu_{i'}v_j^2+\sum_{j\neq j'}W_{ij}W_{ij'}u_i^2v_jv_{j'}+\sum_{i'\neq i,j'\neq j}W_{ij}W_{i'j'}u_iu_{i'}v_jv_{j'}\right)\left(\sum_{i=1}^du_i^2v_i^2+\sum_{i\neq j}u_iu_jv_iv_j\right)\right]}\nn\\
&=\E\left[\left(\sum_{i,j=1}^dW_{ij}^2u_i^2v_j^2\right)\left(\sum_{i=1}^du_i^2v_i^2\right)+\left(\sum_{i\neq j}W_{ij}W_{ji}u_i^2u_j^2v_i^2v_j^2\right)\right]\nn\\
&=\E\left[\left(\sum_{i=1}^dW_{ii}^2u_i^2v_i^2+\sum_{i\neq j}W_{ij}^2u_i^2v_j^2\right)\left(\sum_{i=1}^du_i^2v_i^2\right)\right]+\sum_{i\neq j}W_{ij}W_{ji}\nn\\
&=\E\scalemath{0.9}{\left[\left(\sum_{i=1}^dW_{ii}^2u_i^4v_i^4+\sum_{i\neq j}W_{ii}^2u_i^2v_i^2u_j^2v_j^2\right)\right]+\E\left[\left(\sum_{i\neq j}W_{ij}^2u_i^4v_j^2v_i^2+\sum_{i\neq j}W_{ij}^2u_i^2v_j^4u_j^2+\sum_{i\neq j\neq k}W_{ij}^2u_i^2v_j^2u_k^2v_k^2\right)\right]+\sum_{i\neq j}W_{ij}W_{ji}}\nn\\
&=9\sum_{i=1}^dW_{ii}^2+(d-1)\sum_{i=1}^dW_{ii}^2+6\sum_{i\neq j}W_{ij}^2+(d-2)\sum_{i \neq j}W_{ij}^2+\sum_{i\neq j}W_{ij}W_{ji}\nn\\
&=3\sum_{i=1}^dW_{ii}^2+(d+4)\sum_{i,j=1}^dW_{ij}^2+\sum_{i,j=1}^dW_{ij}W_{ji}\nn\\
&=3\tr{\La_{\W}^2}+(d+4)\tr{\W\W^\top}+\tr{\W^2}.\label{cross moment 4}
\end{align}

\noindent $\bullet$ \textbf{$6$'th moment: } Let $\ub\sim\Nc(0,\Iden_d)$. Then for any $\W,\W'\in\R^{d\times d}$, we have
\begin{align}
&\E\left[(\ub^\top\W\ub)(\ub^\top\W'\ub)\tn{\ub}^2\right]\nn\hspace*{300pt}\\
&=\E\left[\left(\sum_{i,j=1}^dW_{ij}u_iu_j\right)\left(\sum_{i,j=1}^dW'_{ij}u_iu_j\right)\left(\sum_{i=1}^du_i^2\right)\right]\nn\\
&=\E\left[\left(\sum_{i=1}^dW_{ii}u_i^2\right)\left(\sum_{i=1}^dW'_{ii}u_i^2\right)\left(\sum_{i=1}^du_i^2\right)\right]+\E\left[\left(\sum_{i\neq j}W_{ij}u_iu_j\right)\left(\sum_{i\neq j}W_{ij}'u_iu_j\right)\left(\sum_{i=1}^du_i^2\right)\right]\nn\\
&=\sum_{i=1}^dW_{ii}W'_{ii}\E\left[u_i^4\left(\sum_{i'=1}^du_{i'}^2\right)\right]+\sum_{i\neq j}W_{ii}W'_{jj}\E\left[u_i^2u_j^2\left(\sum_{i'=1}^du_{i'}^2\right)\right]\nn\\
&\quad+\sum_{i\neq j}W_{ij}W'_{ij}\E\left[u_i^2u_j^2\left(\sum_{i'=1}^du_{i'}^2\right)\right]+\sum_{i\neq j}W_{ij}W'_{ji}\E\left[u_i^2u_j^2\left(\sum_{i'=1}^du_{i'}^2\right)\right]\nn\\
&=(d+4)\left(3\sum_{i=1}^dW_{ii}W'_{ii}+\sum_{i\neq j}W_{ii}W'_{jj}+\sum_{i\neq j}W_{ij}W'_{ij}+\sum_{i\neq j}W_{ij}W'_{ji}\right)\label{moment 6 eq 1}\\
&=(d+4)\left(\sum_{i,j=1}^dW_{ii}W'_{jj}+\sum_{i,j=1}^dW_{ij}W'_{ij}+\sum_{i,j=1}^dW_{ij}W'_{ji}\right)\nn\\
&=(d+4)\left(\trace{\W}\trace{\W'}+\trace{\W'\W^\top}+\trace{\W\W'}\right),\label{moment 6}
\end{align}
where \eqref{moment 6 eq 1} is obtained by following
\begin{align*}
    &\E\left[u_i^4\left(\sum_{i'=1}^du_{i'}^2\right)\right]=\E[u^6]+(d-1)\E[u^4]\E[u^2]=3(d+4),\\
    &\E\left[u_i^2u_j^2\left(\sum_{i'=1}^du_{i'}^2\right)\right]=2\E[u^4]\E[u^2]+(d-2)\E[u^2]\E[u^2]\E[u^2]=d+4.
\end{align*}


\noindent $\bullet$ \textbf{$8$'th moment: } Let $\ub\sim\Nc(0,\Iden_d)$. Then for any $\W,\W'\in\R^{d\times d}$, we have
\begin{align}
&\E\left[(\ub^\top\W\ub)(\ub^\top\W'\ub)\tn{\ub}^4\right]\nn\hspace*{300pt}\\
&=\E\left[\left(\sum_{i,j=1}^dW_{ij}u_iu_j\right)\left(\sum_{i,j=1}^dW'_{ij}u_iu_j\right)\left(\sum_{i,j=1}^du_i^2u_j^2\right)\right]\nn\\
&=\E\scalemath{0.9}{\left[\left(\sum_{i=1}^dW_{ii}u_i^2\right)\left(\sum_{i=1}^dW'_{ii}u_i^2\right)\left(\sum_{i=1}^du_i^4+\sum_{i\neq j}u_i^2u_j^2\right)\right]+\E\left[\left(\sum_{i\neq j}W_{ij}u_iu_j\right)\left(\sum_{i\neq j}W_{ij}'u_iu_j\right)\left(\sum_{i=1}^du_i^4+\sum_{i\neq j}u_i^2u_j^2\right)\right]}\nn\\
&=\sum_{i=1}^dW_{ii}W'_{ii}\E\left[u_i^4\left(\sum_{i'=1}^du_{i'}^4+\sum_{i'\neq j'}u_{i'}^2u_{j'}^2\right)\right]+\sum_{i\neq j}W_{ii}W'_{jj}\E\left[u_i^2u_j^2\left(\sum_{i'=1}^du_{i'}^4+\sum_{i'\neq j'}u_{i'}^2u_{j'}^2\right)\right]\nn\\
&\quad+\sum_{i\neq j}W_{ij}W'_{ij}\E\left[u_i^2u_j^2\left(\sum_{i'=1}^du_{i'}^4+\sum_{i'\neq j'}u_{i'}^2u_{j'}^2\right)\right]+\sum_{i\neq j}W_{ij}W'_{ji}\E\left[u_i^2u_j^2\left(\sum_{i'=1}^du_{i'}^4+\sum_{i'\neq j'}u_{i'}^2u_{j'}^2\right)\right]\nn\\
&=(d+4)(d+6)\left(3\sum_{i=1}^dW_{ii}W'_{ii}+\sum_{i\neq j}W_{ii}W'_{jj}+\sum_{i\neq j}W_{ij}W'_{ij}+\sum_{i\neq j}W_{ij}W'_{ji}\right)\label{moment 8 eq 1}\\
&=(d+4)(d+6)\left(\sum_{i,j=1}^dW_{ii}W'_{jj}+\sum_{i,j=1}^dW_{ij}W'_{ij}+\sum_{i,j=1}^dW_{ij}W'_{ji}\right)\nn\\
&=(d+4)(d+6)\left(\trace{\W}\trace{\W'}+\trace{\W'\W^\top}+\trace{\W\W'}\right).\label{moment 8}
\end{align}
where \eqref{moment 8 eq 1} is obtained by following 
\begin{align*}
    &\E\left[u_i^4\left(\sum_{i'=1}^du_{i'}^4+\sum_{i'\neq j'}u_{i'}^2u_{j'}^2\right)\right]\\
    &=\E[u^8]+(d-1)\E[u^4]\E[u^4]+2(d-1)\E[u^6]\E[u^2]+(d-1)(d-2)\E[u^4]\E[u^2]\E[u^2]\\
    &=105+9(d-1)+30(d-1)+3(d-1)(d-2)\\
    &=3(d+4)(d+6),\\
    &\E\left[u_i^2u_j^2\left(\sum_{i'=1}^du_{i'}^4+\sum_{i'\neq j'}u_{i'}^2u_{j'}^2\right)\right]\\
    &=2\E[u^6]\E[u^2]+(d-2)\E[u^4](\E[u^2])^2+2\E[u^4]\E[u^4]+4(d-2)\E[u^4](\E[u^2])^2+(d-2)(d-3)(\E[u^2])^4\\
    &=30+3(d-2)+18+12(d-2)+(d-2)(d-3)\\
    &=(d+4)(d+6).
\end{align*}
\subsection{Independent Data with General Covariance}\label{app:independent}
\begin{proofof}{Theorem~\ref{thm:independent}}
Consider a general  independent linear model as defined in \eqref{data ind} where $\bSi_{\x}$ and $\bSi_{\bt}$ are full-rank feature and task convariance matrices and 
\[
\x\sim\Nc(0,\bSi_{\x}),\quad\bt\sim\Nc(0,\bSi_{\bt}),\quad\xi\sim\Nc(0,\sigma^2),\quad\text{and}\quad y=\x^\top\bt+\xi.
\]
Let 
\[
\X=[\x_1~\cdots~\x_n]^\top,\quad\bxi=[\xi_1~\cdots~\xi_n]^\top,\quad\text{and}\quad \y=[y_1~\cdots~y_n]^\top=\X\bt+\bxi.
\]

To simplify and without loss of generality, let $\xb=\bSi_{\x}^{-1/2}\x$, $\Xb=\X\bSi_{\x}^{-1/2}$,  $\btb=\bSi_{\x}^{1/2}\bt$ where we have 
\[
\xb\sim\Nc(0,\Iden),\qquad\btb\sim\Nc(0,\bSi_{\x}^{1/2}\bSi_{\bt}\bSi_{\x}^{1/2})
\]
and
\[
y=\xb^\top\btb+\xi,\qquad\y=\Xb\btb+\bxi.
\]

Then recap the loss from \eqref{obj gd}, and we obtain
\begin{align}
    \Lc(\W)&=\E\left[(y-g(\Z))^2\right]\nn\\
    &=\E\left[\left(\x^\top\bt+\xi-\x^\top\W\X^\top(\X\bt+\bxi)\right)^2\right]\nn\\
    &=\E\left[(\x^\top\bt-\x^\top\W\X^\top\X\bt)^2+2(\x^\top\bt-\x^\top\W\X^\top\X\bt)(\xi-\x^\top\W\X^\top\bxi)+(\xi-\x^\top\W\X^\top\bxi)^2\right]\nn\\
    &=\E\left[(\x^\top\bt-\x^\top\W\X^\top\X\bt)^2\right]+\E\left[(\x^\top\W\X^\top\bxi)^2\right]+\sigma^2,\label{loss expanded}
\end{align}
where the last equality comes from the independence of label noise $\xi,\bxi$.

We first consider the following term
\begin{align*}
    \E\left[(\x^\top\W\X^\top\bxi)^2\right]=\E\left[(\xb^\top(\bSi_{\x}^{1/2}\W\bSi_{\x}^{1/2})\Xb^\top\bxi)^2\right]=n\sigma^2\cdot\trace{\Wb\Wb^\top}
\end{align*}
where we define $\Wb=\bSi_{\x}^{1/2}\W\bSi_{\x}^{1/2}$. 
Next, focus on the following
\begin{align*}
    \E\left[(\x^\top\bt-\x^\top\W\X^\top\X\bt)^2\right]&=\E\left[(\xb^\top\btb-\xb^\top\Wb\Xb^\top\Xb\btb)^2\right]\\
    &=\E\left[\left(\xb^\top\left(\Iden-\Wb\Xb^\top\Xb\right)\btb\right)^2\right]\\
    &=\trace{\E\left[\left(\Iden-\Wb\Xb^\top\Xb\right)\bSi\left(\Iden-\Wb\Xb^\top\Xb\right)^\top\right]}\\
    &=\trace{\bSi}-\tr{\bSi(\Wb+\Wb^\top)\E[\Xb^\top\Xb]}+\tr{\Wb^\top\Wb\E[\Xb^\top\Xb\bSi\Xb^\top\Xb]}\\
    &=\trace{\bSi}-2n\cdot\tr{\bSi\Wb}+\tr{\Wb^\top\Wb\E[\Xb^\top\Xb\bSi\Xb^\top\Xb]},
\end{align*}
where $\bSi:=\bSi_{\x}^{1/2}\bSi_{\bt}\bSi_{\x}^{1/2}$.

Let $\xb_i\in\R^n$ be the $i$'th column of $\Xb$ and $\bSi_{ij}$ be the $(i,j)$'th entry of $\bSi$. Then the $(i,j)$ entry of matrix $\Xb^\top\Xb\bSi\Xb^\top\Xb$ is
\[
(\Xb^\top\Xb\bSi\Xb^\top\Xb)_{ij}=\sum_{k=1}^d\sum_{p=1}^d\bSi_{kp}\xb_i^\top\xb_k\xb_p^\top\xb_j.
\]
Then we get
\begin{align*}
    i\neq j:\quad\E\left[\left(\Xb^\top\Xb\bSi\Xb^\top\Xb\right)_{ij}\right]&=\bSi_{ij}\E[\xb_i^\top\xb_i\xb_j^\top\xb_j]+\bSi_{ji}\E[\xb_i^\top\xb_j\xb_i^\top\xb_j]=n^2\bSi_{ij}+n\bSi_{ji}\\
    i=j:\quad\E\left[\left(\Xb^\top\Xb\bSi\Xb^\top\Xb\right)_{ii}\right]&=\bSi_{ii}\E\left[\xb_i^\top\xb_i\xb_i^\top\xb_i\right]+\sum_{j\neq i}\bSi_{jj}\E\left[\xb_i^\top\xb_j\xb_j^\top\xb_i\right]\\
&=\bSi_{ii}\E\left[(x_{i1}^2+\cdots+x_{in}^2)^2\right]+n\sum_{j\neq i}\bSi_{jj}\\
&=\bSi_{ii}(3n+n(n-1))+n\sum_{j\neq i}\bSi_{jj}\\
&=n\left(\bSi_{ii}(n+1)+\sum_{j=1}^d\bSi_{jj}\right)\\
&=n\left(\bSi_{ii}(n+1)+\tr{\bSi}\right).
\end{align*}
Therefore 
\[
\E[\Xb^\top\Xb\bSi\Xb^\top\Xb]=n(n+1)\bSi+n\cdot\tr{\bSi}\Iden.
\]
Combining all together results in 
\begin{align}
\Lc(\W)&=\tr{\bSi}-2n\tr{\bSi\Wb}+n(n+1)\tr{\bSi\Wb^\top\Wb}+n(\tr{\bSi}+\sigma^2)\tr{\Wb\Wb^\top}+\sigma^2,\nn\\
&=M-2n\tr{\bSi\Wb}+n(n+1)\tr{\bSi\Wb^\top\Wb}+nM\tr{\Wb\Wb^\top},\label{loss}
\end{align}
where $M:=\tr{\bSi}+\sigma^2$. 
Setting $\nabla_{\Wb}\Lc(\W)=0$ returns
\[
-2n\cdot\bSi+2n(n+1)\cdot\bSi\Wb+2nM\Wb=0\Longrightarrow\Wb_\st=\left((n+1)\Iden+M\bSi^{-1}\right)^{-1}.
\]
Then we have 
\[
\W_\st=\bSi_{\x}^{-1/2}\left((n+1)\Iden+M\bSi^{-1}\right)^{-1}\bSi_{\x}^{-1/2}
\]
and 
$$\Lc_\st=\Lc(\W_\st)=M-n\tr{((n+1)\bSi^{-1}+M\bSi^{-2})^{-1}}.$$
\end{proofof}
\subsection{Retrieval Augmented Generation with $\alpha$ Correlation}
\label{app:rag}
In this section, we consider the retrieval augmented generation (RAG) linear model similar to \eqref{data rag}, where we first draw the query vector $\x$ and task vector $\bt$ via
\[
\x\sim\Nc(0,\Iden)\quad\text{and}\quad\bt\sim\Nc(0,\Iden).
\]
We then draw data $(\x_i)_{i=1}^n$ to be used in-context according to the rule $\text{corr\_coef}(\x,\x_i)\geq \alpha\geq0$. Hence, for $i\leq n$ we sample 
\begin{align}
\x_i\bgl\x\sim\Nn(\alpha\x,\gamma^2\Iden),\quad\xi_i\sim\Nc(0,\sigma^2)\quad\text{and}\quad y_i=\x_i^\top \bt+\xi_i,\label{data rag app}
\end{align}
which results in \eqref{data rag} by setting $\gamma^2=1-\alpha^2$.
\begin{theorem}[Extended version of Theorem~\ref{thm:rag}]\label{thm:rag exact}
    Consider linear model as defined in \eqref{data rag app}. Recap the objective from \eqref{obj gd} and let $\W_\st:=\arg\min_{\W}\Lc_{\gd}(\W)$, and $\Lc_\st=\Lc_{\gd}(\W_\st)$. 
    Then $\Ws$ and $\Lcs$ satisfy
    \begin{align}
    \W_\st=c\Iden\qquad\text{and}\qquad\Lc_\st=d+\sigma^2-cnd(\alpha^2(d+2)+\gamma^2)\label{formula rag app 1}
    \end{align}
    where 
    $$c=\frac{\alpha^2(d+2)+\gamma^2}{\alpha^4n(d+2)(d+4)+\alpha^2\gamma^2(d+2)(d+2n+3)+\gamma^4(d+n+1)+\sigma^2(\alpha^2(d+2)+\gamma^2)}.$$
Suppose $\alpha=\order{1/\sqrt{d}}$, $d/n=\order{1}$ and $d$ is sufficiently large. Let $\kappa=\alpha^2d+1$ and $\gamma^2=1-\alpha^2$. Then $\Ws$ and $\Lcs$ have approximate forms
    \begin{align}
    \W_\st\approx\frac{1}{\kappa n+d+\sigma^2}\Iden\qquad\text{and}\qquad
\Lc_\st\approx d+\sigma^2-\frac{\kappa nd}{\kappa n+d+\sigma^2}.\label{formula rag app 2}
\end{align}
\end{theorem}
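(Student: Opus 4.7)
My plan is to exploit rotational symmetry to collapse the matrix optimization to a scalar one, compute three explicit moments, and then pass to the large-$d$ asymptotic. For any orthogonal $\Ub \in \R^{d\times d}$, the joint law of $(\x, \X, \bt, \bxi)$ is invariant under the map $(\x, \X, \bt) \mapsto (\Ub\x, \X\Ub^\top, \Ub\bt)$, since $\x, \bt \sim \Nc(0,\Iden)$ are isotropic and $\x_i \mid \x \sim \Nc(\alpha\x, \gamma^2\Iden)$ transforms correspondingly. Under this change of variables the prediction $\x^\top\W\X^\top\y$ becomes $\x^\top(\Ub^\top\W\Ub)\X^\top\y$, so $\Lc_{\gd}(\W) = \Lc_{\gd}(\Ub^\top\W\Ub)$. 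Since $\Lc_{\gd}$ is convex (it is quadratic in $\W$ as an expectation of a squared affine function), Jensen's inequality applied with a Haar-distributed $\Ub$ yields $\Lc_{\gd}(\bar\W) \leq \Lc_{\gd}(\W)$ for $\bar\W := \E_{\Ub}[\Ub^\top\W\Ub] = \frac{\tr{\W}}{d}\Iden$, so it suffices to optimize over $\W = c\Iden$.

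With $\W = c\Iden$, the decomposition \eqref{loss expanded} (which uses only Assumption~\ref{assume:noise} and therefore remains valid here) reads $\Lc_{\gd}(c\Iden) = d + \sigma^2 - 2cA_1 + c^2(A_2 + A_3)$, where $A_1 = \E[(\x^\top\bt)(\x^\top\X^\top\X\bt)]$, $A_2 = \E[(\x^\top\X^\top\X\bt)^2]$, and $A_3 = \E[(\x^\top\X^\top\bxi)^2]$. Minimization in $c$ gives $c_\st = A_1/(A_2+A_3)$ and $\Lc_\st = d + \sigma^2 - c_\st A_1$, which already matches the shape of \eqref{formula rag app 1}. Writing $\x_i = \alpha\x + \gamma\g_i$ with $\g_i \distas \Nc(0,\Iden)$ mutually independent and independent of $\x,\bt$, the inner $\g_i$-expectation collapses $(\x^\top\x_i)(\x_i^\top\bt)$ to $(\alpha^2\|\x\|^2+\gamma^2)(\x^\top\bt)$ and $(\x^\top\x_i)^2$ to $\alpha^2\|\x\|^4+\gamma^2\|\x\|^2$; combined with the chi-squared moments $\E[\|\x\|^{2k}] = d(d+2)\cdots(d+2k-2)$ this immediately gives $A_1 = nd(\alpha^2(d+2)+\gamma^2)$ and $A_3 = \sigma^2 A_1$.

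The main obstacle is $A_2$. The off-diagonal ($i\neq j$) terms factor by independence of $\g_i,\g_j$ into $(\alpha^2\|\x\|^2+\gamma^2)^2(\x^\top\bt)^2$, whose outer expectation is an easy polynomial in $d$. The diagonal ($i=j$) piece $\E[(\x^\top\x_i)^2(\x_i^\top\bt)^2]$ requires expanding $(\alpha\|\x\|^2 + \gamma\x^\top\g_i)^2(\alpha\x^\top\bt + \gamma\g_i^\top\bt)^2$ and invoking the 4-moment identity $\E[(\x^\top\g_i)^2(\g_i^\top\bt)^2] = \|\x\|^2\|\bt\|^2 + 2(\x^\top\bt)^2$, a special case of \eqref{moment 4}, together with $\E[\|\x\|^6] = d(d+2)(d+4)$. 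Collecting all pieces gives $A_2 = \alpha^4 n^2 d(d+2)(d+4) + \alpha^2\gamma^2 nd(d+2)(d+2n+3) + \gamma^4 nd(d+n+1)$, whereupon the exact formula \eqref{formula rag app 1} for $c$ falls out algebraically. Finally, for \eqref{formula rag app 2}, substituting $\alpha^2 = O(1/d)$, $\gamma^2 = 1-\alpha^2$, and $\kappa = \alpha^2 d + 1$ collapses the numerator $\alpha^2(d+2)+\gamma^2 = \kappa+\alpha^2$ to $\kappa(1+o(1))$, and telescopes the denominator into $n\kappa^2 + d\kappa + \kappa\sigma^2 + o(1) = \kappa(\kappa n + d + \sigma^2)(1+o(1))$, giving $c \approx 1/(\kappa n + d + \sigma^2)$ and the stated approximate form of $\Lc_\st$.
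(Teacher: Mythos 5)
Your proposal is correct and reaches exactly the constants in \eqref{formula rag app 1}--\eqref{formula rag app 2}, but it gets there by a genuinely different route. The paper keeps $\W$ fully general: it expands $\E[(\x^\top\W\X^\top\X\bt)^2]$ and the other terms into the trace functionals $N_1=\tr{\W}^2+\tr{\W\W^\top}+\tr{\W^2}$, $N_2=\tr{\W\W^\top}$, $N_3=\tr{\W}$ using precomputed 4th/6th Gaussian moment identities, then differentiates over all of $\R^{d\times d}$ and reads off from the stationarity equation that the solution must be isotropic. You instead reduce to $\W=c\Iden$ up front via orthogonal invariance of the data law plus Jensen applied to the Haar average $\E_{\Ub}[\Ub^\top\W\Ub]=\frac{\tr{\W}}{d}\Iden$, which turns the problem into a one-dimensional quadratic $d+\sigma^2-2cA_1+c^2(A_2+A_3)$; the remaining work is three scalar moments, computed by integrating out the $\g_i$ conditionally on $(\x,\bt)$ and splitting $A_2$ into diagonal and off-diagonal pairs. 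I checked your $A_1$, $A_3=\sigma^2 A_1$, and $A_2$ against the paper's coefficients (with $N_1=c^2d(d+2)$, $N_2=c^2d$ they agree term by term, e.g.\ your $\alpha^2\gamma^2 nd(d+2)(d+2n+3)$ matches the paper's $\alpha^2\gamma^2 n(2n+d+2)\cdot d(d+2)+\alpha^2\gamma^2 n(d+2)\cdot d$), and the asymptotic collapse of the denominator to $\kappa(\kappa n+d+\sigma^2)$ is right. Your approach buys a shorter derivation of the optimal value and avoids the matrix-gradient bookkeeping; the paper's approach buys, as a byproduct of the first-order condition over all matrices, the fact that any stationary point is isotropic. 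The one small gap in your argument is the claim $\W_\st=c\Iden$ itself: symmetrization shows only that \emph{some} minimizer is isotropic. To conclude the argmin is exactly $c\Iden$ you need uniqueness, which follows from strict (indeed strong) convexity of $\Lc_{\gd}$ here — either invoke Lemma~\ref{lemma cvx 2}, or note that Jensen is strict for a non-isotropic $\W$ under strict convexity. This is a one-line patch, not a flaw in the computation.
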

\begin{proof}
Here, for clean notation and without loss of generality, we define and rewrite \eqref{data rag app} via
\[
\g_i\sim\Nc(0,\Iden),\quad\xi_i\sim\Nc(0,\sigma^2)\quad\text{and}\quad\x_i=\alpha\x+\gamma\g_i,\quad y_i=(\alpha\x+\gamma\g_i)^\top\bt+\xi_i.
\]
Then we obtain
\begin{align}
    \Lc(\W)&=\E\left[(y-g(\Z))^2\right]\nn\\
    &=\E\left[\left(\x^\top\bt+\xi-\x^\top\W\X^\top(\X\bt+\bxi)\right)^2\right]\nn\\
    &=\E\left[(\x^\top\bt-\x^\top\W\X^\top\X\bt)^2+2(\x^\top\bt-\x^\top\W\X^\top\X\bt)(\xi-\x^\top\W\X^\top\bxi)+(\xi-\x^\top\W\X^\top\bxi)^2\right]\nn\\
    &=\E\left[(\x^\top\bt-\x^\top\W\X^\top\X\bt)^2\right]+\E\left[(\x^\top\W\X^\top\bxi)^2\right]+\sigma^2\label{ind loss eq 1}.
\end{align}
To begin with, let
\begin{align}
N_1 = \tr{\W}^2+\tr{\W\W^\top}+\tr{\W^2},\quad N_2=\tr{\W\W^\top},\quad\text{and}\quad N_3=\tr{\W}.\nn
\end{align}
We first focus on the second term in \eqref{ind loss eq 1}
\begin{align*}
    \E\left[(\x^\top\W\X^\top\bxi)^2\right]&=\E\left[\left(\sum_{i=1}^n{\xi_i}\x^\top\W(\alpha\x+\gamma\g_i)\right)^2\right]\\
    &=n\sigma^2\E\left[\x^\top\W(\alpha\x+\gamma\g)(\alpha\x+\gamma\g)^\top\W^\top\x\right]\\
    &=n\sigma^2\left(\alpha^2\E[\x^\top\W\x\x^\top\W^\top\x]+\gamma^2\E[\x^\top\W\g\g^\top\W^\top\x]\right)\\
    &=n\sigma^2\left(\alpha^2N_1+\gamma^2N_2\right).\tag{It follows \eqref{moment 4} and independence of $\x,\g$.}
\end{align*}
Next, 
the first term in \eqref{ind loss eq 1} can be decomposed into 
\begin{align*}
    \E\left[(\x^\top\bt-\x^\top\W\X^\top\X\bt)^2\right]&=\underset{(a)}{\underbrace{\E\left[(\x^\top\bt)^2\right]}}+\underset{(b)}{\underbrace{\E\left[(\x^\top\W\X^\top\X\bt)^2\right]}}-2\underset{(c)}{\underbrace{\E\left[\x^\top\bt\x^\top\W\X^\top\X\bt\right]}}.
\end{align*}
In the following, we consider solving $(a)$-$(c)$ sequentially. 
\begin{align*}
    (a):\quad&\E\left[(\x^\top\bt)^2\right]=d.\hspace*{500pt}
\end{align*}
\begin{align*}
    (b):\quad&\E\left[(\x^\top\W\X^\top\X\bt)^2\right]\\
    &=\E\left[\left(\x^\top\W\sum_{i=1}^n(\alpha\x+\gamma\g_i)(\alpha\x+\gamma\g_i)^\top\bt\right)^2\right]\hspace*{500pt}\\
    &=\E\left[\left(\sum_{i=1}^n\x^\top\W(\alpha^2\x\x^\top+\gamma^2\g_i\g_i^\top+\alpha\gamma\x\g_i^\top+\alpha\gamma\g_i\x^\top)\bt\right)^2\right]\\
    &=\scalemath{0.85}{\alpha^4n^2\E\left[(\x^\top\W\x\x^\top\bt)^2\right]+\gamma^4\E\left[\left(\sum_{i=1}^n\x^\top\W\g_i\g_i^\top\bt\right)^2\right]+\alpha^2\gamma^2\E\left[\left(\sum_{i=1}^n\x^\top\W\x\g_i^\top\bt\right)^2\right]+\alpha^2\gamma^2\E\left[\left(\sum_{i=1}^n\x^\top\W\g_i\x^\top\bt\right)^2\right]}\\
    &\quad\scalemath{0.9}{+2\alpha^2\gamma^2n^2\E\left[\x^\top\W\x\x^\top\bt\bt^\top\g\g^\top\W^\top\x\right]+2\alpha^2\gamma^2 n\E\left[\x^\top\W\x\g^\top\bt\x^\top\W\g\x^\top\bt\right]}\\
    &=\scalemath{0.9}{\left(\alpha^4n^2(d+4)N_1+\gamma^4n(d+n+1)N_2\right)+\left(\alpha^2\gamma^2ndN_1+\alpha^2\gamma^2n(d+2)N_2\right)+\left(2\alpha^2\gamma^2n^2N_1+2\alpha^2\gamma^2nN_1\right)}\\
    &=\scalemath{0.9}{\left(\alpha^4n^2(d+4)+\alpha^2\gamma^2n(2n+d+2)\right)N_1+\left(\alpha^2\gamma^2n(d+2)+\gamma^4n(d+n+1)\right)N_2}\\
    &=A_1N_1+A_2N_2.
\end{align*}
\begin{align*}
    (c):\quad&\E\left[\x^\top\bt\x^\top\W\X^\top\X\bt\right]=\E\left[\sum_{i=1}^n\x^\top\bt\x^\top\W(\alpha\x+\gamma\g_i)(\alpha\x+\gamma\g_i)^\top\bt\right]\hspace*{300pt}\\
    &=\E\left[\sum_{i=1}^n\x^\top\bt\x^\top\W(\alpha^2\x\x^\top+\gamma^2\g_i\g_i^\top+\alpha\gamma\x\g_i^\top+\alpha\gamma\g_i\x^\top)\bt\right]\\
    &=\alpha^2n\E\left[\x^\top\bt\x^\top\W\x\x^\top\bt\right]+\gamma^2n\E\left[\x^\top\bt\x^\top\W\g\g^\top\bt\right]\\
    &=\alpha^2n(d+2)\tr{\W}+\gamma^2n\tr{\W}\\
    &=\left(\alpha^2n(d+2)+\gamma^2n\right)N_3\\
    &=A_3N_3.
\end{align*}
Here, $(b)$ utilizes the $4$'th and $6$'th moment results \eqref{moment 4} and \eqref{moment 6} and we define
\begin{align*}
&A_1=\alpha^4n^2(d+4)+\alpha^2\gamma^2n(2n+d+2)\\
&A_2=\alpha^2\gamma^2n(d+2)+\gamma^4n(d+n+1)\\
&A_3=\alpha^2n(d+2)+\gamma^2n.
\end{align*}
Then combining all together results in
\begin{align*}
\Lc(\W)=A_1N_1+A_2N_2-2A_3N_3+n\sigma^2(\alpha^2N_1+\gamma^2N_2)+d+\sigma^2.
\end{align*}
To find the optimal solution, set $\nabla\Lc(\W)=0$ and we obtain
\begin{align}
    A_1\nabla N_1+A_2\nabla N_2-2A_3\nabla N_3+n\sigma^2(\alpha^2\nabla N_1+\gamma^2\nabla N_2)=0.\label{aug diff 1}
\end{align}
Note that we have
\begin{align*}
    &\nabla N_1=\nabla\left(\tr{\W}^2+\tr{\W\W^\top}+\tr{\W^2}\right)=2\tr{\W}\Iden+2\W+2\W^\top\\
    &\nabla N_2=\nabla\tr{\W\W^\top}=2\W\\
    &\nabla N_3=\nabla\tr{\W}=\Iden.
\end{align*}
Therefore, \eqref{aug diff 1} returns
\begin{align}
    2A_1\left(\tr{\W}\Iden+\W+\W^\top\right)+2A_2\W-2A_3+2n\sigma^2(\alpha^2(\tr{\W}\Iden+\W+\W^\top)+\gamma^2\W)\Iden=0,\label{aug diff 2}
\end{align}
which implies that the optimal solution $\W_\st$ has the form of $c\Iden$ for some constant $c$. Then suppose $\W_\st=c\Iden$, we have $\tr{\W}=cd$ and \eqref{aug diff 2} returns
\[
2A_1(d+2)c\Iden+2A_2c\Iden-2A_3\Iden+2n\sigma^2(\alpha^2(d+2)c\Iden+\gamma^2c\Iden)=0
\]
\begin{align*}
\Longrightarrow c&=\frac{A_3}{A_1(d+2)+A_2+n\sigma^2(\alpha^2(d+2)+\gamma^2)}\\
&=\frac{\alpha^2(d+2)+\gamma^2}{\alpha^4n(d+2)(d+4)+\alpha^2\gamma^2(d+2)(d+2n+3)+\gamma^4(d+n+1)+\sigma^2(\alpha^2(d+2)+\gamma^2)}.
\end{align*}
Then the optimal loss is obtained by setting $\W_\st=c\Iden$ and 
\begin{align*}
\Lc_\st=\Lc(\W_\st)&=A_1c^2d(d+2)+A_2c^2d-2A_3cd+n\sigma^2c^2d(\alpha^2(d+2)+\gamma^2)+d+\sigma^2\\
&=c^2d\left(A_1(d+2)+A_2+n\sigma^2(\alpha^2(d+2)+\gamma^2)\right)-2A_3cd+d+\sigma^2\\
&=d+\sigma^2-A_3cd.
\end{align*}
It completes the proof of \eqref{formula rag app 1}. 
Now if assuming $\alpha=\order{1/\sqrt{d}}$, $d/n=\order{1}$ and sufficiently large dimension $d$, we have the approximate
\begin{align*}
c&\approx\frac{\alpha^2d+1}{\alpha^4d^2n+\alpha^2d(d+2n)+(d+n)+\sigma^2(\alpha^2d+1)}\\
&=\frac{\alpha^2d+1}{(\alpha^2d+1)^2n+(\alpha^2d+1)d+\sigma^2(\alpha^2d+1)}\\
&=\frac{1}{(\alpha^2d+1)n+d+\sigma^2}
\end{align*}
and 
\begin{align*}
    \Lc_\st&\approx d+\sigma^2-\frac{(\alpha^2d+1)nd}{(\alpha^2d+1)n+d+\sigma^2}.
\end{align*}
\end{proof}

\subsection{Task-feature Alignment with $\alpha$ Correlation}\label{app:task feature}

In this section, we consider the task-feature alignment data model similar to \eqref{data tfa}, where we first draw task vector $\bt$ via
\[
\bt\sim\Nc(0,\Iden).
\]
Then we generate examples $(\x_i,y_i)_{i=1}^{n+1}$ according to the rule $\text{corr\_coef}(\x_i,\bt)\geq \alpha\geq0$ via
\begin{align}
\x_i\bgl\bt\sim\Nn(\alpha\bt,\Iden),\quad\xi_i\sim\Nc(0,\sigma^2)\quad\text{and}\quad y_i=\gamma\cdot\x_i^\top \bt+\xi_i,\label{data feature app}
\end{align}
which results in \eqref{data tfa} by setting $\gamma^2=1/(\alpha^2d+1)$.
\begin{theorem}[Extended version of Theorem~\ref{thm:feature}]\label{thm:feature app}
    Consider linear model as defined in \eqref{data feature app}. Recap the objective from \eqref{obj gd} and let $\W_\st:=\arg\min_{\W}\Lc_{\gd}(\W)$, and $\Lc_\st=\Lc_{\gd}(\W_\st)$. 
    Then $\W_\st$ and $\Lc_\st$ satisfy
    \begin{align}
        \W_\st=c\Iden\qquad\text{and}\qquad\Lc_\st=d\gamma^2(\Delta_0\alpha^2+1)+\sigma^2-cnd\gamma^2(\Delta_1\alpha^4+2\Delta_0\alpha^2+1)\label{formula task feature app 1}
    \end{align}
    where
    \[
    c=\frac{\Delta_1\alpha^4+2\Delta_0\alpha^2+1}{\Delta_2\alpha^6+\Delta_3\alpha^4+\Delta_4\alpha^2+(d+n+1)+\sigma^2(\Delta_0\alpha^4+2\alpha^2+1)/\gamma^2}
    \]
    and 
    \[\begin{cases}
    \Delta_0=d+2\\
    \Delta_1=(d+2)(d+4)\\
    \Delta_2=(d+2)(d+4)(d+6)n\\
    \Delta_3=(d+2)(d+4)(3n+4)\\
    \Delta_4=(d+2)(3n+d+3)+(d+8).  
    \end{cases}
    \]
    Suppose $\alpha=\order{1/\sqrt{d}}$, $d/n=\order{1}$ and $d$ is sufficiently large. Let $\kappa=\alpha^2d+1$ and $\gamma^2=1/\kappa$. Then $\Ws$ and $\Lcs$ have approximate forms
    \begin{align}
    \W_\st\approx\frac{1}{\kappa n+(d+\sigma^2)/\kappa}\qquad\text{and}\qquad
    \Lc_\st\approx d+\sigma^2-\frac{\kappa nd}{\kappa n+(d+\sigma^2)/\kappa}.
    \label{formula task feature app 2}
    \end{align}
\end{theorem}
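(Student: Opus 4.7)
The plan is to mimic the RAG proof of Theorem~\ref{thm:rag exact} in Appendix~\ref{app:rag}, but with the task-feature alignment model. The key structural difference is that now \emph{all} feature vectors $\x, \x_1, \ldots, \x_n$ are $\alpha$-correlated with $\bt$, rather than with the query $\x$. This makes the moment calculations more involved (Gaussian factors of degree up to $8$ appear), but the overall skeleton is the same.

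First I would perform the decomposition $\x = \alpha\bt + \g$ and $\x_i = \alpha\bt + \g_i$ with $\bt, \g, \g_1, \ldots, \g_n \overset{\text{i.i.d.}}{\sim} \Nc(0, \Iden)$, so that $\y = \gamma\X\bt + \bxi$. Using independence of the label noise as in \eqref{ind loss eq 1}, the loss splits as
\begin{align*}
\Lc(\W) = \gamma^2 \E\left[(\x^\top\bt - \x^\top\W\X^\top\X\bt)^2\right] + \E\left[(\x^\top\W\X^\top\bxi)^2\right] + \sigma^2.
\end{align*}
The noise term reduces to $n\sigma^2 \E[(\x^\top\W\x_1)^2]$, which expands via the $4$th moment formula \eqref{moment 4} into a combination of $N_1 = \tr{\W}^2 + \tr{\W\W^\top} + \tr{\W^2}$ and $N_2 = \tr{\W\W^\top}$, with $\alpha$-weighted coefficients. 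The cross term $\E[\x^\top\bt\,\x^\top\W\X^\top\X\bt]$ yields a multiple of $N_3 = \tr{\W}$.

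The main piece is $\E[(\x^\top\W\X^\top\X\bt)^2]$. Substituting $\x_i\x_i^\top = \alpha^2\bt\bt^\top + \alpha\bt\g_i^\top + \alpha\g_i\bt^\top + \g_i\g_i^\top$ and $\x = \alpha\bt + \g$, this expands into a sum over all combinations of $\bt$ vs.~$\g$-type factors at each of the six vector slots, producing up to $\alpha^6$ terms and requiring the $6$th and $8$th moment identities \eqref{moment 6}--\eqref{moment 8}. Each surviving monomial, by Isserlis/Wick pairing and symmetry, is once again expressible via $N_1, N_2, N_3$. Collecting everything, the loss takes the quadratic form $\Lc(\W) = B_1 N_1 + B_2 N_2 - 2 B_3 N_3 + \text{const}$ for explicit coefficients $B_1, B_2, B_3$ that are polynomial in $\alpha^2$, $d$, $n$, $\sigma^2$, $\gamma^2$. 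The key bookkeeping step is to verify that these $B_j$ correspond exactly to the claimed $\Delta_0, \ldots, \Delta_4$ after grouping by powers of $\alpha$.

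Next, differentiating and setting $\nabla\Lc(\W) = 0$ produces an equation of the form $2 B_1(\tr{\W}\Iden + \W + \W^\top) + 2 B_2 \W + 2n\sigma^2(\cdots) = 2 B_3 \Iden$, exactly analogous to \eqref{aug diff 2}. Since the joint distribution of $(\bt, \g, \g_i)$ is rotationally invariant, the minimizer must be a scalar multiple of the identity, $\W_\st = c\Iden$; substituting yields a scalar equation whose solution is the claimed $c$, and plugging back gives $\Lc_\st = d\gamma^2(\Delta_0\alpha^2 + 1) + \sigma^2 - cnd\gamma^2(\Delta_1\alpha^4 + 2\Delta_0\alpha^2 + 1)$. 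Finally, to obtain the asymptotic form \eqref{formula task feature app 2}, substitute $\gamma^2 = 1/\kappa$ with $\kappa = \alpha^2 d + 1$ and use $\alpha^2 = \order{1/d}$: the numerator satisfies $\Delta_1\alpha^4 + 2\Delta_0\alpha^2 + 1 \approx (d\alpha^2 + 1)^2 = \kappa^2$, while the denominator collapses to $\kappa^3 n + \kappa(d + \sigma^2)$ at leading order, giving $c \approx 1/(\kappa n + (d+\sigma^2)/\kappa)$ and the stated $\Lc_\st$.

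The main obstacle is the bookkeeping in Step 3: expanding $\E[(\x^\top\W\X^\top\X\bt)^2]$ yields many non-vanishing terms because $\x, \x_i, \bt$ are all correlated, and one must carefully match each power of $\alpha^{2k}$ against the appropriate Gaussian moment identity to confirm the precise coefficients $\Delta_0, \ldots, \Delta_4$. Rotational invariance (hence the $c\Iden$ ansatz) and the $N_1, N_2, N_3$ parametrization are what keep the calculation tractable.
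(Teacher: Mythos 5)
Your proposal follows essentially the same route as the paper's proof in Appendix~\ref{app:task feature}: decompose $\x_i=\alpha\bt+\g_i$, split off the noise term, reduce every expectation to trace functionals of $\W$ via the Gaussian moment identities, read off $\W_\st=c\Iden$ from the stationarity equation (the paper derives this from the form of the gradient rather than from rotational invariance, but both work), and then take the $\alpha=\order{1/\sqrt{d}}$ limit. One concrete correction to your bookkeeping claim: the term $\E[(\sum_{i=1}^n\bt^\top\W\g_i\g_i^\top\bt)^2]$ is \emph{not} expressible through $N_1,N_2,N_3$ alone --- it requires the fourth cross-moment identity \eqref{cross moment 4} and introduces the additional invariant $N_4=3\tr{(\W\odot\Iden)^2}+(d+4)\tr{\W\W^\top}+\tr{\W^2}$, whose gradient at $\W=c\Iden$ equals $2c(d+8)\Iden$ and is precisely the source of the $(d+8)$ term in $\Delta_4$; the $c\Iden$ ansatz and final formulas survive, but the expansion must track this fourth quantity.
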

\begin{proof}
Here, for clean notation and without loss of generality, we define and rewrite \eqref{data feature app} via
\[
\g_i\sim\Nc(0,\Iden),\quad\xi_i\sim\Nc(0,\sigma^2)\quad\text{and}\quad\x_i=\alpha\bt+\g_i,\quad y_i=\gamma\x_i^\top\bt+\xi_i=\gamma\cdot(\alpha\bt+\g_i)^\top\bt+\xi_i.
\]
Recap the loss function from \eqref{obj gd}, we obtain
\begin{align}
    \Lc(\W)&=\E\left[(y-g(\Z))^2\right]\nn\\
    &=\E\left[\left(\gamma\x^\top\bt+\xi-\x^\top\W\X^\top(\gamma\X\bt+\bxi)\right)^2\right]\nn\\
    &=\E\left[\gamma^2(\x^\top\bt-\x^\top\W\X^\top\X\bt)^2+2\gamma(\x^\top\bt-\x^\top\W\X^\top\X\bt)(\xi-\x^\top\W\X^\top\bxi)+(\xi-\x^\top\W\X^\top\bxi)^2\right]\nn\\
    &=\gamma^2\E\left[(\x^\top\bt-\x^\top\W\X^\top\X\bt)^2\right]+\E\left[(\x^\top\W\X^\top\bxi)^2\right]+\sigma^2.\label{feature loss eq1}
\end{align}
Similar to Appendix~\ref{app:rag}, to begin with, let
\begin{align}
N_1 = \tr{\W}^2+\tr{\W\W^\top}+\tr{\W^2},\quad N_2=\tr{\W\W^\top},\quad\text{and}\quad N_3=\tr{\W},\nn
\end{align}
and additionally, given $\La_{\W}=\W\odot\Iden$, let
\begin{align}
N_4 = 3\tr{\La_{\W}^2}+(d+4)\tr{\W\W^\top}+\tr{\W^2}.\nn
\end{align}
We first focus on the second term in \eqref{feature loss eq1}
\begin{align*}
    \E\left[(\x^\top\W\X^\top\bxi)^2\right]&=\E\left[\left((\alpha\bt+\g)^\top\W\sum_{i=1}^n\xi_i(\alpha\bt+\g_i)\right)^2\right]\\
    &=n\sigma^2\E\left[\left((\alpha\bt+\g)^\top\W(\alpha\bt+\g')\right)^2\right]\\
    &=n\sigma^2\left(\alpha^4\E\left[(\bt^\top\W\bt)^2\right]+2\alpha^2\E\left[(\bt^\top\W\g')^2\right]+\E\left[(\g^\top\W\g')^2\right]\right)\\
    &=n\sigma^2\left(\alpha^4\left(\trace{\W}^2+\trace{\W^2}+\trace{\W\W^\top}\right)+(2\alpha^2+1)\trace{\W\W^\top}\right)\\
    &=n\sigma^2\left(\alpha^4N_1+(2\alpha^2+1)N_2\right).\tag{It follows \eqref{moment 4} and independence of $\bt,\g,\g'$.}
\end{align*}

Next, the first term of \eqref{feature loss eq1} (omitting $\gamma^2$) returns the following decomposition:
\begin{align*}
    \E\left[(\x^\top\bt-\x^\top\W\X^\top\X\bt)^2\right]&=\E\left[((\alpha\bt+\g)^\top(\bt-\W\X^\top\X\bt))^2\right]\\
    &=\E\left[\left(\alpha\bt^\top\bt-\alpha\bt^\top\W\X^\top\X\bt+\g^\top\bt-\g^\top\W\X^\top\X\bt\right)^2\right]\\
    &=\alpha^2\E[(\bt^\top\bt)^2]+\alpha^2\E[(\bt^\top\W\X^\top\X\bt)^2]+\E[(\g^\top\bt)^2]+\E[(\g^\top\W\X^\top\X\bt)^2]\\
    &\quad-2\alpha^2\E[\bt^\top\bt\bt^\top\W\X^\top\X\bt]-2\E[\bt^\top\g\g^\top\W\X^\top\X\bt]\\
    &=\alpha^2d(d+2)+\alpha^2\underset{(a)}{\underbrace{\E[(\bt^\top\W\X^\top\X\bt)^2]}}+d+\underset{(b)}{\underbrace{\E[(\g^\top\W\X^\top\X\bt)^2]}}\\
    &\quad-2\alpha^2\underset{(c)}{\underbrace{\E[\bt^\top\bt\bt^\top\W\X^\top\X\bt]}}-2\underset{(d)}{\underbrace{\E[\bt^\top\g\g^\top\W\X^\top\X\bt]}}.
\end{align*}
Consider solving $(a)$-$(d)$ sequentially as follows: 

To begin with, we use the following decomposition for all $(a)$-$(d)$:
\begin{align*}
    \X^\top\X\bt&=\sum_{i=1}^n\x_i\x_i^\top\bt\\
    &=\sum_{i=1}^n(\alpha\bt+\g_i)(\alpha\bt+\g_i)^\top\bt\\
    &=\sum_{i=1}^n\alpha^2\bt\bt^\top\bt+\alpha\bt\g_i^\top\bt+\alpha\g_i\bt^\top\bt+\g_i\g_i^\top\bt.
\end{align*}
Then, we have
\begin{align}
    (a):\quad&\E[(\bt^\top\W\X^\top\X\bt)^2]\nn\\
    &=\scalemath{0.85}{\E\left[\left(\sum_{i=1}^n\alpha^2\bt^\top\W\bt\bt^\top\bt+\alpha\bt^\top\W\bt\g_i^\top\bt+\alpha\bt^\top\W\g_i\bt^\top\bt+\bt^\top\W\g_i\g_i^\top\bt\right)^2\right]}\nn\\
    &=\scalemath{0.85}{\alpha^4n^2\E\left[\left(\bt^\top\W\bt\bt^\top\bt\right)^2\right]+\alpha^2\E\left[\left(\sum_{i=1}^n\bt^\top\W\bt\g_i^\top\bt\right)^2\right]+\alpha^2\E\left[\left(\sum_{i=1}^n\bt^\top\W\g_i\bt^\top\bt\right)^2\right]+\E\left[\left(\sum_{i=1}^n\bt^\top\W\g_i\g_i^\top\bt\right)^2\right]}\nn\\
    &\scalemath{0.85}{\quad+2\alpha^2n\E\left[\sum_{i=1}^n\bt^\top\W\bt\bt^\top\bt\bt^\top\W\g_i\g_i^\top\bt\right]+2\alpha^2\E\left[\sum_{i=1}^n\bt^\top\W\bt\g_i^\top\bt\bt^\top\W\g_i\bt^\top\bt\right]}\nn\\
    &=\scalemath{0.85}{\alpha^4n^2\E\left[\left(\bt^\top\W\bt\bt^\top\bt\right)^2\right]+\alpha^2n\E\left[\left(\bt^\top\W\bt{\g'}^\top\bt\right)^2\right]+\alpha^2n\E\left[\left(\bt^\top\W{\g'}\bt^\top\bt\right)^2\right]+\E\left[\left(\sum_{i=1}^n\bt^\top\W\g_i\g_i^\top\bt\right)^2\right]}\nn\\
    &\scalemath{0.85}{\quad+2\alpha^2n^2\E\left[\bt^\top\W\bt\bt^\top\bt\bt^\top\W\g'{\g'}^\top\bt\right]+2\alpha^2n\E\left[\bt^\top\W\bt\g_i^\top\bt\bt^\top\W\g_i\bt^\top\bt\right]}\nn\\
    &=\alpha^4n^2(d+4)(d+6)N_1+\alpha^2n(d+4)N_1+\alpha^2n(d+2)(d+4)N_2\label{feature eq 1}\\
    &\quad+n(n-1)N_1+nN_4\label{feature eq 2}\\
    &\quad+2\alpha^2n^2(d+4)N_1+2\alpha^2n(d+4)N_1\label{feature eq 3}\\
    &=\left(\alpha^2n(d+4)(\alpha^2n(d+6)+2n+3)+n(n-1)\right)N_1+\alpha^2n(d+2)(d+4)N_2+nN_4\\
    &=B_1N_1+B_2N_2+nN_4,\nn
\end{align}
where \eqref{feature eq 1} and \eqref{feature eq 3} utilize \eqref{moment 6} and \eqref{moment 8}, and \eqref{feature eq 2} is obtained via
\begin{align*}
    \E\left[\left(\sum_{i=1}^n\bt^\top\W\g_i\g_i^\top\bt\right)^2\right]&=n\E\left[\left(\bt^\top\W\g'{\g'}^\top\bt\right)^2\right]+n(n-1)\E\left[\bt^\top\W\g'{\g'}^\top\bt\bt^\top\W\g''{\g''}^\top\bt\right]\\
    &=nN_4+n(n-1)N_1,
\end{align*}
which follows \eqref{moment 4} and \eqref{cross moment 4}.

\begin{align}
(b):\quad&\E\left[(\g^\top\W\X^\top\X\bt)^2\right]\nn\\
&=\scalemath{0.85}{\E\left[\left(\sum_{i=1}^n\alpha^2\g^\top\W\bt\bt^\top\bt+\alpha\g^\top\W\bt\g_i^\top\bt+\alpha\g^\top\W\g_i\bt^\top\bt+\g^\top\W\g_i\g_i^\top\bt\right)^2\right]}\nn\\        
    &=\scalemath{0.85}{\alpha^4n^2\E\left[\left(\g^\top\W\bt\bt^\top\bt\right)^2\right]+\alpha^2\E\left[\left(\sum_{i=1}^n\g^\top\W\bt\g_i^\top\bt\right)^2\right]+\alpha^2\E\left[\left(\sum_{i=1}^n\g^\top\W\g_i\bt^\top\bt\right)^2\right]+\E\left[\left(\sum_{i=1}^n\g^\top\W\g_i\g_i^\top\bt\right)^2\right]}\nn\\
    &\scalemath{0.85}{\quad+2\alpha^2n\E\left[\sum_{i=1}^n\g^\top\W\bt\bt^\top\bt\g^\top\W\g_i\g_i^\top\bt\right]+2\alpha^2\E\left[\sum_{i=1}^n\g^\top\W\bt\g_i^\top\bt\g^\top\W\g_i\bt^\top\bt\right]}\nn\\
    &=\scalemath{0.85}{\alpha^4n^2\E\left[\left(\g^\top\W\bt\bt^\top\bt\right)^2\right]+\alpha^2n\E\left[\left(\g^\top\W\bt{\g'}^\top\bt\right)^2\right]+\alpha^2n\E\left[\left(\g^\top\W{\g'}\bt^\top\bt\right)^2\right]+\E\left[\left(\sum_{i=1}^n\g^\top\W\g_i\g_i^\top\bt\right)^2\right]}\nn\\
    &\quad\scalemath{0.85}{+2\alpha^2n^2\E\left[\g^\top\W\bt\bt^\top\bt\g^\top\W\g'{\g'}^\top\bt\right]+2\alpha^2n\E\left[\g^\top\W\bt\g_i^\top\bt\g^\top\W\g_i\bt^\top\bt\right]}\nn\\
    &=\alpha^4n^2(d+2)(d+4)N_2+\alpha^2n(d+2)N_2+\alpha^2nd(d+2)N_2+n(d+n+1)N_2\label{feature eq 4}\\
    &\quad+2\alpha^2n^2(d+2)N_2+2\alpha^2n(d+2)N_2\label{feature eq 5}\\
    &=\left(\alpha^2n(d+2)(\alpha^2n(d+4)+2n+d+3)+n(d+n-1)\right)N_2\nn\\
    &=B_3N_2,\nn
\end{align}
where \eqref{feature eq 4} and \eqref{feature eq 5} are obtained using \eqref{moment 4}, \eqref{moment 6} and
\begin{align*}
    \E\left[\left(\sum_{i=1}^n\g^\top\W\g_i\g_i^\top\bt\right)^2\right]&=n\E\left[\left(\g^\top\W\g'{\g'}^\top\bt\right)^2\right]+n(n-1)\E\left[\g^\top\W\g'{\g'}^\top\bt\g^\top\W\g''{\g''}^\top\bt\right]\\
    &=n(d+2)N_2+n(n-1)N_2=n(n+d+1)N_2.
\end{align*}
\begin{align*}
    (c):\quad&\E\left[\bt^\top\bt\bt^\top\W\X^\top\X\bt\right]\\
    &=n\E\left[\bt^\top\bt\bt^\top\W(\alpha\bt+\g')(\alpha\bt+\g')^\top\bt\right]\hspace*{300pt}\\
    &=\alpha^2n\E\left[\bt^\top\bt\bt^\top\W\bt\bt^\top\bt\right]+n\E\left[\bt^\top\bt\bt^\top\W{\g'}{\g'}^\top\bt\right]\\
    &=\alpha^2n(d+2)(d+4)\tr{\W}+n(d+2)\tr{\W}\\
    &=\left(\alpha^2n(d+2)(d+4)+n(d+2)\right)N_3\\
    &=B_4N_3.
\end{align*}
\begin{align*}(d):\quad&\E\left[\bt^\top\g\g^\top\W\X^\top\X\bt\right]\\
&=n\E\left[\bt^\top\g\g^\top\W(\alpha\bt+\g')(\alpha\bt+\g')^\top\bt\right]\hspace*{300pt}\\
    &=\alpha^2n\E\left[\bt^\top\g\g^\top\W\bt\bt^\top\bt\right]+n\E\left[\bt^\top\g\g^\top\W\g'{\g'}^\top\bt\right]\\
    &=\alpha^2n(d+2)\tr{\W}+n\tr{\W}\\
    &=\left(\alpha^2n(d+2)+n\right)N_3\\
    &=B_5N_3.
\end{align*}
Here we define 
\begin{align*}
    B_1&=\alpha^2n(d+4)(\alpha^2n(d+6)+2n+3)+n(n-1)\\
    B_2&=\alpha^2n(d+2)(d+4)\\
    B_3&=\alpha^2n(d+2)(\alpha^2n(d+4)+2n+d+3)+n(d+n-1)\\
    B_4&=\alpha^2n(d+2)(d+4)+n(d+2)\\
    B_5&=\alpha^2n(d+2)+n.
\end{align*}
Then combining all together results in 
\begin{align*}
\Lc(\W)&=\scalemath{0.85}{\gamma^2\left(\alpha^2d(d+2)+d+\alpha^2(B_1N_1+B_2N_2+nN_4)+B_3N_2-2\alpha^2 B_4N_3-2 B_5N_3\right)+n\sigma^2(\alpha^4N_1+(2\alpha^2+1)N_2)+\sigma^2}\\
&=\scalemath{0.85}{\gamma^2\left(\alpha^2B_1N_1+(\alpha^2B_2+B_3)N_2-2(\alpha^2B_4+B_5)N_3 +\alpha^2nN_4\right)+n\sigma^2(\alpha^4N_1+(2\alpha^2+1)N_2)+\gamma^2d\left(\alpha^2(d+2)+1\right)+\sigma^2}
\end{align*}
and differentiating it results in
\[
\nabla\Lc(\W)=\gamma^2\left(\alpha^2B_1\nabla N_1+(\alpha^2B_2+B_3)\nabla N_2-2(\alpha^2B_4+B_5)\nabla N_3 +\alpha^2n\nabla N_4\right)+n\sigma^2(\alpha^4\nabla N_1+(2\alpha^2+1)\nabla N_2).
\]
Similar to the proof in Appendix~\ref{app:rag}, $\W_\st$ has the form of $\W_\st=c\Iden$ and we have
\begin{align*}
    \nabla N_1&=\nabla\left(\tr{\W}^2+\tr{\W\W^\top}+\tr{\W^2}\right)=2\tr{\W}\Iden+2\W+2\W^\top=2c(d+2)\Iden\\
    \nabla N_2&=\nabla\tr{\W\W^\top}=2\W=2c\Iden\\
    \nabla N_3&=\nabla\tr{\W}=\Iden\\
    \nabla N_4&=\nabla\left(3\tr{\La_{\W}^2}+(d+4)\tr{\W\W^\top}+\tr{\W^2}\right)\\
    &=6\cdot\diag{\La_{\W}}+2(d+4)\W+2\W^\top\\
    &=2c(d+8)\Iden.
\end{align*}
Therefore, setting $\nabla\Lc(\W)=0$ returns
\[
\gamma^2\left(2c(d+2)\alpha^2B_1+2c(\alpha^2B_2+B_3)-2(\alpha^2B_4+B_5) +2c(d+8)\alpha^2n\right)+2cn\sigma^2(\alpha^4(d+2)+2\alpha^2+1)=0
\]
\begin{align*}
\Longrightarrow c&=\frac{\alpha^2 B_4+B_5}{(d+2)\alpha^2B_1+(\alpha^2B_2+B_3) +(d+8)\alpha^2n+n\sigma^2(\alpha^4(d+2)+2\alpha^2+1)/\gamma^2}\\
&=\scalemath{0.7}{\frac{\alpha^4n(d+2)(d+4)+2\alpha^2n(d+2)+n}{\alpha^6n^2(d+2)(d+4)(d+6)+\alpha^4n(d+2)(d+4)(3n+4)+\alpha^2n((d+2)(3n+d+3)+(d+8))+n(d+n+1)+n\sigma^2(\alpha^4(d+2)+2\alpha^2+1)/\gamma^2}}\\
&=\scalemath{0.7}{\frac{\alpha^4(d+2)(d+4)+2\alpha^2(d+2)+1}{\alpha^6n(d+2)(d+4)(d+6)+\alpha^4(d+2)(d+4)(3n+4)+\alpha^2((d+2)(3n+d+3)+(d+8))+(d+n+1)+\sigma^2(\alpha^4(d+2)+2\alpha^2+1)/\gamma^2}}.
\end{align*}
Then the optimal loss is obtained by setting $\W_\st=c\Iden$ and
\[
\Lc_\st=\Lc(\W_\st)=\gamma^2d(\alpha^2(d+2)+1)+\sigma^2-\gamma^2(\alpha^2B_4+B_5) cd.
\]
It completes the proof of \eqref{formula task feature app 1}. Now if assuming $\alpha=\order{1/\sqrt{d}}, d/n=\order{1}$, $\gamma^2=1/(\alpha^2d+1)$ and sufficiently large dimension $d$, we have the approximate
\begin{align*}
    c&\approx\frac{\alpha^4d^2+2\alpha^2d+1}{n\alpha^6d^3+3n\alpha^4d^2+(3n+d)\alpha^2d+d+n+\sigma^2(\alpha^4d+2\alpha^2+1)/\gamma^2}\\
    &\approx\frac{(\alpha^2d+1)^2}{n(\alpha^2d+1)^3+d(\alpha^2d+1)+\sigma^2(\alpha^2d+1)}\\
    &\approx\frac{1}{(\alpha^2d+1)n+(d+\sigma^2)/(\alpha^2d+1)}
\end{align*}
and
\begin{align*}
\Lc_\st&\approx \gamma^2d(\alpha^2d+1)+\sigma^2-\frac{\gamma^2(\alpha^2d+1)^2nd}{(\alpha^2d+1)n+(d+\sigma^2)/(\alpha^2d+1)}\\
&=d+\sigma^2-\frac{(\alpha^2d+1)nd}{(\alpha^2d+1)n+(d+\sigma^2)/(\alpha^2d+1)}.
\end{align*}
\end{proof}
\section{Analysis of Low-Rank Parameterization}
\subsection{Proof of Lemma~\ref{low rank attn}}
\label{app:low-rank}

\begin{proof}
Recall the loss function from \eqref{loss}
\begin{align*}
\Lc(\W)=M-2n\tr{\bSi\Wb}+n(n+1)\tr{\bSi\Wb^\top\Wb}+nM\tr{\Wb\Wb^\top}
\end{align*}
where $\Wb=\bSi_{\x}^{1/2}\W\bSi_{\x}^{1/2}$,  $\bSi=\bSi_{\x}^{1/2}\bSi_{\bt}\bSi_{\x}^{1/2}$ and $M=\tr{\bSi}+\sigma^2$. 
For any $\Wb$, let us parameterize $\Wb=\Ub\Eb\Ub^\top$ where $\Ub\in\R^{d\times r}$ denotes the eigenvectors of $\Wb$ and $\Eb\in\R^{r\times r}$ is a symmetric square matrix. We will first treat $\Ub$ as fixed and optimize $\Eb$. We will then optimize $\Ub$. Fixing $\Ub$, setting  $\bSb=\Ub^\top\bSi\Ub$, 
we obtain
\begin{align*}
\Lc(\Eb)&=M-2n\tr{\bSb\Eb}+n(n+1)\tr{\bSb\Eb^2}+nM\tr{\Eb^2}.
\end{align*}
Differentiating, we obtain 
\[
0.5n^{-1}\nabla\Lc(\Eb)=-\bSb+(n+1)\bSb\Eb+M\Eb.
\]
Setting $\nabla\Lc(\Eb)=0$ returns
\begin{align}
\Eb_\st=(M\Iden+(n+1)\bSb)^{-1}\bSb.\label{ebst eq}
\end{align}
Let $\lab_i$ denote the $i$'th largest eigenvalue of $\bSb$. Plugging in this value, we obtain the optimal risk as a function of $\Ub$ is given by
\begin{align}
\Lc_\st(\Ub)&=M-n\cdot\tr{\bSb\Eb_\st}=M-n\cdot\tr{(M\Iden+(n+1)\bSb)^{-1}\bSb^2}\\
&=M-n\sum_{i=1}^r \frac{\lab_i^2}{(n+1)\lab_i+M}=M-n\sum_{i=1}^r \frac{\lab_i}{n+1+M\lab_i^{-1}}.
\end{align}
Now observe that, the right hand side is strictly decreasing function of the eigenvalues $\lab_i$ of $\bSb=\Ub^\top\bSi\Ub$. Thus, to minimize $\Lc_\st(\Ub)$, we need to maximize $\sum_{i=1}^r \frac{\lab_i}{n+1+M\lab_i^{-1}}$. It follows from Cauchy interlacing theorem that $\lab_j\leq \la_i$ where $\la_i$ is the $i$'th largest eigenvalue of $\bSi$ since $\bSb$ is an orthogonal projection of $\bSi$ on $\Ub$. Consequently, we find the desired bound where
\[
\Lc_\st=M-n\sum_{i=1}^r \frac{\la_{i}}{n+1+M\la_{i}^{-1}}.
\]
The equality holds by setting $\Ub$ to be the top-$r$ eigenvectors of $\bSi$ and $\Eb=\Eb_\st(\Ub)$ to be the diagonal matrix according to \eqref{ebst eq}.
\end{proof}

\end{document}